\newcommand{\mA}{\mathcal{A}}
\newcommand{\mB}{\mathcal{B}}
\newcommand{\mE}{\mathcal{E}}
\newcommand{\mG}{\mathcal{G}}
\newcommand{\mS}{\mathcal{S}}
\newcommand{\mV}{\mathcal{V}}
\newcommand{\mX}{\mathcal{X}}
\newcommand{\thmref}[1]{Theorem~\ref{#1}}
\newcommand{\tilO}{\tilde{O}}
\newcommand{\Prob}{\mathbb{P}}
\newcommand{\Ex}{\mathbb{E}}
\newcommand{\Regret}{\mathrm{Regret}}
\newcommand{\ONSW}{\mathrm{ONSW}}
\newcommand{\NSW}{\mathrm{NSW}}
\newcommand{\Real}{\mathbb{R}}
\newcommand{\vep}{\hat{v}}
\newcommand{\vucb}{\hat{v}^{\mathrm{UCB}}} 
\newcommand{\vucbclip}{\hat{v}^{\mathrm{RUCB}}}
\newcommand{\DA}{\mathrm{DA}} 
\newcommand{\EtC}{\mathrm{EtC}} 
\newcommand{\bc}{\color{blue}} 
\newcommand{\rc}{\color{red}}
\newcommand{\argmax}{\mathop{\rm arg~max}\limits}
\newcommand{\argmin}{\mathop{\rm arg~min}\limits}
\newcommand{\Ind}{\mathbb{I}} 
\newcommand{\eps}{\epsilon}
\newcommand{\CDA}{C^{\DA}}
\newcommand{\nn}{\nonumber\\}
\newcommand{\proj}{\mathop{\rm Proj}\limits}
\newcommand{\true}{\mathrm{true}}
\newcommand{\UCB}{\mathrm{UCB}}
\newcommand{\vrucb}{\hat{v}^{\mathrm{RUCB}}} 
\newcommand{\RUCB}{\mathrm{RUCB}}
\newcommand{\Termtwo}{X}
\newtheorem{example}{Example}
\newtheorem{theorem}{Theorem}
\newtheorem{lemma}{Lemma}
\begin{document}
\begin{CJK}{UTF8}{min}
% If your paper is accepted and the title of your paper is very long,
% the style will print as headings an error message. Use the following
% command to supply a shorter title of your paper so that it can be
% used as headings.
%
%\runningtitle{I use this title instead because the last one was very long}

% If your paper is accepted and the number of authors is large, the
% style will print as headings an error message. Use the following
% command to supply a shorter version of the authors names so that
% they can be used as headings (for example, use only the surnames)
%
%\runningauthor{Surname 1, Surname 2, Surname 3, ...., Surname n}

\twocolumn[

\aistatstitle{Learning Fair Division from Bandit Feedback}

\aistatsauthor{ Hakuei Yamada \And Junpei Komiyama \And Kenshi Abe \And Atsushi Iwasaki }

\aistatsaddress{ University of\\ Electro-Communications \And New York University \And CyberAgent, Inc. \And University of\\ Electro-Communications } ]

\begin{abstract}
This work addresses learning online fair division under uncertainty, where a central planner sequentially allocates items without precise knowledge of agents' values or utilities. Departing from conventional online algorithm, the planner here relies on noisy, estimated values obtained after allocating items. We introduce wrapper algorithms utilizing \textit{dual averaging}, enabling gradual learning of both the type distribution of arriving items and agents' values through bandit feedback. This approach enables the algorithms to asymptotically achieve optimal Nash social welfare in linear Fisher markets with agents having additive utilities. We establish regret bounds in Nash social welfare and empirically validate the superior performance of our proposed algorithms across synthetic and empirical datasets.
\end{abstract}

\section{Introduction}

Ensuring an equitable distribution of limited resources among individuals, or agents, is a crucial task, as it involves reconciling competing interests among the agents. In today's society, there is a growing demand for efficient and fair resource allocation. 
Given this context, the fair division problem has attracted substantial interest from the researchers. This problem involves the process of assigning items to individuals with idiosyncratic preferences~\citep{brams:1996,moulin:2003}.
Many studies in fair division are treated as a static problem, assuming that all information about the items is known beforehand.
However, in reality, it is uncommon for all items to arrive beforehand. 
\textit{Online fair division} address uncertainty about what type of items can arrive dynamically, and the term "online" in this context means that items must be irrevocably assigned to an agent~\citep{aleksandrov:aaai:2020}. 

This paper introduces another layer of uncertainty regarding the values or utilities that agents place on the items they receive. Unlike traditional online algorithms, a central planner does not have exact knowledge of the agents’ utilities. Instead, the planner relies on noisy, estimated utilities obtained after the items have been allocated, as often assumed in \textit{multi-armed bandit} problems. 

We propose wrapper algorithms employing \textit{dual averaging} (DA)~\citep{xiao_dual,gao:pace:2021}, enabling the gradual learning of both the distribution of arriving item types and the values attributed to these items by the agents through bandit feedback. 
\cite{gao:pace:2021} applied DA to the Eisenbeg-Gale (EG) convex program~\citep{eisenberg:ams:1959,GEB:2010}.
If items are divisible, the solution aligns with \textit{competitive equilibrium with equal income} (CEEI)~\citep{budish:jpe:2011}.
CEEI ensures both envy-freeness and Pareto optimality and coincides with an assignment that maximizes \textit{Nash social welfare} (NSW)~\citep{nash:ecma:1950}. 
\cite{gao:pace:2021} derived a fair assignment, 
specifically an asymptotically envy-free and Pareto optimal online assignment, assuming types of arriving items are drawn independently and identically distributed (i.i.d.)\footnote{The i.i.d. assumption has recently been relaxed by \cite{liao2022nonstationary} to some extent.} and the values for each agent must be precisely revealed before the assignment in a round. However, as demonstrated in the following examples, the precise observation of values is not always possible, necessitating the handling of noisy feedback regarding the values for the agent who has received the item. 

\begin{example}{\rm (Crowdsourcing)}
Consider the problem of allocating tasks (items) to workers (agents), where each worker has their own area of expertise and the quality of the task may vary. In such a scenario, the importance of learning becomes evident, as the expertise of each worker is typically unknown before the task is assigned. If the aim is to maximize the quality of the task, there is a risk that only a small group of workers will monopolize all the tasks. However, by maximizing the NSW, task assignments can be distributed more evenly among the workers. %, promoting a fairer allocation of tasks. 
\end{example}
\begin{example}{\rm (Food recommendation on online food delivery)}
Consider recommending a restaurant (agent) to a user (item). To do so, we need to estimate the user's taste preferences based on their past orders, such as whether they have a preference for specific types of cuisines. 
If the goal is solely to maximize social welfare, there is a risk that a few popular restaurants may receive a large portion of the orders, leading to delays or unfulfilled orders.
Instead, distributing the workload equitably among restaurants enhances the long-term efficiency~\citep{yuyan2022kdd}. 
\end{example}
\begin{example}{\rm (Humanitarian aid)}
Consider the problem of providing a limited number of items to disaster-stricken areas, where the supply arrives in real-time, and the prompt distribution of the items is crucial. The value of the items for each district can only be determined through actual feedback, which implies that the situation is an online problem. In this context, the value of the items needs to be learned and updated as the supplies are being distributed.
\end{example}

To address these challenges, we introduce two novel algorithms: DA-Explore-then-Commit (DA-EtC) and DA-Upper Confidence Bound (DA-UCB), which combine dual averaging with multi-armed bandit strategies. These algorithms gradually learn and explore the values of items through bandit feedback and the resulting allocation approaches the solution of the EG program as the number of items grows. Our algorithms handle situations where agents do not precisely observe the values of allocated items by incorporating estimated values into the DA process.
First, analyzing DA-EtC are intricate due to the need to create \textit{virtual} values fed into DA, ensuring compatibility with the i.i.d. property assumed by~\cite{xiao_dual}. We demonstrate that DA-EtC achieves a regret of $\tilde{O}(T^{2/3})$\footnote{We use $\tilde{O}$ to denote a Landau notation that disregards a polylogarithmic factor.} by meticulously dissecting the regret into the estimation error of values and the online learning error of DA, relative to the number of items~$T$. This indicates the rate at which it converges to the true CEEI solution as the number of items grows. 

Second, we introduce DA-UCB, which employs the upper confidence bound and is designed to have better empirical performance than DA-EtC.  Analyzing DA-UCB is even more complicated than DA-EtC due to the non-i.i.d. nature of the virtual values, referred to here as UCB values. To make the analysis tractable, we devise a variant of DA-UCB where multiple instances of DA are executed, treating each estimator matrix as constant. This preserves the structure of DA-UCB, enabling the virtual values to be drawn in an i.i.d. manner. Consequently, we establish a regret upper bound of $\tilde{O}(\sqrt{T})$, meeting the lower bound of $\tilde{\Omega}(\sqrt{T})$ applicable to any algorithm.

Let us explore the rest of the literature. Finding fair division closely relates to computing market equilibria or competitive equilibria, which has been extensively studied in algorithmic game theory, mainly for the prominent case of Fisher markets~\citep{vazirani:2007,codenotti:2007}. This problem is highly challenging in general. However, in a static setting, if agents have linear, additive preferences, the problem can be reduced to an optimization known as the EG % Eisenberg-Gale (EG) 
convex program~\citep{eisenberg:ams:1959,GEB:2010}. 
As we already noted, if the items are divisible, the solution coincides with CEEI% 
% \textit{competitive equilibrium with equal income} (CEEI)~\citep{budish:jpe:2011}
% , which guarantees both envy-freeness and Pareto optimality and coincides with 
and an assignment that maximizes NSW. % \textit{Nash social welfare} (NSW)~\citep{nash:ecma:1950}. 
In contrast, if the items are indivisible, the compatibility no longer holds. However, the solutions of the NSW maximization is considered to have several plausible properties~\citep{Caragiannis:1:2019}.\footnote{First, an NSW solution is envy-free up to one item% (EF1)
, meaning that each agent (almost) prefers her allocation better than the others. Second, it is Pareto optimal, meaning that no one can be better off without sacrificing the utility of some other agents. Third, it also has pairwise maximin share guarantee, meaning that the division aligns with the intention of each agent.}

In online settings, even if items are divisible, envy-freeness is not compatible with Pareto optimality. Some researchers have taken a relaxed approach to envy-freeness, while others have found an approximate solution through stochastic approximation schemes: 
\cite{kash:jair:2014} relax envy-freeness for their online setting. 
\cite{bateni2022fair} utilize a stochastic approximation scheme and obtain an approximate solution for the EG convex program.
\cite{sinclair:sigmetric:2022} examine the trade-off between envy-freeness and Pareto optimality. The trade-off can be resolved if we allow the number of items to be large, or if we approximate indivisible items as divisible ones. Our work aims to achieve envy-freeness and Pareto optimality in the sense of asymptotically maximizing NSW as the number of items grows. 

\section{Problem setup}\label{sec:problem setup}
This section introduces the notation for our paper. Summarized notation is found in Appendix~\ref{sec_notation}. 
Let $N=\{1,2,...,n\}$ denote the set of agents, and $M=\{1,2,...,m\}$ denote the set of the types of items ($|N|=n, |M|=m$). The \textit{ex ante} (expected) value of each agent $i$ on an item of type $j$ is denoted as $v_{i,j} \in \Real$, which is unknown in advance to a policymaker or an algorithm.
%Assume that there are discrete rounds $t$. 
At each round $t = \{1,2,...,T\}$, an (indivisible) item $j(t) \in M$ arrives and the policymaker allocates it to agent $i(t) \in N$. 
The type $j(t) \in M$ of item at $t$ is drawn from a certain probability distribution $\mathcal{S}=\{s_1,s_2,...,s_m\}$ in an i.i.d. manner, where $\sum_{j\in M} s_j=1$ and $s_j\geq 0$ for all $j\in M$.
When the agent $i(t)$ receives the item $j(t)$, the agent observes the \textit{ex post} (realized) utility of $u_i(t)=v_{i(t),j(t)}+\varepsilon_t$, instead of directly observing $v_{i(t),j(t)}$. The quantity $\varepsilon_t$ is a sub-Gaussian random variable with its radius $\sigma^2$. Namely, it satisfies
$
\Ex[ e^{\lambda \varepsilon_t} ] \le \exp\left(\lambda^2 \sigma^2/2\right),
$
which implies $\varepsilon_t$ to be mean zero.\footnote{The class of sub-Gaussian distributions includes many light-tail distributions. For example, it allows $u_i(t)$ to be a discrete value such as $\{0,1\}$ (Bernoulli distributions), or $\{1,2,3,4,5\}$ (categorical distributions).}
The cumulative utility of each agent $i$ in $T$ rounds, assuming the agents' values are additive, is defined as:
%\begin{equation*}
%\begin{aligned}
$U_i(T)=\sum_{t:i(t)=i} u_i(t)$.
%\end{aligned}
%\end{equation*}
The additive assumption is common in fair division~\citep{procaccia:ec:2014,Caragiannis:1:2019,bouveret:comsoc:2016}. 

This paper aims to asymptotically find a fair allocation in this online setting. % described so far. 
In the literature, there are several ways to define fairness: \textit{Envy-freeness} guarantees that no agent has no envy toward another agent's allocation; \textit{Maximin share} guarantees that the minimum value among allocated agents is maximized. 
In general, both cannot be guaranteed in our online setting (See also the extensive survey~\citep{aleksandrov:aaai:2020}). 
We concentrate on asymptotically maximizing \textit{NSW}, defined as the geometric mean of the agents' obtained values, which is known as empirically balancing allocations between envy-freeness and \textit{Pareto efficiency}. 

For our noisy, online setting, we define the \textit{ex post} notion of NSW across time, which is equivalent to the weighted geometric mean of realized utilities: 
%\begin{equation*}
%\begin{aligned}
$\mathrm{NSW}(T)=%\left(
\prod_{i \in N} U_i(T)^{B_i}$.
%\end{aligned}
%\end{equation*}
The weights $B_i>0$ indicate the priority given to an agent, and they can be interpreted as her per-round budget rate in a Fisher market, e.g.,~\citep{gao:pace:2021}. 
Without loss of generality, we assume $\sum_i B_i = 1$.
We next consider a hindsight optimal allocation as a benchmark.
The law of large numbers implies that, when we allocate an item of type $j$ to agent $i$ many times the mean utility converges to $v_{i,j}$, and when the number of items is sufficiently large, we can approximate the items to be divisible. By using these facts, the hindsight optimal allocation is represented as the \textit{EG} convex program. %~\citep{eisenberg:ams:1959}.

Let $x_{i,j} \in [0,1]$ be the fraction of items $j$ allocated to agent $i$, consider the optimization per item:
\begin{align}\label{ineq_ONSW}
       \begin{aligned}
           & \text{maximize}_{\{x_{i,j}\}}&\ & 
           %\left(
           \prod_{i \in N} \left(\sum_{j \in M} s_jv_{i,j}x_{i,j}\right)^{B_i}
           %\right)^{1/||B||_1}
           \\& \text{subject to}&\ & \forall j \in M: \sum_{i \in N} x_{i,j} \leq 1, \\
           & &\ &\forall i \in N, \ \forall j \in M: x_{i,j}\ge 0.
     \end{aligned}
 \end{align}
We say an allocation \textit{optimal} if it maximizes the objective for the ex ante values and say the hindsight value of the optimal allocation \textit{Optimal NSW} (ONSW). 

% Regret
We adopt a metric \textit{Regret} to measure the performance of an online allocation. The regret 
is the difference between the total hindsight utilities ONSW obtained from Eq.~\eqref{ineq_ONSW} and the ex post NSW with time horizon~$T$: 
% Finally, we define the $Regret$ as the difference between the $\ONSW$ and the $\NSW$:
 \begin{equation}
\begin{aligned}\label{ineq_def_regret}
\Regret(T) = T \cdot \ONSW - \NSW(T).
\end{aligned}
\end{equation}

% \begin{remark}{\rm (SW and NSW)}
It is not very difficult to see that the maximization of social welfare (SW) $\sum_i U_i(T)$ does not necessarily maximize the NSW. For example, if there is an agent $i$ with a very small value $v_{i,j}$ for all types $j$, then the social welfare is maximized by allocating no item to the agent, which results in zero NSW. Note also that, unlike SW, NSW is free from normalization; multiplying a constant on an $i$-th row of a matrix $\{v_{i,j}\}$ does not affect the optimal allocation.
Without loss of generality, the objective function of Eq.~\eqref{ineq_ONSW} can be replaced with the sum of weighted logarithmic utilities, 
i.e., $\sum_{i\in N}B_{i} \mathrm{log}\, \sum_{j \in M}v_{i,j}x_{i,j}$. 
We call the primal form $P_{EG}$ and

its dual form $D_{EG}$ given as 
\begin{equation*}
    \begin{aligned}
    & \text{minimize}_{\{\beta_i\}, \{p_j\}}&\ & \sum_{j \in M} s_j p_j-\sum_{i\in N}B_{i} \mathrm{log}\beta_i \\
    & \text{subject to}&\ & \forall i \in N, \ \forall j \in M:\ p_{j}\geq \beta_i v_{i,j}, \\
    & &\ & \forall j \in M:\ p_j \geq 0.
    \end{aligned}
 \end{equation*}
The value $p_j$ implies the \textit{price} of item $j$. 
This program has no duality gap and it belongs to a \textit{rational convex program}~\citep{vazirani:acm:2012} where all parameters are rational numbers and which always admits a rational solution. The program, while being nonlinear, can be solved by algorithms such as the ellipsoid method, e.g.,~\citep{vishnoi:2021}. 

However, solving EG once is insufficient for our aim. This is because (i) we do not know the values $\{v_{i,j}\}_{i,j}$ and need to update their estimates or empirical means for each round. (ii) Allocating items in a greedy manner based on the current estimates of $\{v_{i,j}\}_{i,j}$ results in a suboptimal NSW. In the following, we discuss ideas that address the issues above. 

\section{Dual-Averaging for Our Setting}
\label{sec_basic}

%\subsection{Dual-Averaging}
DA is an iterative method for solving a convex optimization problem\footnote{The version of DA we consider involves a regularization term and is sometimes referred to as regularized dual averaging.}, 
\cite{gao:pace:2021} have utilized DA for solving the dual problem $D_{EG}$. 
A variant of DA for our setting, described in Algorithm~\ref{DAbase}, calls a subroutine (Algorithm~\ref{DAiter}) to determine the winner at each round. 
%in line 4 indicates an exact value~\cite{xiao_dual} or a noisy, estimated value in our algorithms: $\hat{v}_{i,j}^{\EtC}$ in Algorithm~\ref{DA-EtC} or $\vucb_{i,j(t)}(t)$ in Algorithm~\ref{DA-UCB}. 
% 
Algorithm~\ref{DAiter} uses a multiplier $\beta_i \in \mathbb{R}^+$ to balance the allocation of items among agents and prevent any one agent from monopolizing the allocation. 
As agent~$i$ receives more items, $\beta_i$ decreases. Each iteration in DA corresponds to a round, which is an analog of the arrival of an item, in the online fair division, where an item with value 
$\hat{v}^{\DA}_{i,j(t)}(t)$ is allocated according to a (virtual) first-price auction, where the bids of agents are weighted by the multiplier $\beta_i v_i^{DA}$. 
Here, the parameters $l, h, \delta_0 > 0$ restrict the range of $\beta_i$ for the sake of stability.
At round $t$, the winner $i(t)$ of the auction receives the item and obtains the utility $u^{\DA}_i(t)$. If the true values $\{v_{i,j(t)}\}$ are available, they are used in $\{\hat{v}^{\DA}_{i,j(t)}(t)\}$ of the subroutine. This is the case of the PACE algorithm~\citep{gao:pace:2021}, which has shown to have $\tilde{O}(1/\sqrt{T})$ regret. 

However, Algorithm~\ref{DAbase} requires the true value $v_{i,j(t)}$ for agent~$i$ receiving type $j$ item at round $t$, which is not available in our setting.
Instead, we consider plugging-in the estimated value to the subroutine (Algorithm~\ref{DAiter}) as described in the following section. 

\begin{figure}[tb]
\begin{algorithm}[H]
\caption{DA with true values $\{v_{i,j}\}$}
\label{DAbase}
\begin{algorithmic}[1]
%\REQUIRE $\mathcal{S}$
\STATE Initialize utility $\bar{u}^{\DA}_i = 0$ for each $i$.
\FOR {$t = 1$ to $T$}
\STATE Observe the type $j(t)$ of an arriving item, drawn from~$\mathcal{S}$.
\STATE Observe the value $v_{i,j(t)}$ of the item for each $i$. %, which may be a noisy, estimated value.
\STATE Update the mean utilities $\{\bar{u}^{\DA}_i(t+1)\}_i = \text{DA-Iter}(t, \{v_{i,j(t)}\}_i, \{\bar{u}^{\DA}_i(t)\}_i)$
\ENDFOR
\end{algorithmic} 
\end{algorithm}%
%%%%
\begin{algorithm}[H]
\caption{DA-Iter}
\label{DAiter}
\begin{algorithmic}[1]
\REQUIRE $t$, $\{\hat{v}^{\DA}_{i,j(t)}(t)\}_i$, $\{\bar{u}^{\DA}_i(t)\}_i$
%\STATE Observe the type of items $j(t)$ and the virtual value $v^{\DA}_{i,j(t)}$ of each $i$. \label{line_receivevalue}
\STATE Define multiplier $\beta_i$ as
\begin{align*}
\centering
\beta_i=\mathrm{Proj}_{[B_i/(h(1+\delta_0)),(1+\delta_0)/l]}(B_i/\bar{u}^{\DA}_i(t))\ \ \ (\delta_0>0),
%:= \min \{\max \{l_i, B_i/\bar{u}^{\DA}_i(t)\}, h_i\} 
\end{align*}
where $\mathrm{Proj}_{[a, b]}(x) = \max(a, \min(b, x))$.
\STATE Agents bid $\beta_i \hat{v}^{\DA}_{i,j(t)}(t)$.
\STATE Winner is determined: $i(t)={\argmax\limits_{i}\beta_i \hat{v}^{\DA}_{i,j(t)}(t)}$, where ties are broken arbitrarily.
\STATE Winner $i(t)$ pays $\beta_{i(t)} \hat{v}^{\DA}_{i,j(t)}(t)$.
\STATE Each agent receives a utility: $u^{\DA}_i(t) = \mathbb{I}\{i=i(t)\} \hat{v}^{\DA}_{i,j(t)}(t)$.
\STATE Update mean utility for each agent as
\begin{equation*}
\begin{aligned}
\bar{u}^{\DA}_i(t+1) =
\frac{t-1}{t} \bar{u}^{\DA}_i(t) +\frac{1}{t} u^{\DA}_i(t)
\end{aligned}
\end{equation*}
\RETURN $\{\bar{u}^{\DA}_i\}_i$ (and $i(t)$ for DA-EtC, DA-UCB)
\end{algorithmic} 
\end{algorithm}
\end{figure}%

\section{Proposed Method}

\subsection{DA-EtC}
\label{DA-EtCsec}
Algorithm~\ref{DA-EtC} describes the procedure of DA-EtC.  
It begins with a uniform exploration phase, where each agent tries every item type equally for the first $T_0$ rounds. At the end of this phase, the algorithm generates an estimator $\hat{v}_{i,j}^{\EtC} = \hat{v}_{i,j}(T_0+1)$ of the expected value $v_{i,j}$ of each item type $j$ for each agent $i$ as in line 7. Here, $N_{i,j}(t)$ is the number of times agent $i$ received an item of type $j$ up to round $t-1$: $\sum_{\tau<t} \Ind[i(\tau)=i\wedge j(\tau)=j]$ where $\Ind[\cdot]$ is an indicator function. After the exploration phase, the algorithm runs a DA algorithm for the remaining $T - T_0$ rounds using the estimator $\{\hat{v}_{i,j}^{\EtC}\}$. In each of the rounds, the algorithm selects the winner $i(t)$ and updates the mean utilities $\bar{u}_i^{\DA}(t+1)$.

% \begin{remark}{\rm (Value and utility from the viewpoint of DA)}
To implement DA-EtC, a certain level of attention is required. 
Specifically, DA-EtC uses $\hat{v}_{i,j}^{\EtC}$ 
for $\hat{v}^{\DA}_{i,j}(t)$ in 
DA-Iter and calculate $\bar{u}_i^{\DA}(t)$. 
Note that the estimator $\hat{v}_{i,j}^{\EtC}$ is fixed after round $T_0$, primarily due to theoretical reasons to derive 
% related to the application of 
Lemma~\ref{lem_dabound}.

%DA-EtC English
\begin{figure}[tb]
\begin{algorithm}[H]
\caption{DA-EtC}
\label{DA-EtC}
\begin{algorithmic}[1]
\FOR {$t = 1$ to $T_0$}
    \STATE Observe the type $j(t)$ of an arriving item, drawn from~$\mathcal{S}$.
    \STATE Give item $j(t)$ to the agent who is chosen uniformly at random.  
    \STATE The agent $i(t)$ receives a utility $u_{i(t)}(t)$.
    \STATE Update the cumulative utility: $U_{i}(t+1) = U_{i}(t) + \Ind[i=i(t)] u_{i(t)}(t)$ for each $i$.
\ENDFOR
\STATE Fix the estimator:
\[
\hat{v}_{i,j}^{\EtC}  = \frac{\sum_{t:j(t) = j} u_{i(t)}(t)\Ind[i=i(t)]}{N_{i,j}(T_0+1)}
\]
for each $i,j$.
\STATE Initialize the DA's utility $\bar{u}^{\DA}_i(1)=0$ for each $i$.
\FOR {$t = T_0 + 1$ to $T$}
\STATE The type of item $t$ is determined: $j(t) \sim \mS$.
\STATE $t' = t - T_0$
\STATE Run DA-Iter to allocate an item: $\{\bar{u}^{\DA}_i(t'+1)\}_i, i(t) = \text{DA-Iter}(t', \{\hat{v}_{i,j(t)}^{\EtC}\}_i, \{\bar{u}^{\DA}_i(t')\}_i)$.
\STATE The agent $i(t)$ receives a utility $u_{i(t)}(t)$.
\STATE Update the cumulative utility: $U_{i}(t+1) = U_{i}(t) +  u_{i(t)}(t)\Ind[i=i(t)]$ for each $i$.
\ENDFOR
\end{algorithmic} 
\end{algorithm}
\end{figure}

\subsection{DA-UCB}
\label{DA-UCBsec}
Despite being able to control the exploration duration $T_0$ for minimizing regret, DA-EtC has limited adaptivity. 
The largest limitation stems from the uniform exploration,
which may result in the unnecessary exploration of items with a significantly low value. 
To mitigate this, we develop DA-UCB in Algorithm~\ref{DA-UCB}, which employs an upper confidence bound that holds with high probability and % aims for a more adaptable allocation.
solves the exploration and exploitation tradeoff more adaptively than DA-EtC. 
At each round $t$, DA-UCB calculates the UCB value $\vucb_{i,j(t)}(t)$ for each agent $i$. This value is then provided to DA-Iter to determine the winner~$i(t)$. 
Note that the utility with respect to DA, denoted as $u^{\DA}_i(t)$, differs from the actual utility $u_i(t)$ for both DA-EtC and DA-UCB. This is because the estimated value is supplied to DA.

\begin{figure}[tb]
 \begin{algorithm}[H]
 \caption{DA-UCB}
     \label{DA-UCB}
 \begin{algorithmic}[1]
\STATE Initialize the DA's utility $\bar{u}^{\DA}_i(1)=0$ for each $i$ and $\vucb_{i,j}(1)=1$ for all $i,j$.
  \FOR {$t = 1$ to $T$}
    \STATE Observe the type $j(t)$ of an arriving item, drawn from~$\mathcal{S}$.
  \STATE Calculate the UCB value 
  \begin{equation*}
% Todo: Need an update for $\sigma$ sub-Gaussian?
\begin{aligned}
\vucb_{i,j(t)}(t)=\min\left(1,\vep_{i,j(t)}(t)+\sqrt{\log t / (2N_{i, j(t)}(t))}\right)\\ \text{ where } \min(1, +\infty) = 1.
\end{aligned}
\end{equation*}
\STATE Run DA-Iter to allocate an item: $\{\bar{u}^{\DA}_i(t+1)\}_i, i(t) = \text{DA-Iter}(t, \{\vucb_{i,j(t)}(t)\}_i, \{\bar{u}^{\DA}_i(t)\}_i)$.
\STATE The agent $i(t)$ receives a utility $u_{i(t)}(t)$.
\STATE Update the number of draws $N_{i(t),j(t)}(t+1) = N_{i(t),j(t)}(t) + 1$.
\STATE Update the cumulative utility: $U_{i}(t+1) = U_{i}(t) +  u_{i(t)}(t)\Ind[i=i(t)]$ for each $i$.
\STATE Update the estimator:
\\\[
\hat{v}_{i(t),j(t)}(t+1) = \frac{\sum_{\tau \le t: j(\tau) = j} u_{i(t)}(\tau)\Ind[i=i(t)]}{N_{i,j}(t+1)}
\]
  \ENDFOR
 \end{algorithmic} 
 \end{algorithm}
 \end{figure}

\section{Analysis}

In this section, we analyze the regret bounds of the algorithms. We begin by establishing a lower bound on the regret, which represents the best performance that any algorithm can achieve. Following that, we provide regret upper bounds on DA-EtC and a variant of DA-UCB. This is because directly analyzing DA-UCB poses significant challenges. 

\subsection{Regret lower bound}

\begin{theorem}{\rm (Regret lower bound)}\label{thm_reglower}
There exists a model where
the expected regret of any algorithm is lower-bounded as
%\begin{equation}
$
\Ex[ 
\Regret(T)
]
=
\Omega( \sqrt{mT} ). 
$
%\end{equation}
\end{theorem}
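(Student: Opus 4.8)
The plan is to reduce the problem to a standard multi-armed bandit lower bound by constructing a family of instances in which the NSW-optimal allocation is essentially a single ``hardest arm'' identification problem. Concretely, I would take $n$ agents, all with equal budgets $B_i = 1/n$, and a single item type ($m$ item types in general, handled by a product construction below), so the Fisher market degenerates into a problem where the planner must repeatedly decide which agent to give the incoming item to. Fix a baseline instance where $v_{i} = 1/2$ for all agents, and for each agent $k$ consider a perturbed instance where $v_k = 1/2 + \Delta$ while $v_i = 1/2$ for $i \neq k$, with $\Delta$ a small gap to be tuned. In all these instances, since there is effectively one divisible good, the EG/CEEI solution splits the good according to the budgets; but the \emph{realized} $\mathrm{NSW}(T)$ depends on how often the planner has allocated the item to each agent, and deviating from the budget-proportional split --- which requires knowing the $v_i$'s --- costs a second-order $\Theta(\Delta^2)$ per-round penalty in the log-NSW objective near the optimum (the objective $\sum_i B_i \log U_i$ is smooth and its gradient vanishes at the optimum only in the constrained sense; the relevant loss for misallocating a $\Theta(1)$ fraction of rounds to the wrong agent is $\Theta(\Delta^2 T)$ if the gap is $\Delta$, and a learner needs $\Omega(1/\Delta^2)$ samples per agent to identify the special one).

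The key steps, in order, are: (1) formalize the instance family above and compute $\ONSW$ and the per-round regret incurred when the empirical allocation frequencies are off from the optimal split by a constant; (2) invoke a change-of-measure / Pinsker argument (à la the classic $\Omega(\sqrt{KT})$ bandit lower bound of Auer et al.): on the baseline instance any algorithm allocates the item to some agent at most $T/n$ times in expectation for at least one agent $k$, so on the instance where agent $k$ is special the algorithm fails to give $k$ a budget-proportional share unless the KL divergence between the induced observation laws is $\Omega(1)$, which forces $\Delta^2 T/n \cdot n = \Delta^2 T = \Omega(1) \cdot$ (something); (3) optimize $\Delta$ to balance the identification cost against the per-round NSW penalty, yielding $\Omega(\sqrt{nT})$ for a single item type; (4) lift to $m$ item types by taking a product of $m$ independent such sub-instances (each item type drawn with probability $1/m$ from $\mathcal{S}$), so that the horizon effectively available per type is $T/m$ and the regret is additive across types, giving $m \cdot \sqrt{n \cdot (T/m)} = \sqrt{n m T} = \Omega(\sqrt{mT})$ when $n$ is treated as a constant (or $\Omega(\sqrt{nmT})$ more generally).

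I expect the main obstacle to be step (1): carefully quantifying how a constant-fraction misallocation translates into a regret lower bound in the \emph{multiplicative} NSW objective $T \cdot \ONSW - \mathrm{NSW}(T)$ rather than in the additive log-objective. One has to be careful that $\ONSW$ is the per-item optimal value and that $\mathrm{NSW}(T) = \prod_i U_i(T)^{B_i}$ scales like $T$ times a per-round value; showing that a $\Theta(1/\sqrt{T})$-scale deviation in allocation frequencies (which is what a learner with $\Theta(\sqrt{T})$ exploration incurs) produces a $\Theta(\sqrt{T})$ additive regret requires a second-order Taylor expansion of $\prod_i (\cdot)^{B_i}$ around the optimum and a check that the first-order term vanishes by the KKT/optimality conditions of the EG program. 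A secondary technical point is ensuring the sub-Gaussian noise model (with the $\{0,1\}$ or categorical realizations mentioned in the footnote) supports the change-of-measure step with the right KL scaling $\mathrm{KL} = \Theta(\Delta^2)$; Bernoulli utilities with means near $1/2$ make this clean. Once these are in place, the product construction and the balancing of $\Delta$ are routine.
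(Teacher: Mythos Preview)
There is a genuine gap in your core construction. With a single item type and equal budgets $B_i = 1/n$, the log-NSW objective is
\[
\frac{1}{n}\sum_i \log(v_i x_i) \;=\; \frac{1}{n}\sum_i \log v_i \;+\; \frac{1}{n}\sum_i \log x_i,
\]
and the first sum is a constant in $x$. Maximizing $\sum_i \log x_i$ over the simplex gives $x_i = 1/n$ \emph{regardless of the values $v_i$}. So in your $m=1$ instance the NSW-optimal allocation is always the equal split and does not depend on which agent is ``special''; an algorithm that simply round-robins across agents attains the optimal NSW with no learning whatsoever. Your step~(1) claim that the budget-proportional split ``requires knowing the $v_i$'s'' is therefore incorrect, and step~(3)'s $\Omega(\sqrt{nT})$ conclusion for $m=1$ does not hold. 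The product lift in step~(4) then has nothing to sum: each per-type sub-instance contributes zero, and in any case the NSW regret does not decompose additively across item types.

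The paper's argument avoids this by working directly with $m$ types rather than reducing to a single-type problem. It starts from the base model $v_{i,j}=1/2$ for all $i,j$, and for each agent $i$ it identifies the type $j_i$ that agent $i$ receives least often under the base model (so $\sum_i \mathbb{E}[N_{i,j_i}] \le T/m$). The alternative model raises $v_{i,j_i}$ to $\tfrac{1}{2}(1+\sqrt{m/T})$. Crucially, with multiple types this perturbation \emph{does} move the EG optimum: the optimal allocation now prefers to route type $j_i$ to agent $i$, so there is a real identification problem. A change-of-measure bound (Lemma~19 of Kaufmann et al.) shows the KL between the two models along the algorithm's trajectory is $O(1)$, hence the event $\{\sum_i N_{i,j_i}\le 2T/m\}$ has $\Omega(1)$ probability under the alternative as well, and on that event the regret in the alternative model is $\Omega(\sqrt{mT})$. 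The essential idea you are missing is that the hardness must come from perturbing values \emph{across item types}, not across agents for a fixed type.
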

\begin{proof}[Proof Sketch of Theorem \ref{thm_reglower}]
We introduce the \textit{base} model where $v_{i,j} = 1/2$ for all $i,j$. To maximize NSW, the algorithm must allocate (approximately) $T/n$ of items for each agent. 
Let $j_i$ indicate the type of item that agent $i$ receives the least number of times during $T$.
There exists a set of $n$ items $\{j_i\}_{i\in N}$ such that, $\sum_i N_{i, j_i} \le T/m$. 
We consider an alternative model such that $v_{i, j_i}$ is larger than $1/2$ by $\sqrt{m/T}$. To have a low regret in the alternative model, the algorithm must choose arms $\{j_i\}_{i\in N}$ frequently. However, information-theoretic results imply that the algorithm cannot differentiate the base model and the alternative model, and thus suffers a large regret in the alternative model~\citep{kaufmann16}.
\end{proof} %Proof Sketch of \ref{thm_reglower}

\subsection{Convergence on Dual Averaging}

We state the convergence results of DA. The following lemma is an extension of Theorem 4 in~\citep{gao:pace:2021}. To be precise, Theorem 4 in~\citep{gao:pace:2021} requires the values to be normalized as $\sum_j s_j v_{i,j} = 1$ for each agent $i$, and thus it is not directly applicable to the setting where $v_{i,j}$ is unknown. Lemma~\ref{lem_dabound} here generalized their results by introducing $l,h$ so that normalization is no longer required.

\begin{lemma}\label{lem_dabound} %{\rm (bound on $N_i(T)$)}
Assume that: (a) we run DA for $T$ rounds; (b) 
let $\{v^{\DA}_{i,j}\}$ be $\hat{v}^{\DA}_{i,j(t)}(t)$ for all t, and they are constant;
Let $\bar{u}^{\DA}(T) \in \Real^n$ be the mean utility vector of agents at the end of round $T$ and $u^{*,\DA} \in \Real^n$ be the solution of the corresponding EG program with $\{v^{\DA}_{i,j}\}$.
Then, the following inequality holds for an arbitrary $\delta_0>0$:
\begin{align*}
\mathbb{E}\left[\|\bar{u}^{\DA}(T) - u^{\ast, \DA}\|^2\right] &\leq \CDA \frac{6 + \log T}{T}.
\end{align*}
where $\|\cdot\|$ is a l2 norm vector and 
\[
\CDA = \frac{\|v^{\DA}\|_{\infty}^2(1+\delta_0)^6}{l^4\left(\min_{i\in N}B_i\right)^4}\left(\frac{h^3\|v^{\DA}\|_{\infty}^2}{l}\left(\frac{1}{\delta_0}\right)^2 + h^4\right)
\] and $\|v^{\DA}\|_{\infty}=\max_{i\in N}\|v_i^{\DA}\|_{\infty}$.
\end{lemma}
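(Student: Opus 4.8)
The plan is to follow the structure of the convergence proof for PACE (Theorem 4 in \citealp{gao:pace:2021}) but carry along the clipping bounds $l, h$ explicitly so that the normalization $\sum_j s_j v_{i,j} = 1$ is never invoked. First I would recall the dual-averaging viewpoint: since the values $v^{\DA}_{i,j}$ are fixed, each round is an i.i.d.\ draw $j(t) \sim \mathcal S$, and the allocation rule in Algorithm~\ref{DAiter} is exactly a stochastic-subgradient / online-mirror-descent step on the dual objective $D_{EG}$ restricted to the box constraints imposed by the projection $\mathrm{Proj}_{[B_i/(h(1+\delta_0)),(1+\delta_0)/l]}$. Concretely, I would identify $\beta_i(t) = B_i / \bar u^{\DA}_i(t)$ (before clipping) as the dual iterate, observe that the per-round payment $\beta_{i(t)}\hat v^{\DA}_{i,j(t)}$ together with the winner rule produces an unbiased subgradient estimate of the smooth part $\sum_j s_j p_j$ of $D_{EG}$, and that the $-\sum_i B_i \log \beta_i$ term is the strongly convex regularizer driving $O(\log T / T)$ convergence of the averaged iterate. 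The key quantitative inputs are: (i) the dual objective is $\mu$-strongly convex on the feasible box with $\mu$ controlled by $\min_i B_i$ and the lower clip $l$; (ii) the stochastic subgradients are bounded in norm by a constant depending on $\|v^{\DA}\|_\infty$, $h$, and the clip endpoints; (iii) the Bregman divergence from the initialization to the optimum is bounded because the box has diameter controlled by $l,h,\delta_0$.

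The main steps, in order, would be:
\begin{enumerate}
\item \textbf{Set up the dual recursion.} Write the update $\bar u^{\DA}_i(t+1) = \frac{t-1}{t}\bar u^{\DA}_i(t) + \frac1t u^{\DA}_i(t)$ as an averaging of the indicators $u^{\DA}_i(t) = \Ind\{i = i(t)\}\hat v^{\DA}_{i,j(t)}$, and translate this into a dual-averaging iteration on $\{\beta_i\}$ with step sizes $1/t$, verifying that the projection step is exactly the constrained dual-averaging projection onto the box $[B_i/(h(1+\delta_0)), (1+\delta_0)/l]$.
\item \textbf{Strong convexity and smoothness constants.} Show the dual objective is $\mu$-strongly convex in $\{\beta_i\}$ on the box with $\mu \gtrsim \min_i B_i \cdot l^2/(1+\delta_0)^2$ (from the second derivative of $-B_i\log\beta_i$ evaluated at the largest feasible $\beta_i$), and bound the second moment of the stochastic subgradient by $G^2 \lesssim \|v^{\DA}\|_\infty^2 h^2$ using $\beta_i \le (1+\delta_0)/l$ and $\hat v^{\DA} \le \|v^{\DA}\|_\infty$.
\item \textbf{Invoke the standard DA regret bound.} Apply the known $O((G^2/\mu)\cdot(\log T)/T)$ bound on $\Ex[\|\bar\beta(T) - \beta^*\|^2]$ for strongly convex stochastic dual averaging with $1/t$ weights (this is where the ``$6 + \log T$'' arises, with the constant $6$ coming from the non-asymptotic terms in the telescoping), obtaining convergence of the averaged multiplier to the dual optimum.
\item \textbf{Transfer from $\beta$-space to $u$-space.} Since $\beta_i^* = B_i/u^{*,\DA}_i$ and the feasible $u^{\DA}_i$ are bounded away from $0$ (by $l$) and from $\infty$ (by $h$), the map $\beta \mapsto B/\beta$ is bi-Lipschitz on the relevant box with Lipschitz constant controlled by $h^2$; composing this with Step 3 converts the $\beta$-space bound into the stated $u$-space bound and produces the remaining factors of $h^3, h^4, (1+\delta_0)^6, l^{-4}$ and $1/\delta_0^2$ in $\CDA$.
\end{enumerate}

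The hard part will be Step~2 together with the bookkeeping in Step~4: getting the explicit constant $\CDA$ with exactly the stated powers of $l, h, (1+\delta_0), \min_i B_i$ requires carefully tracking where each clipping endpoint enters — the lower clip $l$ appears through the upper bound on $\beta_i$ (hence in the strong-convexity modulus, contributing $l^{-4}$ after squaring and inverting $\mu$), while $h$ appears both through the upper bound on feasible utilities and through the Lipschitz constant of the inverse map, and $\delta_0$ only through the slack in the projection box (the $(1/\delta_0)^2$ term). The subtlety is that \citet{gao:pace:2021}'s Theorem 4 absorbs these into the normalization $\sum_j s_j v_{i,j} = 1$, so one must re-derive their strong-convexity and boundedness lemmas from scratch with the general box constraints rather than quoting them. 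A secondary technical point is confirming that the optimum $u^{*,\DA}$ of the EG program with values $\{v^{\DA}_{i,j}\}$ actually lies strictly inside the clipping box $[l/(1+\delta_0), h(1+\delta_0)]$ (so that the projection does not bias the fixed point); this should follow from choosing $l,h$ to bracket $\min_{i,j} s_j v^{\DA}_{i,j}$ and $\max_i \sum_j s_j v^{\DA}_{i,j}$ appropriately, but it needs to be stated as a standing assumption or verified from the parameter choices made in the calling algorithms.
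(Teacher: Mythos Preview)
Your Steps 1--3 are essentially the paper's approach: the paper proves the auxiliary Lemmas \ref{lem:psi_strongly_convexity} and \ref{lem:convergence_beta} establishing exactly the strong-convexity modulus $\mu = l^2 \min_i B_i / (1+\delta_0)^2$ and the DA bound $\Ex[\|\beta^{T+1} - \beta^{\ast,\DA}\|^2] \le (G^2/\mu^2)(6+\log T)/T$ with $G = \|v^{\DA}\|_\infty$. (Minor correction: the subgradient of $\max_i v_{i,j}\beta_i$ is $v_{i^*,j}\mathbf{e}^{i^*}$, with norm at most $\|v^{\DA}\|_\infty$; there is no extra factor of $h$ in $G$.)

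Step 4 has a genuine gap. The map $\beta \mapsto B/\beta$ is bi-Lipschitz on the box, but $\bar u_i^{\DA}(T)$ is \emph{not} equal to $B_i/\beta_i^{T+1}$ in general---this identity holds only when no projection occurs. The empirical mean utility $\bar u_i^{\DA}(T)$ lives in $[0,\|v^{\DA}\|_\infty]$, not in $[B_i l/(1+\delta_0), h(1+\delta_0)]$ as you assert, so you cannot simply compose the Lipschitz bound with Step~3. The paper handles this by splitting on the event $\mV_i^T = \{B_i/\bar u_i^{\DA}(T) \in [\tfrac{B_i}{h(1+\delta_0)},\tfrac{1+\delta_0}{l}]\}$. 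On $\mV_i^T$ the identity $\bar u_i^{\DA}(T)=B_i/\beta_i^{T+1}$ holds and your bi-Lipschitz argument goes through (this yields the $h^4$ term). On $(\mV_i^T)^c$, the key observation is that projection forces $\beta_i^{T+1}$ to sit at a box endpoint while, by Lemma \ref{lem:beta_ast_bound}, $\beta_i^{\ast,\DA}$ is strictly interior with margin at least $\min(\delta_0/l,\,B_i\delta_0/(h(1+\delta_0)))$; hence $\Pr[(\mV_i^T)^c]$ is controlled by $\Ex[(\beta_i^{T+1}-\beta_i^{\ast,\DA})^2]$ divided by that margin squared, and on this event one uses the crude bound $|\bar u_i^{\DA}(T)-u_i^{\ast,\DA}| \le (h/l)^{1/2}\|v_i^{\DA}\|_\infty$. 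This Markov-type argument on the projection event is precisely where the $(1/\delta_0)^2 \cdot h^3\|v^{\DA}\|_\infty^2/l$ term in $\CDA$ comes from, and it is the piece your outline is missing.
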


\subsection{Regret upper bound of DA-EtC}

\begin{theorem}{\rm (Regret upper bound of DA-EtC)}\label{thm_etc}
Assume $B_i = 1/n$ and $s_j = 1/m$ for all $i,j$.
%Assume the equal arrival rate of items $\mP = (1/m, 1/m,\dots,1/m)$.
Assume that 
%there exists $l',h'>0$ such that 
$2l \le v_{i,j} \le h/2$ for all $i,j$.
Then, for $T_0$ that is sufficiently large such that
$
\sqrt{\frac{8 \max(1, \sigma^2) nm\log(nmT)}{T_0}} \le \frac{\min(1, 2l)}{2}
$
holds, the expected regret of DA-EtC is bounded as follows:
\[
\mathbb{E}[\Regret(T)] = \tilde{O}\left( T_0 + \sqrt{\frac{nm}{T_0}} T + n \CDA\sqrt{T}\right) .
\]
The $\tilde{O}(n \CDA\sqrt{T})$ term is non-leading with respect to $T$.
By focusing on the leading term in $T$ and setting $T_0 = T^{2/3}(nm)^{1/3}$, we obtain
$
\mathbb{E}[\Regret(T)] = \tilO(T^{2/3}(nm)^{1/3}).
$
\end{theorem}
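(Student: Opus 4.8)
The plan is to decompose the regret into three parts corresponding to the three terms in the bound: (i) the exploration phase of $T_0$ rounds, (ii) the error incurred by running DA with the estimated values $\{\hat v_{i,j}^{\EtC}\}$ instead of the true values $\{v_{i,j}\}$, and (iii) the intrinsic online-learning error of DA captured by Lemma~\ref{lem_dabound}. First I would bound the contribution of the exploration phase: during these $T_0$ rounds the allocation is arbitrary, so the loss relative to $T\cdot\ONSW$ is at most $O(T_0)$ times the per-round optimal value (a constant given the boundedness assumptions on $v_{i,j}$), giving the $\tilde O(T_0)$ term. Next I would invoke a high-probability concentration bound: since each pair $(i,j)$ is sampled $\Theta(T_0/(nm))$ times in the uniform exploration phase and the noise is $\sigma^2$-sub-Gaussian, a union bound over all $nm$ pairs (and over $T$ for the failure probability bookkeeping) yields $|\hat v_{i,j}^{\EtC} - v_{i,j}| \le \sqrt{8\max(1,\sigma^2)nm\log(nmT)/T_0}=:\eps_0$ simultaneously for all $i,j$ with probability $1-1/\mathrm{poly}(T)$; the condition on $T_0$ in the statement ensures $\eps_0$ is small enough that the perturbed values still satisfy $l \le \hat v_{i,j}^{\EtC} \le h$, so Lemma~\ref{lem_dabound} applies to the DA run on the estimated values.

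The core of the argument is then to pass from "DA converges to the EG solution $u^{*,\DA}$ of the \emph{perturbed} program" to "DA's allocation is nearly optimal for the \emph{true} NSW objective." I would do this in two steps. Step one: a perturbation/stability bound for the EG program, showing that if $\|v^{\DA}-v\|_\infty \le \eps_0$ then $\|u^{*,\DA} - u^*\|$ (distance between optimal utility vectors of the perturbed and true programs) is $O(\eps_0)$ up to constants depending on $n,m,l,h$ — this uses strong convexity / the rational-convex-program structure and the boundedness of all quantities. Step two: translate the $\ell_2$-distance between the achieved mean-utility vector $\bar u^{\DA}(T-T_0)$ and $u^*$ into a gap in the NSW (log-sum) objective; since the log is smooth and all utilities are bounded away from $0$ (again by the $l,h$ assumptions), $\left|\sum_i B_i \log \bar U_i - \sum_i B_i \log U_i^*\right| = O(\|\bar u^{\DA}-u^*\|)$, and multiplying by $T$ and taking expectations (combining Lemma~\ref{lem_dabound}'s $\sqrt{(6+\log T)/T}$ rate via Jensen, the perturbation $\eps_0 \sim \sqrt{nm/T_0}$, and the low-probability failure event contributing only $O(1)$) produces the $\tilde O(\sqrt{nm/T_0}\,T + n\CDA\sqrt T)$ terms. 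One subtlety I would flag: $\bar u^{\DA}$ is the DA mean utility built from \emph{estimated} values $u^{\DA}_i(t)=\hat v^{\EtC}_{i,j(t)}(t)\Ind[i=i(t)]$, whereas the realized $U_i(T)/(T-T_0)$ is built from noisy \emph{true} utilities along the \emph{same} winner sequence $i(t)$; bounding the difference between these two again costs $O(\eps_0)$ per round plus a martingale concentration term for the noise $\varepsilon_t$, which is where an extra $\tilde O(\sqrt{T})$-type contribution (absorbed in the stated bound) arises.

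Assembling the pieces: $\mathbb E[\Regret(T)] \le \tilde O(T_0) + \tilde O(\eps_0 T) + \tilde O(n\CDA\sqrt T) = \tilde O\!\left(T_0 + \sqrt{nm/T_0}\,T + n\CDA\sqrt T\right)$. Since $\CDA$ depends only on $n,m,l,h,\delta_0$ and not on $T$, the last term is $\tilde O(\sqrt T)$, hence non-leading; balancing the first two terms, $T_0 = \sqrt{nm/T_0}\,T$ gives $T_0^{3/2} = \sqrt{nm}\,T$, i.e.\ $T_0 = (nm)^{1/3}T^{2/3}$, and substituting back yields $\mathbb E[\Regret(T)] = \tilde O\!\left((nm)^{1/3}T^{2/3}\right)$, as claimed.

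I expect the main obstacle to be Step one above — the EG perturbation bound — together with the careful tracking of which "utility" object (DA's virtual mean utility on estimated values, versus realized cumulative utility on noisy true values) appears where. The EG program is convex but its objective $\sum_i B_i \log(\cdot)$ is only strongly concave on the interior and the feasible polytope's geometry enters the Lipschitz constant, so controlling $\|u^{*,\DA}-u^*\|$ uniformly requires the boundedness assumptions $2l \le v_{i,j} \le h/2$ (the factor-of-2 slack is exactly what guarantees the perturbed values stay in $[l,h]$). Everything else — the exploration loss, the sub-Gaussian union bound, the smoothness of $\log$, and the final optimization over $T_0$ — is routine once that stability estimate and the bookkeeping of the low-probability failure event are in place.
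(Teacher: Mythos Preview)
Your proposal is correct in overall structure and matches the paper's decomposition closely: exploration loss $\tilde O(T_0)$; estimation error $\tilde O(\sqrt{nm/T_0}\,T)$; DA convergence error $\tilde O(n\CDA\sqrt{T})$ via Lemma~\ref{lem_dabound}; plus the bookkeeping for the gap between DA's virtual mean utility on $\hat v^{\EtC}$ and the realized noisy utility. The high-probability events, the use of the slack $2l\le v_{i,j}\le h/2$ to keep $\hat v^{\EtC}_{i,j}\in[l,h]$, and the final balancing of $T_0$ are all as in the paper.

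The one substantive difference is your ``Step one.'' You propose a two-sided EG stability bound, $\|u^{*,\DA}-u^{*,\true}\|=O(\eps_0)$, and rightly flag it as the main obstacle. The paper avoids this entirely with a one-sided plug-in argument: since $u^{*,\DA}$ is optimal for the \emph{perturbed} EG program, $\prod_i (u_i^{*,\DA})^{B_i}\ge \prod_i(\hat u_i)^{B_i}$ where $\hat u_i:=\sum_j s_j\hat v^{\EtC}_{i,j}x^*_{i,j}$ uses the \emph{true} optimal allocation $x^*$; and $|u_i^{*,\true}-\hat u_i|\le \sum_j s_j x^*_{i,j}|v_{i,j}-\hat v^{\EtC}_{i,j}|\le\eps_0$ directly. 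This gives $\prod_i (u_i^{*,\true})^{B_i}-\prod_i (u_i^{*,\DA})^{B_i}\le O(\eps_0)$ in one line, without any control on $\|u^{*,\DA}-u^{*,\true}\|$. Because the paper decomposes at the NSW level (telescoping through $u^{*,\DA}$ and $\bar u^{\DA}$) rather than at the $\ell_2$ level, only this one-sided inequality is needed. Your route would work too, but proving Lipschitz stability of the EG optimizer is genuinely harder than the paper's trick; the decomposition at the NSW level is what buys the simplification.
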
%
\begin{proof}[Proof Sketch of \thmref{thm_etc}]
We decompose the regret into two components.
Let $u^{*,\true} = (u_1^{*,\true},u_2^{*,\true},u_3^{*,\true},\dots,u_n^{*,\true})$
be the solution of EG with true values, i.e., Eq.~\eqref{ineq_ONSW}.
Let 
\[
\bar{u}_i^{\EtC} = \frac{\sum_{t > T_0}^T u_i(t) \Ind[i(t)=i]}{T-T_0}
\]
be the mean utility during the exploitation rounds. 
$u_i^{*,\DA}$ is the solution of EG where value matrix $\{\hat{v}_{i,j}^{\EtC}\}_{i,j}$
Note that $\bar{u}_i^{\DA}$ is the mean utility in view of DA-Iter, which corresponds to $\bar{u}_i^{\EtC}$ but $u_i(t)$ is replaced by $\hat{v}_{i,j(t)}^{\EtC}$. 
Then, we demonstrate a novel decomposition for the regret-per-round that boils it down into the following terms: 
\begin{align}
\lefteqn{
\frac{\Regret(T)}{T} 
 = 
\prod_i (u^{*,\true}_i)^{B_i} 
- 
\prod_i (\bar{u}_i^{\EtC})^{B_i} + O(T_0)
}\nn 
&=
(\prod_i (u^{*,\true}_i)^{B_i} - \prod_i (u_i^{*,\DA})^{B_i}) \nn
&\ \ \ \ + (\prod_i (u_i^{*,\DA})^{B_i} - \prod_i (\bar{u}_i^{\DA})^{B_i}) 
\nn
&
\ \ \ \ + (\prod_i (\bar{u}_i^{\DA})^{B_i}) - \prod_i (\bar{u}_i^{\EtC})^{B_i}) + O(T_0).
\label{ineq_etcbound_twoterms}
\end{align}
Intuitively, 
The first term of Eq.~\eqref{ineq_etcbound_twoterms} 
is due to the estimation error of $\{v_{i,j}\}_{i,j}$, which depends on the quality of estimator built at the end of the exploration phase. The second term of Eq.~\eqref{ineq_etcbound_twoterms} is the error due to the loss of DA, which we bound via Eq.~\eqref{lem_dabound} and by converting the error on the l2-norm into the error on the error of NSW. 
The third term is due to the estimation error $\{v_{i,j}\}_{i,j}$ as well as the random realization of the samples in the commitment phase.
\end{proof} % Proof Sketch of \thmref{thm_etc}

\subsection{Regret upper bound of a variant of DA-UCB}

This section analyzes a slightly modified version of DA-UCB, which we call Repeated Dual Averaging UCB (RDA-UCB, Algorithm \ref{RDA-UCB}) to demonstrate that DA-UCB may be able to achieve the $\tilde{O}(\sqrt{T})$ regret. In general, the regret analysis depends on Lemma~\ref{lem_dabound}, which requires that inputs for DA are drawn in an i.i.d. manner. DA-EtC inevitably satisfies the requirement because it fixes a value matrix, whose elements $\hat{v}_{i,j}^{EtC}$ are the inputs for DA, at the end of the exploration period. In contrast, DA-UCB by no means satisfies the requirement, since it updates a UCB value $\hat{v}_{i,j}^{UCB}$ at each round. 
To overcome this, RDA-UCB executes multiple instances of DA and each of the estimator matrices becomes regarded as constant, so that it comes to satisfy the i.i.d. assumption. Note that, in practice, RDA-UCB is clearly outperformed by DA-UCB due to the complexity and thus we perform only DA-UCB in the simulation section. 

Let the projected UCB value when as 
\begin{equation}
%\vucbclip_{i,j,N_{i,j}(t)} 
\vucbclip_{i,j}(t) = \mathrm{Proj}_{[l, h]}\left(
%\vep_{i,j,N_{i,j}(t)}
\vep_{i,j}(t) + \sqrt{\frac{\log (T^2)}{2N_{i,j}(t)}}\right),
\end{equation}
where %$\vep_{i,j,N_{i,j}(t)}$ 
$\vep_{i,j}(t)$ is the empirical estimate of $v_{i,j}$ with $N_{i,j}(t)$ samples.
Here, we define %$\vucb_{i,j,0} = h$
$\vucb_{i,j}(1) = h$ for all $i,j$.
The RDA-UCB algorithm proceeds as follows. It begins by fixing a value matrix (Line \ref{line_ucbmat}) based on the current UCB value and then executes an instance of DA (Line \ref{line_rda}). This instance continues until an entry of the count matrix $N_{i,j}(t)$ reaches a power of $2$ (Line \ref{line_powertwo}), at which point the current instance of DA is terminated. Following the termination, a new value matrix is created (Line \ref{line_ucbmat}), and another instance of DA is run (Line \ref{line_rda}). This process is repeated until the number of rounds reaches $T$. The following theorem states the optimality of the regret of RDA-UCB.
% \begin{figure}[tb]
%  \begin{algorithm}[H]
%  \caption{RDA-UCB}
%      \label{RDA-UCB}
%  \begin{algorithmic}[1]
%   \STATE $s = 1$.
%   \WHILE{$s \le T$}
%     \STATE Fix the UCB value matrix $\{\vucb_{i,j,N_{i,j}(s)}\}$. This UCB value remains the same during the inner loop. \label{line_ucbmat}
%     \STATE Initialize the DA's utility $\bar{u}^{\DA}_i(1)=0$ for each $i$.
%     \WHILE{true}\label{line_rda}
%     \STATE $t = s$.
%     \STATE Observe the type $j(t)$ of an arriving item, drawn from~$\mathcal{S}$.
%     \STATE Run DA-Iter to allocate an item: $\{\bar{u}^{\DA}_i(t+1)\}_i, i(t) = \text{DA-Iter}(t-s+1, \{\vucb_{i,j(t),N_{i,j(t)}(s)}\}_i, \{\bar{u}^{\DA}_i(t)\}_i)$
%     \STATE The agent $i(t)$ receives a utility $u_{i(t)}(t)$.
%     \STATE Update the number of draws $N_{i(t),j(t)} = N_{i(t),j(t)} + 1$.
%     \STATE Update the cumulative utility: $U_{i}(t+1) = U_{i}(t) +  u_{i(t)}(t)\Ind[i=i(t)]$ for each $i$.
%     \IF{$N_{i(t),j(t)}(t) \in \{2,2^2,2^3,\dots,2^{\lfloor \log_2 T\rfloor}\}$} \label{line_powertwo}
%       \STATE $s = t+1$ and break the inner loop.
%     \ELSIF{$t \ge T$}
%       \STATE Terminate the algorithm.    
%     \ENDIF
%     \STATE $t = t + 1$.
%     \ENDWHILE
%   \ENDWHILE
%  \end{algorithmic} 
%  \end{algorithm}
%  \end{figure}

\begin{figure}[tb]
 \begin{algorithm}[H]
 \caption{RDA-UCB}
     \label{RDA-UCB}
 \begin{algorithmic}[1]
  \STATE $s = 1$.
  %\STATE $\vucbclip_{i,j,N_{i,j}}=h$ for all $i,j$ such that $N_{i,j}(\cdot)=0$. 
  \STATE Initialize $\vucbclip_{i,j}(1)=h$ for all $i,j$.
  \WHILE{$s \le T$}
    \STATE Fix the RUCB value matrix %$\{\vep^{RUCB}_{i,j,N_{i,j}(s)}\}$.
    $\vucbclip_{i,j}(s)$ This RUCB value remains the same during the inner loop. \label{line_ucbmat}
    \STATE Initialize the DA's utility $\bar{u}^{\DA}_i(1)=0$ for each $i$.
    \WHILE{true}\label{line_rda}
    \STATE $t = s$.
    \STATE Observe the type $j(t)$ of an arriving item, drawn from~$\mathcal{S}$.
    \STATE Run DA-Iter to allocate an item: $\{\bar{u}^{\DA}_i(t+1)\}_i, i(t) = \text{DA-Iter}(t-s+1, \{\vucbclip_{i,j(t)}(t)\}_i, \{\bar{u}^{\DA}_i(t)\}_i)$.
    \STATE The agent $i(t)$ receives a utility $u_{i(t)}(t)$.
    \STATE Update the number of draws $N_{i(t),j(t)}(t+1) = N_{i(t),j(t)}(t) + 1$.
    \STATE Update the cumulative utility: $U_{i}(t+1) = U_{i}(t) +  u_{i(t)}(t)\Ind[i=i(t)]$ for each $i$.
    \STATE Update the estimator $\hat{v}_{i(t),j(t)}(t) = \frac{U_{i}(t+1)}{N_{i,j}(t+1)}.$
    \IF{$N_{i(t),j(t)}(t) \in \{2,2^2,2^3,\dots,2^{\lfloor \log_2 T\rfloor}\}$} \label{line_powertwo}
      \STATE $s = t+1$ and break the inner loop.
    \ELSIF{$t \ge T$}
      \STATE Terminate the algorithm.    
    \ENDIF
    \STATE $t = t + 1$.
    \ENDWHILE
    \STATE $\vucbclip_{i,j}(t) = \mathrm{Proj}_{[l, h]}\left(
    \vep_{i,j}(t)+\sqrt{\frac{\log (T^2)}{2N_{i,j}(t)}}\right).$
  \ENDWHILE
 \end{algorithmic} 
 \end{algorithm}
 \end{figure}

\begin{theorem}{\rm (Regret bound of RDA-UCB)}\label{thm_ucb_reset}
Assume that $l,h$ satisfies that $l\leq v_{i,j} \leq h$ for all $i\in N, j \in M$.
Then, the following inequality holds:
\begin{align}
\Ex[\Regret(T)]
\le
\tilde{O}\left(
\mathrm{poly}(n, m)\sqrt{T}
\right),
\end{align}
where $\mathrm{poly}(n, m)$ is a polynomial of $n,m$ that is independent of $T$.
\end{theorem}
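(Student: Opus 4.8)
\textbf{Proof proposal for Theorem \ref{thm_ucb_reset}.}

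The plan is to mirror the regret decomposition used for DA-EtC, but applied \emph{segment by segment} to the phases induced by the doubling schedule on the count matrix, and then to sum the per-segment contributions. First I would partition the $T$ rounds into the consecutive epochs delimited by Line~\ref{line_powertwo}: within an epoch the RUCB value matrix $\vucbclip_{i,j}$ is frozen, so Lemma~\ref{lem_dabound} applies verbatim to the instance of DA running in that epoch, with $\|v^{\DA}\|_\infty \le h$ (thanks to the projection onto $[l,h]$) giving a uniform bound on $\CDA$ that is $\mathrm{poly}(n,m)$ and independent of $T$. Since each entry $N_{i,j}$ can cross a power of two at most $\lfloor \log_2 T\rfloor$ times, and there are $nm$ entries, the total number of epochs is $O(nm\log T)$; I would use this to control how much total ``restart loss'' (the cost of re-initializing $\bar u^{\DA}_i = 0$ at the start of each epoch) is incurred — this is a polylogarithmic-in-$T$ number of epochs each costing $O(\log(\text{epoch length}))$ from the $\frac{6+\log T}{T}$ term of Lemma~\ref{lem_dabound}, so it is absorbed into $\tilde O(\mathrm{poly}(n,m)\sqrt T)$.

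Next I would bound the estimation error of the frozen value matrices. The key point is that the confidence width $\sqrt{\log(T^2)/(2N_{i,j}(t))}$ shrinks, and because every epoch boundary is triggered by \emph{some} count doubling, by the end of the run every $N_{i,j}$ that was explored at all has a width of order $\sqrt{\log T / N_{i,j}(T)}$; aggregating $\sum_{i,j}\sqrt{\log T / N_{i,j}}$ under the constraint $\sum_{i,j} N_{i,j} = T$ gives the familiar $\tilde O(\sqrt{nmT})$ bandit-type bound. Concretely, for each round $t$ I would compare $\prod_i (u^{*,\true}_i)^{B_i}$ against $\prod_i (u^{*,\DA}_i)^{B_i}$ (the EG optimum for the \emph{current} frozen RUCB matrix), using that the RUCB matrix is, with high probability, an entrywise upper bound on $\{v_{i,j}\}$ that is additively within the confidence width — hence the EG value computed with it is close to the true ONSW up to an error controlled by the width, and then converting that $\ell_\infty$ / $\ell_2$ value-matrix perturbation into an NSW perturbation via the Lipschitz/stability argument already invoked in the DA-EtC proof (the utilities stay in $[l,h]$, so $\log(\cdot)$ and the geometric mean are Lipschitz there). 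Summing over $t$ and using the count-aggregation bound yields the $\tilde O(\mathrm{poly}(n,m)\sqrt T)$ contribution from estimation.

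Finally I would assemble the three pieces exactly as in Eq.~\eqref{ineq_etcbound_twoterms}: (i) $\prod_i (u^{*,\true}_i)^{B_i} - \prod_i (u^{*,\DA}_i)^{B_i}$, bounded by the estimation argument of the previous paragraph; (ii) $\prod_i (u^{*,\DA}_i)^{B_i} - \prod_i (\bar u^{\DA}_i)^{B_i}$, bounded per epoch by Lemma~\ref{lem_dabound} after the $\ell_2$-to-NSW conversion, then summed over the $O(nm\log T)$ epochs; and (iii) $\prod_i (\bar u^{\DA}_i)^{B_i} - \prod_i (\bar u^{\mathrm{UCB}}_i)^{B_i}$, the gap between the DA ``virtual'' utilities and the realized utilities, which is again an estimation-error plus sample-fluctuation term handled by concentration of the noise $\varepsilon_t$ and the confidence-width aggregation. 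The main obstacle I anticipate is step (ii): Lemma~\ref{lem_dabound} gives an $O((6+\log T)/T')$ bound for an epoch of length $T'$, but short epochs (when some count doubles quickly) have $T'$ small and the bound is weak; the delicate part is showing that these short epochs are rare or cheap — I would argue that a count $N_{i,j}$ can only double $O(\log T)$ times total, so short epochs are few, and that within any epoch the per-round regret is trivially bounded by a constant ($\le (h)^{?}$, since utilities lie in $[0,h]$), so the total cost of all ``bad'' short epochs is $O(nm\,\mathrm{polylog}(T))$, which is lower-order. Making this counting rigorous — and correctly handling the re-initialization of $\bar u^{\DA}$ at each epoch so that the DA convergence ``restarts'' cleanly without accumulating error across epochs — is where the bulk of the technical care will go.
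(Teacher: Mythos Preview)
Your decomposition has the right overall shape, but the assertion that ``Lemma~\ref{lem_dabound} applies verbatim to the instance of DA running in that epoch'' hides the central difficulty. The epoch length $T_k$ is a \emph{random stopping time}: the inner loop terminates when some $N_{i,j}$ hits a power of two, and which $(i,j)$ gets incremented depends on the allocation $i(t)$ chosen by the very DA instance whose convergence you are trying to control. Lemma~\ref{lem_dabound} is an in-expectation bound for a \emph{fixed} horizon; it does not transfer to a horizon correlated with the iterates, and there is no failure probability to union-bound over time. The paper resolves this by abandoning Lemma~\ref{lem_dabound} for step~(ii) and instead deriving a deterministic \emph{pathwise} bound $\|\beta^{t+1}-\beta^{*}\|^2 \le O((C\log t - R_t(\beta))/t)$ valid at every $t$ and every realization (Lemmas~\ref{lem:distance_upper_bound}--\ref{lem_UCB_proof2}), where $R_t$ is the DA online-regret sequence; it then applies Ville's maximal inequality to the nonnegative supermartingale $-R_t + C\log T$ (Lemma~\ref{lem_martingale}) to obtain a tail bound uniform in $t$, hence valid at the random $T_k$. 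A union bound over the $O(nm\log T)$ instances and Cauchy--Schwarz on $\sum_k \sqrt{T_k}$ then yield $\tilde O(\mathrm{poly}(n,m)\sqrt{T})$. Your counting argument about short epochs being few addresses a different (and lesser) issue and does not repair the stopping-time problem.

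Your treatment of term~(i) is also weaker than the paper's and would, as written, fail. A Lipschitz perturbation from $\{v_{i,j}\}$ to the frozen RUCB matrix must control the confidence width of \emph{every} entry of the matrix, including pairs $(i,j)$ that the algorithm never or rarely allocates and whose width therefore stays $\Theta(1)$; your aggregation $\sum_{i,j}\sqrt{\log T/N_{i,j}}$ is the cumulative width along the \emph{pulled} sequence, not a bound on the matrix-wide perturbation that governs the EG optimum. The paper sidesteps this entirely by exploiting \emph{optimism}: under the high-probability event $\mG_1$ the projected UCB matrix is an entrywise upper bound on $\{v_{i,j}\}$, so each epoch's EG optimum satisfies $\prod_i (u_i^{*,\RUCB(k)})^{B_i} \ge \prod_i (u_i^{*,\true})^{B_i}$; a convexity lemma (Lemma~\ref{lem_convexprod}) then shows the same inequality survives the weighted average over instances, making term~(i) nonpositive outright with no perturbation analysis required.
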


\subsection{Discussion on the rate of regret}

In summary, our theoretical contributions are twofold: 
i) DA converges in our setting where the values of agent-type pairs are unobservable beforehand and 
ii) DA-EtC has the upper and lower bound being dependent on the numbers of agents and item types and the time horizon. 
In what follows, we will examine and discuss what these results imply. 
Theorem~\ref{thm_etc} provides DA-EtC has the regret upper bound of $\tilde{O}((nm)^{1/3} T^{2/3})$. The exponent on $m$ here is smaller than that of lower bound of $\tilde{\Omega}((mT)^{1/2})$ in Theorem~\ref{thm_reglower}. This does not contradict because Theorem~\ref{thm_etc} assumes the exploration per parameter $T_0/(nm) = \Omega(1)$, or equivalently $T > nm$. Thus, for any $T, n, m$ such that $T > m$, the upper bound is no smaller than the lower bound.

We consider giving an upper bound for DA-UCB is challenging mainly because the i.i.d. assumption is crucial for deriving the performance bound in Lemma~\ref{lem_dabound}. 
However, if we plug-in the UCB values at each round $t$ to DA-Iter, it no longer satisfy the assumption. Started from \cite{xiao_dual}, most of existing results on DA require the data to be i.i.d. A few notable exceptions are as follows:  \cite{agrawal_depdata} used in DA for mixing process, but the application of their results to a regularized version of DA such as ours is highly non-trivial. Note also that the classes of non-stationarity that \cite{liao2022nonstationary} consider are not directly applicable to our settings.
It might be possible to apply the analyses by \cite{xiao_dual,liao2022nonstationary} to our case. However, the online regret term\footnote{Namely, $R_t(w)$ in Section 4.1 of \citep{liao2022nonstationary}. Note that this is different from the regret in our paper.} is very challenging to bound in our case.
To avoid the nonstationarity, we invented a theoretical algorithm that we call RDA-UCB. Although this algorithm employs multiple instance of DA, the number of instances is $O(nm \log(T)) = \tilde{O}(1)$ as a function of $T$, and thus it does not matter to the leading term of $\sqrt{T}$. As a result, it achieves the optimal rate $\tilde{O}(\sqrt{T})$ with respect to the number of rounds $T$. 
We consider the dependence of RDA-UCB to $n, m$ (available in the appendix) to be suboptimal, and a more practical algorithm is desired.

Let us finally compare our results with the rate of regret in the partial monitoring problems~\citep{bartok2014,komiyama2017}.
In particular, our results is $\tilde{\Theta}(T^{1/2})$ with respect to time horizon~$T$. 
A partial monitoring problem is said to be \textit{easy} if the chosen action itself defines how many items are assigned, e.g., multi-armed bandit problems~\citep{LAI1985}, which corresponds to $\tilde{\Theta}(T^{1/2})$ regret. Otherwise, the problem is hard, which corresponds to $\tilde{\Theta}(T^{2/3})$ regret. This categorization suggests that numerous problems involving parameters that are not directly observable from the optimal decision, such as dynamic pricing, inevitably have a regret complexity of $\tilde{\Theta}(T^{2/3})$.
Our results indicate that the problem we consider belongs to the former (i.e., $\tilde{O}(\sqrt{T})$) class of partial monitoring. 
This is interesting because we have latent parameters (coefficients $(\beta_i)_{i\in N}$) that depend on all agents and thus are not solely determined by the values of particular user $i$. 
This instills hope that a large class of bandit optimizations that entails optimizing latent parameters can achieve $\tilde{O}(\sqrt{T})$ regret. 
We also note that the comparison with bandit convex optimization and our problem is in Section \ref{sec_bandit_convex} in the appendix.

\section{Simulations}\label{sec:simulation}

\begin{figure*}[tb]
\begin{minipage}[t]{0.33\linewidth}
    \centering
    \includegraphics[keepaspectratio,width=\linewidth]{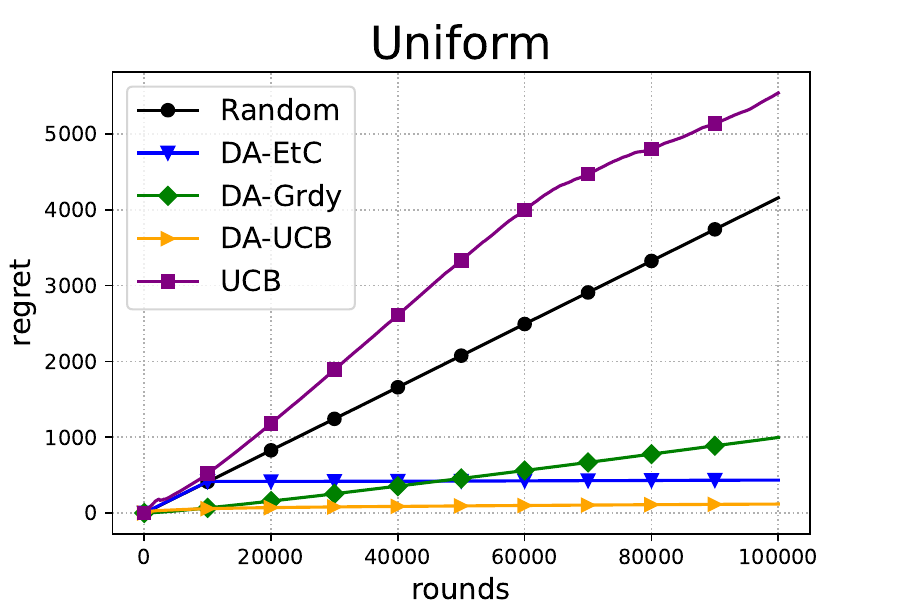}
    \subcaption{Uniform (n=10 and m=10)}\label{fig:all1}
    %\vspace{.3in}
\end{minipage}
\begin{minipage}[t]{0.33\linewidth}
    \centering
    \includegraphics[keepaspectratio,width=\linewidth]{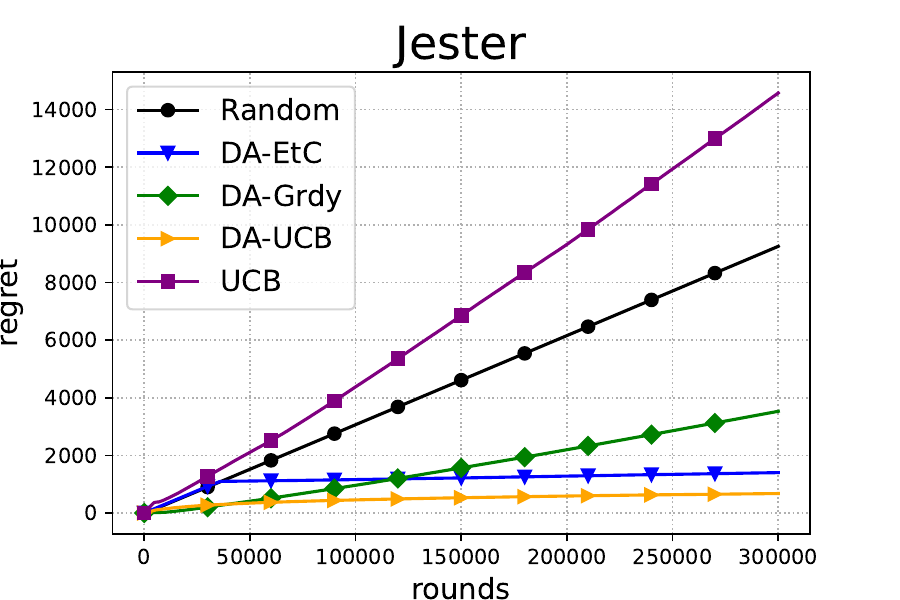}
    \subcaption{Jester (n=10 and m=50)}\label{fig:all2}
    %\vspace{.3in}
\end{minipage}
\begin{minipage}[t]{0.33\linewidth}
    \centering
    \includegraphics[keepaspectratio,width=\linewidth]{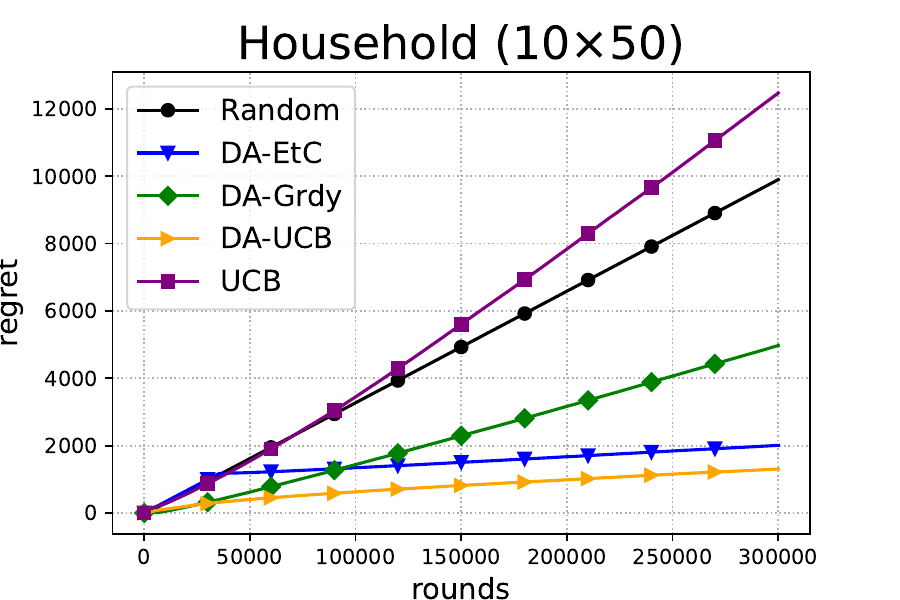}
    \subcaption{Household (n=10 and m=50)}\label{fig:all3}
    %\vspace{.3in}
\end{minipage}
\caption{Regret of all algorithms. % wide ranges (Up to 6000 for Uniform, while up to 14000 for Jester and 12000 for Household).
}\label{fig:all-wide}
\end{figure*}
\begin{figure*}[tb]
\begin{minipage}[t]{0.33\linewidth}
    \centering
    \includegraphics[keepaspectratio,width=\linewidth]{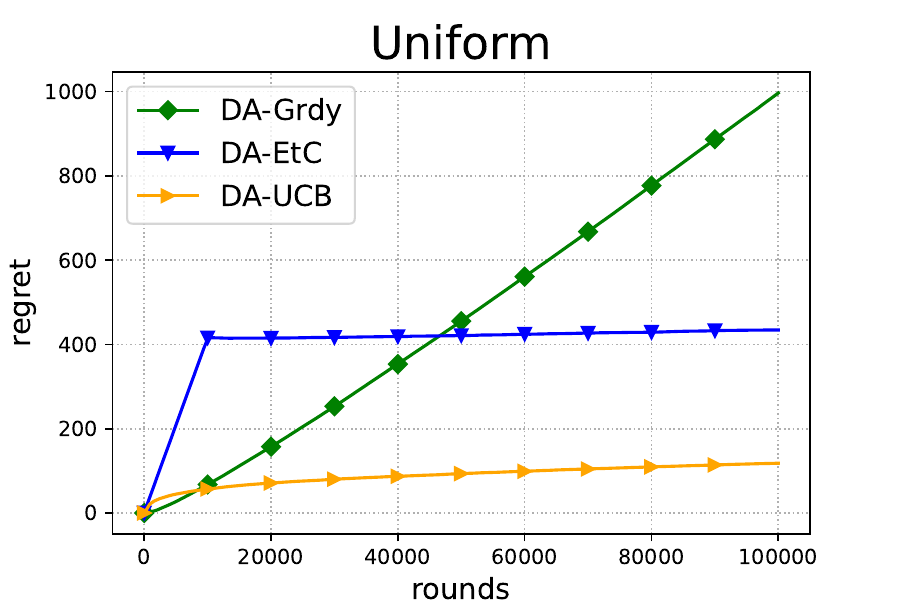}
    \subcaption{Uniform (n=10 and m=10)}\label{fig:EtC_UCB1}
    %\vspace{.3in}
\end{minipage}
\begin{minipage}[t]{0.33\linewidth}
    \centering
    \includegraphics[keepaspectratio,width=\linewidth]{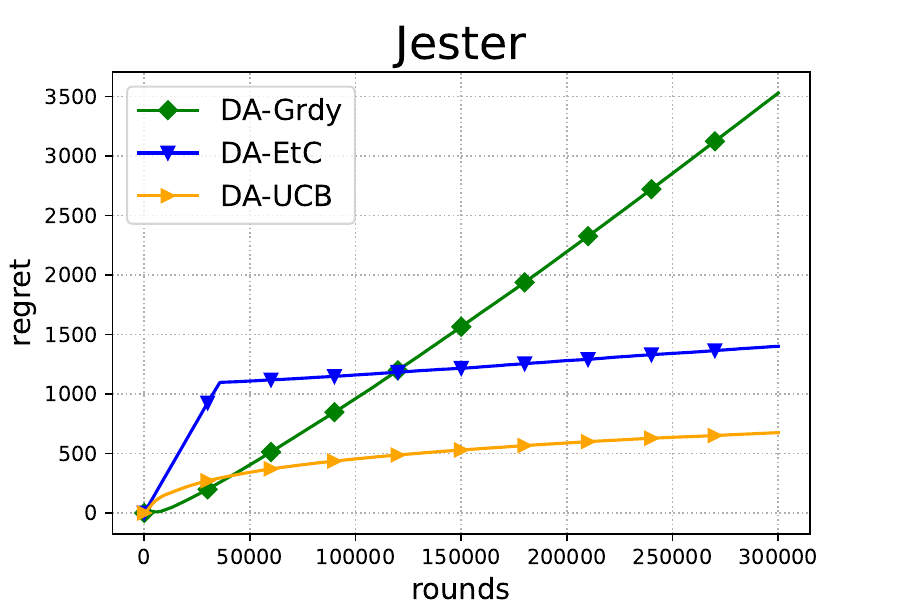}
    \subcaption{Jester (n=10 and m=50)}\label{fig:EtC_UCB2}
    %\vspace{.3in}
\end{minipage}
\begin{minipage}[t]{0.33\linewidth}
    \centering
    \includegraphics[keepaspectratio,width=\linewidth]{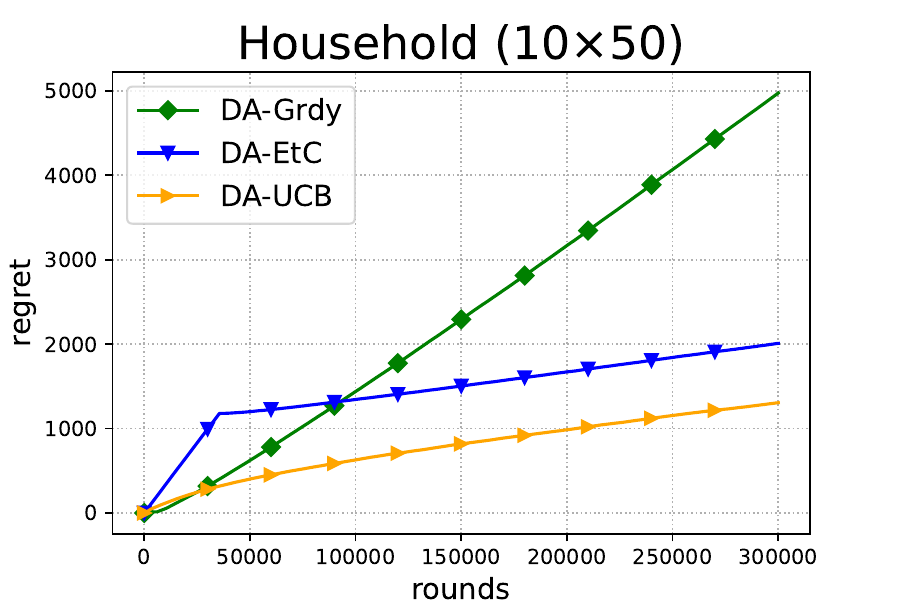}
    \subcaption{Household (n=10 and m=50)}\label{fig:EtC_UCB3}
    %\vspace{.3in}
\end{minipage}
\caption{
Regret of the three best algorithms, with the lines vertically enlarged to provide a clearer view of the data. The numbers presented are the same as Figure~\ref{fig:all-wide}. 
%on narrow ranges (Up to 1000 for Uniform, while up to 4000 for Jester and Household).
}\label{fig:all-narrow}
\end{figure*}

This section evaluates DA-EtC and DA-UCB in several synthetic and real datasets, \textit{Uniform}, \textit{Jester}, and \textit{Household}. 
For all data sets, we assume that the types of items are uniformly distributed. That is, $s_j = 1/m$ for all $j$. We set the objective as $B_i=1/n$ for all $i$.
%The constant parameter $\delta_0$, which defines the range of $\beta_i$ for each agent $i$, is set to $0.95$.
The projection range of $\beta_i$ in DA-Iter is set to $[B_i/(1+0.95), (1+0.95)]$, which is wide enough to cover all datasets. % ($\delta_0=0.95$ and $h=l=1$).
The noise $\varepsilon_t\in\{0,1\}$ for realized utilities is determined according to a Bernoulli distribution, whose probability that the event $\varepsilon_t=1$ occurs is specified by each associated true value $v_{i,j}$, since it is normalized in $[0,1]$.
The length of the trial rounds $T_0$ for DA-EtC is set to $T^{2/3}(nm)^{1/3}$.
We adjust the numbers of agents ($n$) and item types ($m$), as well as the time horizon ($T$), according to the dataset used.
Figures~\ref{fig:all-wide} and \ref{fig:all-narrow} display the averaged amounts of regrets over 20 problem instances with different random seeds. 

%\subsection{Benchmark algorithms}
\textbf{Benchmark algorithms:}
%\label{sec:methods}
We have prepared three naive algorithms called \textit{Random}, \textit{UCB}, and \textit{DA-Grdy}. 
Random allocates each of the arriving items to agents at uniformly random. 
UCB allocates each one to the agent whose UCB value is the highest for the item type. UCB values, or each estimator $\hat{v}_{i,j}$, are initialized to one and are updated according to the sampled pairs of an agent and an item type as described in Line 8 in Algorithm 4. The UCB algorithm is designed to find an allocation that maximizes social welfare. 
As noted in Section~\ref{sec:problem setup}, the resulting allocation from UCB is expected to yield a lower NSW than DA-UCB since some of the agents cannot receive enough amounts of items under the SW maximizing allocation.
DA-Grdy does not stop updating the estimator $\hat{v}_{i,j}(t)$ at each round, unlike DA-EtC. 
%After executing the DA-Iter function, DA-Grdy updates the estimator from the resulting allocation at every round $t$.

\textbf{Datasets:}
%Let us briefly describe the three datasets we have used: 
\textit{Uniform}, \textit{Jester}%~\citep{goldberg2001eigentaste}
, and \textit{Household}%~\citep{DBLP:journals/corr/abs-1901-06230}
. 
First, to generate the Uniform dataset, we drew a set of values $\{v_{i,j}\}\in[0,1]^{n\times m}$ at uniformly random, where we set $n=10$, $m=10$. We ran algorithms up to time horizon $T=100000$. 

Second, Jester dataset was built for recommender systems and collaborative filtering studies~\citep{goldberg2001eigentaste}. We focus on the dataset that contains the ratings of 100 jokes by about 25000 individuals. Among them, 7200 individuals rate all 100 jokes (\url{https://eigentaste.berkeley.edu/dataset/jester_dataset_1_1.zip}~\citep{DBLP:journals/corr/abs-1901-06230}). We randomly select 10 out of 7200 individuals and 50 out of 100 jokes ($n=10$ and $m=50$). Time horizon $T$ is set to $300000$. 
Since the values of ratings lie between $-10$ and $10$, they are normalized to $[0,1]$ for our simulation. 
% https://eigentaste.berkeley.edu/dataset/

Finally, Household dataset consists of data from 2876 individuals regarding their estimated willingness-to-pay for 50 household items that were selected from an online review site~\citep{DBLP:journals/corr/abs-1901-06230}.\footnote{The exact dataset was provided by \citep{DBLP:journals/corr/abs-1901-06230} upon our request.} We herein randomly select 10 out of 2876 individuals ($n=10$ and $m=50$). Time horizon $T$ is set to $300000$ as well as Jester. 
Since the scores of willingness-to-pay lies between $0$ and $100$, we normalized it to $[0,1]$.

\textbf{Results:}

Figure~\ref{fig:all-wide} depicts Regret across the five algorithms for each of three datasets. Random and UCB are designed as a naive benchmark, and they incur significant regret compared to the DA-based algorithms. Notably, UCB performs worse than Random that is completely ignorant of the agents' preferences. This is because UCB maximizes social welfare (the sum of agents' utilities) instead of NSW (the product of them), and it often assigns items unequally, resulting in several agents receiving very small total utilities.
% 
% Figure 2
Figure~\ref{fig:all-narrow} depicts the same as Figure~\ref{fig:all-wide} except that the lines are vertically enlarged for a clearer view of the data. 
Apparently, DA-EtC and DA-UCB outperform DA-Grdy and achieve lower regret in the long run, although they do not in the short term. This is because DA-EtC and DA-UCB incur the cost for exploration before exploitation, whereas DA-Grdy attempts to exploit from the start.

DA-Grdy plug-ins empirical mean $\hat{v}_{i,j}(t)$ to DA-Iter for each round.
This algorithm suffers underexploration. For example, assume that $\hat{v}_{i,j}(t)$ is $0$ for some $i,j$ at some round $t$. DA is unlikely to assign future items of type $j$ to user $i$, which will prevent $\hat{v}_{i,j}$ from being updated. If such underestimation occurs with a probability of $\Theta(1)$ for some $(i,j)$, DA-Grdy can continue to approach an incorrect optimization, which results in a regret of $\Theta(T)$.

% and that DA-Grdy performs worse. 
\section{Conclusion}

This paper has considered an online fair division problem where the values of items are unknown beforehand. In this problem, there is a natural notion of regret that measures how fast we can find the optimal allocation that asymptotically maximizes NSW. 

We proposed the algorithms to allocate items via dual averaging with its utility estimated from the past obtained utilities. We proposed two algorithms: DA-EtC and DA-UCB. The former is designed to feed i.i.d., data to DA. EA-EtC has the $\tilde{O}((nm)^{1/3}T^{2/3})$ regret, and the latter does not have such a regret bound but empirically performs better. Furthermore, we derived a $\tilde{O}((mT)^{1/2})$ regret lower bound irrespective of which algorithm is chosen. A version of DA-UCB called RDA-UCB achieves $\tilde{O}(\mathrm{poly}(n,m)\sqrt{T})$ regret, which is optimal with respect to $T$.

Our results call for subsequent research, including but not limited to: deriving a regret bound for the DA-UCB, and investigating the efficacy of structural models, such as linear models and factorized models.

\clearpage 
%%%

\bibliographystyle{abbrvnat}
\bibliography{reference_arxiv}

\begin{thebibliography}{29}
\providecommand{\natexlab}[1]{#1}
\providecommand{\url}[1]{\texttt{#1}}
\expandafter\ifx\csname urlstyle\endcsname\relax
  \providecommand{\doi}[1]{doi: #1}\else
  \providecommand{\doi}{doi: \begingroup \urlstyle{rm}\Url}\fi

\bibitem[Agarwal and Duchi(2013)]{agrawal_depdata}
A.~Agarwal and J.~C. Duchi.
\newblock The generalization ability of online algorithms for dependent data.
\newblock \emph{IEEE Transactions on Information Theory}, 59\penalty0
  (1):\penalty0 573--587, 2013.

\bibitem[Aleksandrov and Walsh(2020)]{aleksandrov:aaai:2020}
M.~Aleksandrov and T.~Walsh.
\newblock Online fair division: A survey.
\newblock \emph{Proceedings of the AAAI Conference on Artificial Intelligence},
  34:\penalty0 13557--13562, 2020.

\bibitem[Bart{\'{o}}k et~al.(2014)Bart{\'{o}}k, Foster, P{\'{a}}l, Rakhlin, and
  Szepesv{\'{a}}ri]{bartok2014}
G.~Bart{\'{o}}k, D.~P. Foster, D.~P{\'{a}}l, A.~Rakhlin, and
  C.~Szepesv{\'{a}}ri.
\newblock Partial monitoring - classification, regret bounds, and algorithms.
\newblock \emph{Mathematics of Operations Research}, 39\penalty0 (4):\penalty0
  967--997, 2014.

\bibitem[Bateni et~al.(2022)Bateni, Chen, Ciocan, and Mirrokni]{bateni2022fair}
M.~H. Bateni, Y.~Chen, D.~F. Ciocan, and V.~Mirrokni.
\newblock Fair resource allocation in a volatile marketplace.
\newblock \emph{Operations Research}, 70\penalty0 (1):\penalty0 288--308, 2022.

\bibitem[Bouveret and Lema^^c3^^aetre(2016)]{bouveret:comsoc:2016}
S.~Bouveret and M.~Lema^^c3^^aetre.
\newblock Efficiency and sequenceability in fair division of indivisible goods
  with additive preferences.
\newblock In \emph{Proceedings of the Sixth International Workshop on
  Computational Social Choice}, 2016.

\bibitem[Brams and Taylor(1996)]{brams:1996}
S.~J. Brams and A.~D. Taylor.
\newblock \emph{Fair division - from cake-cutting to dispute resolution.}
\newblock Cambridge University Press, 1996.

\bibitem[Budish(2011)]{budish:jpe:2011}
E.~Budish.
\newblock {The Combinatorial Assignment Problem: Approximate Competitive
  Equilibrium from Equal Incomes}.
\newblock \emph{Journal of Political Economy}, 119\penalty0 (6):\penalty0
  1061--1103, 2011.

\bibitem[Caragiannis et~al.(2019)Caragiannis, Kurokawa, Moulin, Procaccia,
  Shah, and Wang]{Caragiannis:1:2019}
I.~Caragiannis, D.~Kurokawa, H.~Moulin, A.~D. Procaccia, N.~Shah, and J.~Wang.
\newblock The unreasonable fairness of maximum nash welfare.
\newblock \emph{ACM Transactions on Economics and Computation (TEAC)},
  7\penalty0 (3):\penalty0 1--32, 2019.

\bibitem[Codenotti and Varadarajan(2007)]{codenotti:2007}
B.~Codenotti and K.~Varadarajan.
\newblock \emph{Computation of Market Equilibria by Convex Programming}, pages
  135--^^e2^^80^^93158.
\newblock Cambridge University Press, 2007.

\bibitem[Eisenberg and Gale(1959)]{eisenberg:ams:1959}
E.~Eisenberg and D.~Gale.
\newblock Consensus of subjective probabilities: The pari-mutuel method.
\newblock \emph{The Annals of Mathematical Statistics}, 30\penalty0
  (1):\penalty0 165--168, 1959.

\bibitem[Gao et~al.(2021)Gao, Peysakhovich, and Kroer]{gao:pace:2021}
Y.~Gao, A.~Peysakhovich, and C.~Kroer.
\newblock Online market equilibrium with application to fair division.
\newblock In M.~Ranzato, A.~Beygelzimer, Y.~Dauphin, P.~Liang, and J.~W.
  Vaughan, editors, \emph{Advances in Neural Information Processing Systems},
  volume~34, pages 27305--27318, 2021.

\bibitem[Goldberg et~al.(2001)Goldberg, Roeder, Gupta, and
  Perkins]{goldberg2001eigentaste}
K.~Goldberg, T.~Roeder, D.~Gupta, and C.~Perkins.
\newblock Eigentaste: A constant time collaborative filtering algorithm.
\newblock \emph{Information Retrieval}, 4:\penalty0 133--151, 2001.

\bibitem[Hazan and Levy(2014)]{hazan2014}
E.~Hazan and K.~Y. Levy.
\newblock Bandit convex optimization: Towards tight bounds.
\newblock In \emph{Advances in Neural Information Processing Systems 27: Annual
  Conference on Neural Information Processing Systems 2014}, pages 784--792,
  2014.

\bibitem[Jain and Vazirani(2010)]{GEB:2010}
K.~Jain and V.~V. Vazirani.
\newblock Eisenberg^^e2^^80^^93gale markets: Algorithms and game-theoretic
  properties.
\newblock \emph{Games and Economic Behavior}, 70:\penalty0 84--106, 2010.

\bibitem[Kash et~al.(2014)Kash, Procaccia, and Shah]{kash:jair:2014}
I.~Kash, A.~D. Procaccia, and N.~Shah.
\newblock No agent left behind: Dynamic fair division of multiple resources.
\newblock \emph{Journal of Artificial Intelligence Research}, 51\penalty0
  (1):\penalty0 579^^e2^^80^^93--603, 2014.

\bibitem[Kaufmann et~al.(2016)Kaufmann, Capp{\'{e}}, and Garivier]{kaufmann16}
E.~Kaufmann, O.~Capp{\'{e}}, and A.~Garivier.
\newblock On the complexity of best-arm identification in multi-armed bandit
  models.
\newblock \emph{Journal of Machine Learning Research}, 17:\penalty0 1:1--1:42,
  2016.

\bibitem[Komiyama et~al.(2015)Komiyama, Honda, and Nakagawa]{komiyama2017}
J.~Komiyama, J.~Honda, and H.~Nakagawa.
\newblock Regret lower bound and optimal algorithm in finite stochastic partial
  monitoring.
\newblock In C.~Cortes, N.~Lawrence, D.~Lee, M.~Sugiyama, and R.~Garnett,
  editors, \emph{Advances in Neural Information Processing Systems}, volume~28.
  Curran Associates, Inc., 2015.

\bibitem[Kroer et~al.(2021)Kroer, Peysakhovich, Sodomka, and
  Moses]{DBLP:journals/corr/abs-1901-06230}
C.~Kroer, A.~Peysakhovich, E.~Sodomka, and N.~E.~S. Moses.
\newblock Computing large market equilibria using abstractions.
\newblock \emph{Operations Research}, 70\penalty0 (1):\penalty0 329--351, 2021.

\bibitem[Lai and Robbins(1985)]{LAI1985}
T.~Lai and H.~Robbins.
\newblock Asymptotically efficient adaptive allocation rules.
\newblock \emph{Advances in Applied Mathematics}, 6\penalty0 (1):\penalty0
  4--22, 1985.

\bibitem[Liao et~al.(2022)Liao, Gao, and Kroer]{liao2022nonstationary}
L.~Liao, Y.~Gao, and C.~Kroer.
\newblock Nonstationary dual averaging and online fair allocation.
\newblock In A.~H. Oh, A.~Agarwal, D.~Belgrave, and K.~Cho, editors,
  \emph{Advances in Neural Information Processing Systems}, 2022.

\bibitem[Moulin(2003)]{moulin:2003}
H.~Moulin.
\newblock \emph{{Fair Division and Collective Welfare}}.
\newblock The MIT Press, 2003.

\bibitem[Nash(1950)]{nash:ecma:1950}
J.~F. Nash.
\newblock The bargaining problem.
\newblock \emph{Econometrica}, 18\penalty0 (2):\penalty0 155--162, 1950.

\bibitem[Procaccia and Wang(2014)]{procaccia:ec:2014}
A.~D. Procaccia and J.~Wang.
\newblock Fair enough: Guaranteeing approximate maximin shares.
\newblock In \emph{Proceedings of the Fifteenth ACM Conference on Economics and
  Computation}, pages 675--^^e2^^80^^93692, 2014.

\bibitem[Sinclair et~al.(2022)Sinclair, Banerjee, and
  Yu]{sinclair:sigmetric:2022}
S.~R. Sinclair, S.~Banerjee, and C.~L. Yu.
\newblock Sequential fair allocation: Achieving the optimal envy-efficiency
  tradeoff curve.
\newblock \emph{Operations Research}, 2022.
\newblock To appear.

\bibitem[Vazirani(2007)]{vazirani:2007}
V.~V. Vazirani.
\newblock \emph{Combinatorial Algorithms for Market Equilibria}, chapter~5,
  pages 103--^^e2^^80^^93134.
\newblock Cambridge University Press, 2007.

\bibitem[Vazirani(2012)]{vazirani:acm:2012}
V.~V. Vazirani.
\newblock The notion of a rational convex program, and an algorithm for the
  arrow-debreu nash bargaining game.
\newblock \emph{Journal of the ACM}, 59\penalty0 (2), 2012.

\bibitem[Vishnoi(2021)]{vishnoi:2021}
N.~K. Vishnoi.
\newblock \emph{Algorithms for Convex Optimization}.
\newblock Cambridge University Press, 2021.

\bibitem[Wang et~al.(2022)Wang, Sharma, Xu, Badam, Sun, Richardson, Chung, Chi,
  and Chen]{yuyan2022kdd}
Y.~Wang, M.~Sharma, C.~Xu, S.~Badam, Q.~Sun, L.~Richardson, L.~Chung, E.~H.
  Chi, and M.~Chen.
\newblock Surrogate for long-term user experience in recommender systems.
\newblock In \emph{Proceedings of the 28th ACM SIGKDD Conference on Knowledge
  Discovery and Data Mining}, page 4100^^e2^^80^^934109, 2022.

\bibitem[Xiao(2009)]{xiao_dual}
L.~Xiao.
\newblock Dual averaging method for regularized stochastic learning and online
  optimization.
\newblock In Y.~Bengio, D.~Schuurmans, J.~Lafferty, C.~Williams, and
  A.~Culotta, editors, \emph{Advances in Neural Information Processing
  Systems}, volume~22, 2009.

\end{thebibliography}

\onecolumn
\appendix\
\section{Notation table}
 \label{sec_notation}
Table~\ref{tbl_not} summarizes our notation.

\begin{table}[H]
\begin{center}
\caption{Major notation
}
\label{tbl_not}
\renewcommand{\arraystretch}{1.1} 
\begin{tabular}{lll} 
symbol & definition %& first appearance
\\ \hline
$N$ & set of agents \\
$M$ &  set of the types of items\\
$v_{i,j}$&  ex ante (expected) value item of type $j$ for agent $i$\\
$T$ & time horizon (the number of rounds)\\
$t \in T$ & each round in $T$\\
$i(t) \in N$ & winner, or agent who is allocated the item at round $t$\\
$j(t) \in M$ & item type which arrives at round $t$ \\
$s_j \in \mathcal{S}$ &  probability distribution where item type $j$ is drawn\\
$u_i(t)$&  ex post (realized) utility of agent~$i$ at round $t$\\
$\epsilon_t$& sub-Gaussian random variable
with its radius $\sigma^2$\\
$U_i(T)$& cumulative utility of agent~$i$ in $T$ rounds\\
$B_i$&  priority or per-period budget rate given to agent~$i$\\
$x_{i,j}$& fraction of item type~$j$ allocated to agent~$i$ in the EG program (Eq.~\eqref{ineq_ONSW}) \\
% $c_j$&  average item supply in EG program (Eq.~\eqref{ineq_ONSW})\\
$p_j$& price of item type~$j$ in $D_{EG}$\\
$\beta_i$&  multiplier in DA-Iter (Algorithm \ref{DAiter}) \\
$\hat{v}^{\DA}_{i,j(t)}$&  argument value $v_{i,j}$ in DA-Iter (Algorithm \ref{DAiter})\\
$\bar{u}^{\DA}_i$& mean utility of $u^{\DA}_i(t)$ (Algorithm \ref{DAiter})\\
$T_0$& exploration rounds in DA-EtC (Algorithm \ref{DA-EtC})\\
$\hat{v}_{i,j}^{\EtC}$& Estimator in DA-EtC (Algorithm \ref{DA-EtC})\\
$\vucb_{i,j}$&  UCB value in DA-UCB (Algorithm \ref{DA-UCB})\\
$N_{i,j}(t)$& number of times agent~$i$ received an item of type~$j$ up to round $t-1$ \\
$u^{*,\DA}$& solution of EG with estimators $\{\hat{v}_{i,j}^{\EtC}\}_{i,j}$\\
$u^{*,\true}$ &  utility of EG problem with $v_{i,j}$ \\
$u^{*,\RUCB}$ & Defined in Eq.~\eqref{ineq_rucb_meanopt}.
\\
$u^{*,\RUCB(k)}$ & utility of EG problem with corresponding UCB values of $k$th instance of DA
\\
$u^{\DA,\true}$ & mean utility that agents receive when we run the algorithm (Eq.~\eqref{ineq_meanutility}) \\
$u^{\DA,\RUCB}$ & Defined in Eq.~\eqref{ineq_rucb_meanda}  
\\
$u^{\DA,\RUCB(k)}$ & mean utility of the $k$th instance of Dual Averaging. 
\\
$\hat{v}_{i,j}^{\EtC}$& Estimator in DA-EtC (Algorithm \ref{DA-EtC})\\
$\vucb_{i,j}(t)$&  UCB value of round $t$ in DA-UCB (Algorithm \ref{DA-UCB})\\
$\hat{v}_{i,j}^{\RUCB}(t)$ &  UCB value of round $t$ in RDA-UCB (Algorithm \ref{RDA-UCB}) \\
$T_k$ & number of rounds in which $k$th instance of DA was run \\

\end{tabular}
\end{center}
\end{table}

\section{Relation with Bandit Convex Optimization}
\label{sec_bandit_convex}

This section discusses the online optimization (i.e., regret minimization) in this paper and bandit convex optimization (BCO) \cite{hazan2014}, which is known to have an $O(\sqrt{T})$ bound under some assumptions.
\begin{itemize}
\item BCO is more challenging than our optimization in the sense that the loss function is given adversarially. This means that stochastic bandit algorithms, such as UCB, cannot be directly applied to BCO.
\item On the contrary, our regret with respect to fair division optimizes latent parameters ($\beta_i$) that depend on the data sequence. This is challenging because we cannot directly observe the loss for each round.
\end{itemize}
In summary, the difficulty of BCO and our setting cannot be directly compared. Therefore, our bounds ($\tilde{O}(T^{2/3})$ upper bound and $\tilde{\Omega}(T^{1/2})$ lower bound) are nontrivial.

\section{Proofs on DA-EtC}

\subsection{Proof of Theorem \ref{thm_reglower}}
\begin{proof}[Proof of Theorem \ref{thm_reglower}]
%We assume $T \ge NM$.
%We limit our interest to the following models. 
We consider the case in which the type of items is equally distributed; $s_j = 1/m$ for all $j \in [M]$, and $B_i = 1/n$ for all $i \in [N]$. 
We consider the case where all feedback is binary.
In this proof, we use the term ``model'' to denote a value matrix $\{v_{i,j}\}_{i,j} \in \mathbb{R}^{n \times m}$.
We consider the following classes of models, $v:\ \forall_{i,j}\,v_{i,j} \in \{1/2, 1/2 (1 + \sqrt{m/T})\}^{nm}$.
%, where $Z_i \in [1, 1 + \sqrt{M/T}]$ is a normalization factor such that $(1/M) \sum_j v_{i,j} = 1/2$.
We call $v^{(0)}:\forall_{i,j}\,v_{i,j}^{(0)} = 1/2$ the \textit{base} model.
%todo 正規化 (1/M) \sum_j v_i,j = 

The following lemma, which is a version of Lemma 19 in ~\cite{kaufmann16}, is used during the proof.
\begin{lemma}{\rm (lower bound on any event)}\label{lem_kldiv}
Let $v^{(1)}, v^{(2)}$ be two models. 
Let $\Ex_{v^{(1)}}, \Ex_{v^{(2)}}$ be the corresponding expectations and $\mathbb{P}_{v^{(1)}}, \mathbb{P}_{v^{(2)}}$ be the corresponding probabilities, respectively. 
Then, the following inequality holds for any event $\mE$.
\begin{equation}
\sum_{i\in[n], j\in[m]} 
\Ex_{v^{(1)}}\left[
N_{i,j}(T+1)
\right]
d(v_{i,j}^{(1)}, v_{i,j}^{(2)})
\ge
d(\mathbb{P}_{v^{(1)}}(\mE), \mathbb{P}_{v^{(2)}}(\mE)),
\end{equation}
where $d(p,q) = p \log(p/q) + (1-p) \log((1-p)/(1-q))$ is the KL divergence between two Bernoulli distributions.
\end{lemma}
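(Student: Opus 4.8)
The plan is to follow the standard two-step route used for information-theoretic bandit lower bounds: the ``divergence decomposition'' identity together with the data-processing inequality for relative entropy, exactly as in \cite{kaufmann16}. First I would fix an algorithm and set up the canonical trajectory space: let $H_T=(j(1),i(1),u(1),\dots,j(T),i(T),u(T))$ be the history produced in $T$ rounds, and observe that under either model $v^{(\ell)}$, $\ell\in\{1,2\}$, the law of $H_T$ factorizes round by round into (i) the type draw $j(t)\sim\mS$, (ii) the algorithm's (possibly randomized) choice $i(t)$ given $(H_{t-1},j(t))$, and (iii) the binary feedback $u(t)$, which conditionally on $\{i(t)=i,\,j(t)=j\}$ is $\mathrm{Ber}(v^{(\ell)}_{i,j})$. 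Factors (i) and (ii) are identical across the two models, so only (iii) contributes to the log-likelihood ratio.

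The first key step is the divergence decomposition. Applying the chain rule for relative entropy and cancelling the common factors (i)--(ii) gives
\begin{equation*}
\mathrm{KL}\big(\Prob_{v^{(1)}}^{H_T}\,\big\|\,\Prob_{v^{(2)}}^{H_T}\big) = \Ex_{v^{(1)}}\Big[\sum_{t=1}^{T} d\big(v^{(1)}_{i(t),j(t)},\,v^{(2)}_{i(t),j(t)}\big)\Big],
\end{equation*}
where for each $t$ the conditional expectation given $(i(t),j(t))$ identifies the $t$-th log-ratio with the Bernoulli KL $d(\cdot,\cdot)$. Then I would regroup the rounds by the pair $(i,j)$ that was played: writing $N_{i,j}(T+1)=\sum_{t\le T}\Ind[i(t)=i,\,j(t)=j]$ and pulling $\Ex_{v^{(1)}}$ past the random counts via a Wald-type / tower argument, the right-hand side becomes $\sum_{i,j}\Ex_{v^{(1)}}[N_{i,j}(T+1)]\,d(v^{(1)}_{i,j},v^{(2)}_{i,j})$, which is the left-hand side of the lemma.

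The second step is to contract. For any $\mE\in\sigma(H_T)$, the indicator $\Ind[\mE]$ is a measurable function of $H_T$, so the data-processing inequality for relative entropy yields $\mathrm{KL}(\Prob_{v^{(1)}}^{H_T}\|\Prob_{v^{(2)}}^{H_T})\ge \mathrm{KL}(\mathrm{Ber}(\Prob_{v^{(1)}}(\mE))\|\mathrm{Ber}(\Prob_{v^{(2)}}(\mE)))=d(\Prob_{v^{(1)}}(\mE),\Prob_{v^{(2)}}(\mE))$; chaining this with the displayed identity finishes the proof. I expect the only delicate point to be the divergence-decomposition step: one has to be careful that both models genuinely share the type distribution \emph{and} the policy so those factors cancel, and that the passage from $\sum_{t\le T}$ to $\sum_{i,j}\Ex_{v^{(1)}}[N_{i,j}(T+1)]$ is justified by the tower rule --- since $N_{i,j}$ is random and correlated with the per-round divergences --- rather than by naive linearity of expectation. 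The data-processing inequality and the Bernoulli-KL identity are entirely standard.
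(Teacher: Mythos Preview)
Your proposal is correct and is exactly the standard argument of \cite{kaufmann16} that the paper cites; the paper itself does not reproduce a proof but simply states the lemma as ``a version of Lemma~19 in~\cite{kaufmann16}'', so there is nothing to compare against beyond that reference. One small remark: the regrouping step you flag as delicate is in fact pure linearity of expectation, since after indexing by the pair $(i,j)$ the factors $d(v^{(1)}_{i,j},v^{(2)}_{i,j})$ are deterministic constants and $\sum_{t\le T} d(v^{(1)}_{i(t),j(t)},v^{(2)}_{i(t),j(t)})=\sum_{i,j} N_{i,j}(T+1)\,d(v^{(1)}_{i,j},v^{(2)}_{i,j})$ holds pathwise --- no Wald-type argument is needed.
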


Let $j_i = \argmin_{j'} \Ex_{v^{(0)}}[N_{i,j'}(T+1)]$ be the type of item that agent $i$ receives the least number of times during $T$ under the base model. 
By definition, $\sum_i N_{i,j_i}(T+1) \le T/m$. Consider another model $v^{(a)}$ where $v_{i,j_i} = (1 + \sqrt{m/T})/2$ for each $i \in N$ and $v_{i,k} = 1/2$ for $k \ne j_i$. We have
\begin{align}
\lefteqn{
\sum_{i\in[n], j\in[m]} 
\Ex_{v^{(0)}}\left[
N_{i,j}(T+1)
\right]
d(v_{i,j}^{(0)}, v_{i,j}^{(a)})
}\\
&\le (T/m) d(1/2, 1/2 (1 + \sqrt{m/T}) )\\
&\le (T/m) \times O(m/T) \text{\ \ \ \ (by $d(1/2, 1/2+\alpha) = O(\alpha^2)$)}\\
&= O(1). \label{ineq_divconst}
\end{align}

Consider the event
\[
\mE = \left\{
\sum_i N_{i,j_i}(T+1) \le 2T/m
\right\}.
\]
By definition, $\Prob_{v^{(0)}}[\mE] \ge 1/2$. Lemma \eqref{lem_kldiv} and Eq.~\eqref{ineq_divconst} implies that 
\[
d(\Prob_{v^{(0)}}[\mE], \Prob_{v^{(a)}}[\mE]) = O(1),
\]
which implies that
\[
\Prob_{v^{(a)}}[\mE] = \Omega(1).
\]
Under event $\mE$ on the alternative model, the regret is at least $\Omega(\sqrt{mT})$, which completes the proof.

\end{proof} %thm_reglower

\subsection{Proof of \thmref{thm_etc}}
\begin{proof}[Proof of \thmref{thm_etc}]

Let $l' = 2l, h' = h/2$.
EtC uniformly explores during the first $T_0$ rounds, and the estimator $\hat{v}_{i,j} = \hat{v}_{i,j}(T_0+1)$ is based on $N_{i,j}(T_0+1) \approx T_0/(nm)$ samples. 
Some care is needed because $N_{i,j}(T_0+1)$ itself is a random variable.
Let 
\begin{align}
\mA &= 
\bigcap_{i,j}
\left\{
N_{i,j}(T_0+1) 
\ge \frac{T_0}{2 nm}
%\left(1 -
%\sqrt{\frac{nn\log(nmT)}%{2T_0}}\right)
\right\}\\
\mB &= 
\bigcap_{i,j}
\left\{
\left|
\vep_{i,j}(T_0+1) - v_{i,j}
\right| \le \sqrt{\frac{8\sigma^2 nm \log (nmT)}{T_0}}
%\,\biggl|\, N_{i,j}(T_0+1) = n
\right\}.
\end{align}
Since $N_{i,j}(T_0+1)$ is a sum of $T_0$ binary random variables with its mean $1/(nm)$ (i.e., item is of type $j$, and agent $i$ won it), by the multiplicative Chernoff inequality, 
% https://en.wikipedia.org/wiki/Chernoff_bound#Multiplicative_form_(relative_error)
\[
N_{i,j}(T_0+1) 
< \frac{T_0}{2nm}
\] 
holds with probability at most $\exp(-T_0/(8nm)) < 1/(nmT)$, and Event $\mA$ holds with probability at least $1- 1/T$ by considering the union bound of it over $i \in N, j \in M$.

Second, we show that $\mB$ holds with a high-probability given $\mA$.
Given $n'$ i.i.d. samples with its mean $v_{i,j}$ and sub-Gaussian radius $\sigma$, we have
\[
|\vep_{i,j}(T_0+1) - v_{i,j}|
\le 
\sqrt{\frac{4 \sigma^2 \log(nmT)}{{n'}}}
\]
with probability at least $1-1/(nmT)^2$. Union bound of this over possible random value $n' = N_{i,j}(T_0+1) \in [T]$ yields 
\[
|\vep_{i,j}(T_0+1) - v_{i,j}|
\le 
\sqrt{\frac{8 \sigma^2 nm\log(nmT)}{T_0}}
\,\biggl|\, N_{i,j}(T_0+1) \ge \frac{T_0}{2nm}
\]
and taking its union bound over $i\in N,j\in M$ yields $\mB$ that holds with probability at least $1 - 1/(nmT)$ given $\mA$.
%(todo more precise) 
In the following steps, we assume $\mB$ because the probability that $\mB$ does not hold is $O(1/T)$, and the regret in this is at most $O(T) \times O(1/T) = O(1)$, which is negligible. 

Assuming that $\mB$ holds at the end of round $T_0$, we bound the regret. 
\begin{align}
\lefteqn{
\frac{\Regret(T)}{T} 
}\nn
&= \prod_i (u^{*,\true}_i)^{B_i} 
-
\frac{1}{T}
\prod_{i \in N} U_i(T)^{B_i}\nn
&\le \prod_i (u^{*,\true}_i)^{B_i} 
-
\frac{1}{T}
\prod_{i \in N} ((T-T_0) \bar{u}_i^{\EtC})^{B_i}
\text{\ \ \ \ \ \ \ (the sum of utility in the first $T_0$ rounds is non-negative)}
\nn
&= \prod_i (u^{*,\true}_i)^{B_i} 
-
\frac{T-T_0}{T}
\prod_{i \in N} (\bar{u}_i^{\EtC})^{B_i}
\text{\ \ \ \ \ \ \ (by $\sum_i B_i = 1$)}
\nn
&=
\frac{T_0}{T}
\prod_i (u^{*,\true}_i)^{B_i} 
+
\frac{T-T_0}{T}\left(
\prod_i (u^{*,\true}_i)^{B_i}
-
\prod_i (\bar{u}_i^{\EtC})^{B_i}
\right)
\nn
&\le
\frac{T_0}{T} \prod_i (u^{*,\true}_i)^{B_i} 
+
\left|
\prod_i (u^{*,\true}_i)^{B_i} 
- 
\prod_i (\bar{u}_i^{\EtC})^{B_i}
\right|.
\end{align}
Here, the second term is bounded as
\begin{align}
\prod_i (u^{*,\true}_i)^{B_i} 
- 
\prod_i (\bar{u}_i^{\EtC})^{B_i}
&= \left(\prod_i (u^{*,\true}_i)^{B_i} - \prod_i (u_i^{*,\DA})^{B_i}\right) 
+ \left(\prod_i (u_i^{*,\DA})^{B_i} - \prod_i (\bar{u}_i^{\DA})^{B_i}\right) \\
&\ \ \ \ \ \ \ \ \ \ + \left(\prod_i (\bar{u}_i^{\DA})^{B_i} - \prod_i (\bar{u}_i^{\EtC})^{B_i}\right).    
\end{align}

From the definition of $u_i^{\ast, \DA}$, for any $\{x_{i,j}\}$, $\prod_i (u_i^{\ast, \DA})^{B_i} \geq \prod_i \left(\sum_{j\in M}s_j\hat{v}_{i,j}(T_0 + 1)x_{i,j}\right)^{B_i}$ holds.
Hence, we have
\begin{equation*}
\prod_i (u^{*,\true}_i)^{B_i} - \prod_i (u_i^{*,\DA})^{B_i}
\leq \prod_i (u^{*,\true}_i)^{B_i} - \prod_i (\hat{u}_i^{\ast, \DA})^{B_i},
\end{equation*}
where $\hat{u}_i^{\ast, \DA}=\sum_{j\in M} s_j \hat{v}_{i,j}(T_0 + 1)x_{i,j}^{\ast}$ and $\{x_{i,j}^{\ast}\}$ be the solution of the optimization (Eq.~\eqref{ineq_ONSW}) with true $\{v_{i,j}\}$.
On the other hand, under $\mB$,
\begin{align*}
    |u_i^{\ast} - \hat{u}_i^{\ast, \DA}| = \left|\sum_{j\in M}s_j x_{i,j}^{\ast}(v_{i,j} - \hat{v}_{i,j}(T_0 + 1))\right| \leq \sum_{j\in M}s_jx_{i,j}^{\ast}\left|v_{i,j}-\hat{v}_{i,j}(T_0 + 1)\right| \leq \tilde{O}\left(\sqrt{\frac{nm}{T_0}}\right)
\end{align*}
Combining these inequalities and Lemma \ref{lem_da_sensitivity}, under the assumption that $u^{*,\true}_i = \Theta(1)$,
\begin{equation}
\prod_i (u^{*,\true}_i)^{B_i} - \prod_i (u_i^{*,\DA})^{B_i} \leq \tilde{O}\left(\sqrt{\frac{nm}{T_0}}\right).
\end{equation}
Moreover, 
Event $\mB$ and assumptions imply that $l'/2 \le \hat{v}_{i,j}(T_0+1) \le h' + l'/2 \le 2h'$, and thus 
Lemma \ref{lem_da_nsw} with $l=l'/2, h=2h'$ states that
\begin{equation}
\Ex\left[
\prod_i (u_i^{*,\DA})^{B_i} - \prod (\bar{u}_i^{\DA})^{B_i}
\right]
\le \tilde{O}\left(n \CDA\sqrt{\frac{1}{T}}\right),
\end{equation}
where we consider $l',h'>0$ to be constants.

% {\bc [極端に割当が少ない($o(T)$の)ユーザがいない]}
    DA gives at least $\Omega(T/(\sum_i (4h'/l')^2)) = \Omega((T/n) \times 1) = \Omega(T/n)$ items to each agent because if agent $i_2$ receives $(2h'/(l'/2))^2$ times more items than agent $i_1$, then the ratio $\beta_{i_2}/\beta_{i_1}$ is at least $(2h'/(l'/2))$, and agent $i_1$ is prioritized in the next allocation to agent $i_2$ no matter what type of item is.
Using this fact, we have $\bar{u}_i^{\DA} = \Theta(1)$. Applying a concentration inequality to the samples during the exploitation phase, with probability at least $1 - 1/T$, we have
$\bar{u}_i^{\EtC} = \Theta(1), |\bar{u}_i^{\EtC} - \bar{u}_i^{\DA}| = \tilde{O}(\sqrt{nm/T_0})$. Using this and Lemma \ref{lem_da_sensitivity}, we have 
\begin{equation}
(\prod_i (\bar{u}_i^{\DA})^{B_i}) - \prod_i (\bar{u}_i^{\EtC})^{B_i})
=
\tilde{O}\left(\sqrt{\frac{nm}{T_0}}\right).
\end{equation}

%{\rc n' or n?} nで
In summary,
\begin{align}
\Ex\left[
\frac{\Regret(T)}{T} 
\right] 
= \frac{T_0}{T} 
+ \tilde{O}\left(\sqrt{\frac{nm}{T_0}}\right)
 + n \CDA\sqrt{\frac{1}{T}}.
\end{align}

\begin{comment}
{\bc [ここ以降は削除予定]}
Third, we upper-bound the regret. 
By (todo equations here), we have ...
\begin{align}
U_i(t) 
&\ge \sum_{t: i(t) = i, t>T_0} u_i(t)\\
&\ge \text{(todo derive)}\\
\end{align}
and thus
\begin{align}
\lefteqn{
\Regret(T) 
}\\
&= \ONSW(\{v_{i,j}\}, \{p_j\}) \times T  - \NSW(T)\\
&\le \ONSW \times T 
-
\left( \prod_i \left(
\left(1-(NM)^{1/3}T^{-1/3}\right) u^*_i
\right)^{B_i} \right)^{1/\|B\|_1} T\\
&\le \ONSW \times T -
\left(1-(NM)^{1/3}T^{-1/3}\right) 
\ONSW \times T \\
&= O\left((NM)^{1/3}T^{2/3}\right)
\end{align}
\end{comment}
\end{proof}

\subsection{Additional Lemmas on DA-EtC}

% これはvectorはなんでも良いです
\begin{lemma}\label{lem_da_sensitivity}
%Under $\mB$, the following inequality holds: 
Let two vectors $u^{(1)} = (u^{(1)}_1,\dots,u^{(1)}_n)$ and $u^{(2)} = (u^{(2)}_1,\dots,u^{(2)}_n)$ be such that $u^{(1)}_i = \Theta(1)$ an $u^{(2)}_i \ge u^{(1)}_i (1 - r_i)$, and $r = \max_i r_i$. 
Then,
\begin{equation}
\prod_i (u^{(1)}_i)^{B_i} - \prod_i (u_i^{(2)})^{B_i}
\le 
O\left(r\right).
\end{equation}
\end{lemma}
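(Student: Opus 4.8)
The plan is to prove the bound by a first-order (mean value) expansion of the function $\phi(u) = \prod_i u_i^{B_i}$ and then control the resulting gradient term using the assumptions $u^{(1)}_i = \Theta(1)$, $u^{(2)}_i \geq u^{(1)}_i(1-r_i)$, and $\sum_i B_i = 1$.

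First I would write, using the telescoping/hybrid argument over coordinates,
\begin{equation*}
\prod_i (u^{(1)}_i)^{B_i} - \prod_i (u_i^{(2)})^{B_i} = \sum_{k=1}^n \left( \prod_{i<k} (u^{(1)}_i)^{B_i} \prod_{i \geq k} (u^{(2)}_i)^{B_i} \cdot \frac{(u^{(1)}_k)^{B_k} - (u^{(2)}_k)^{B_k}}{(u^{(2)}_k)^{B_k}} \right)
\end{equation*}
wait—it is cleaner to simply note that since $u^{(2)}_i \ge u^{(1)}_i(1-r_i) \ge u^{(1)}_i(1-r)$, we have $\prod_i (u^{(2)}_i)^{B_i} \ge \prod_i (u^{(1)}_i)^{B_i} \cdot \prod_i (1-r)^{B_i} = \prod_i (u^{(1)}_i)^{B_i} \cdot (1-r)^{\sum_i B_i} = \prod_i (u^{(1)}_i)^{B_i} \cdot (1-r)$, using $\sum_i B_i = 1$. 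Therefore
\begin{equation*}
\prod_i (u^{(1)}_i)^{B_i} - \prod_i (u^{(2)}_i)^{B_i} \le \prod_i (u^{(1)}_i)^{B_i}\bigl(1 - (1-r)\bigr) = r \prod_i (u^{(1)}_i)^{B_i}.
\end{equation*}
Since $u^{(1)}_i = \Theta(1)$ for every $i$ and $\sum_i B_i = 1$, the factor $\prod_i (u^{(1)}_i)^{B_i}$ is itself $\Theta(1) = O(1)$, so the right-hand side is $O(r)$, which is exactly the claim.

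The only subtlety worth flagging is the direction of the inequality and whether we need $r < 1$: if $r \ge 1$ the bound $O(r)$ is trivial because $\prod_i (u^{(1)}_i)^{B_i} = O(1)$ already dominates (and the left-hand side, being a difference of a $\Theta(1)$ quantity and a nonnegative quantity, is $O(1) = O(r)$); so I would dispose of that case first and assume $0 \le r < 1$ in the main argument, where $(1-r)$ is a legitimate nonnegative base. I do not anticipate a genuine obstacle here — the main "step" is just recognizing that $\sum_i B_i = 1$ lets the per-coordinate multiplicative slack $(1-r_i)$ aggregate into a single factor $(1-r)$ rather than, say, $(1-r)^n$; everything else is bounding $\prod_i (u^{(1)}_i)^{B_i}$ by a constant using $u^{(1)}_i = \Theta(1)$.
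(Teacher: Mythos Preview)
Your proof is correct and follows essentially the same route as the paper: bound $u^{(2)}_i \ge u^{(1)}_i(1-r)$, pull out the common factor $(1-r)^{\sum_i B_i} = 1-r$ using $\sum_i B_i = 1$, and conclude $\prod_i (u^{(1)}_i)^{B_i} - \prod_i (u^{(2)}_i)^{B_i} \le r\prod_i (u^{(1)}_i)^{B_i} = O(r)$. If anything, your write-up is slightly more careful than the paper's (you explicitly dispose of the $r \ge 1$ case and keep the first step as an inequality rather than an equality).
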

\begin{proof}[Proof of Lemma \ref{lem_da_sensitivity}]
\begin{align}
\prod_i (u_i^{(1)})^{B_i} - \prod_i (u_i^{(2)})^{B_i}
&= \prod_i (u^{(1)}_i)^{B_i} - \prod_i \left\{u^{(1)}_i (1 - r_i) \right\}^{B_i} \\
&\le \prod_i (u^{(1)}_i)^{B_i} - \prod_i \left\{u^{(1)}_i (1 - \max_{i'} r_{i'})\right\}^{B_i} \\
%&\text{\ \ \ \ (by $\mB$ and definition of $u_i^{*,\DA}$)}\\
&= \max_{i'} r_{i'} \prod_i (u^{(1)}_i)^{B_i}.
\end{align}
\end{proof} % lem_da_sensitivity

\begin{lemma}\label{lem_da_nsw}
Assume that $T_0 = o(T)$. 
Then, the following inequality holds:
\begin{equation}
\Ex\left[
\prod_i (u_i^{*,\DA})^{B_i} - \prod_i (\bar{u}_i^{\DA})^{B_i}
\right]
\le
\tilde{O}\left(n \CDA \sqrt{\frac{1}{T}}\right).
\end{equation}
\end{lemma}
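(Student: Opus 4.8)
The plan is to reduce the NSW-gap between the DA-optimal utility vector $u^{*,\DA}$ and the realized mean utility vector $\bar{u}^{\DA}$ to the $\ell_2$-distance between them, and then invoke Lemma~\ref{lem_dabound} to control that distance in expectation. First I would observe that under Event $\mB$ and the assumptions of \thmref{thm_etc}, both $u_i^{*,\DA}$ and $\bar{u}_i^{\DA}$ are $\Theta(1)$: the former because $u_i^{*,\DA}$ is the solution of an EG program with values bounded in $[l'/2, 2h']$ and type probabilities $s_j = 1/m$, so it is bounded away from $0$ and $\infty$; the latter because, as argued in the proof of \thmref{thm_etc}, the multiplier mechanism in DA-Iter forces each agent to win $\Omega(T/n)$ items, so $\bar{u}_i^{\DA}$ cannot collapse to $0$, and it is obviously bounded above by $\|\hat{v}^{\DA}\|_\infty \le 2h'$.

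Second, on the region where both vectors have all coordinates in a fixed compact interval $[c_1, c_2] \subset (0,\infty)$, the map $u \mapsto \prod_i u_i^{B_i}$ is Lipschitz (its gradient has coordinates $B_i \prod_{i'} u_{i'}^{B_{i'}} / u_i$, which are bounded since $\sum_i B_i = 1$ and each $u_i \ge c_1$). Hence
\begin{equation*}
\prod_i (u_i^{*,\DA})^{B_i} - \prod_i (\bar{u}_i^{\DA})^{B_i} \le L \, \|u^{*,\DA} - \bar{u}^{\DA}\|_1 \le L\sqrt{n}\,\|u^{*,\DA} - \bar{u}^{\DA}\|_2
\end{equation*}
for a constant $L$ depending only on $l', h'$. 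Taking expectations, applying Jensen's inequality $\Ex\|\cdot\| \le \sqrt{\Ex\|\cdot\|^2}$, and then Lemma~\ref{lem_dabound} with the constant-value matrix $\{\hat{v}_{i,j}(T_0+1)\}$ (which is legitimate precisely because DA-EtC freezes the estimator, so the inputs to DA are i.i.d.\ as required), yields $\Ex\|u^{*,\DA}-\bar{u}^{\DA}\|_2 \le \sqrt{\CDA (6+\log T)/T} = \tilde{O}(\sqrt{\CDA/T})$. Multiplying by the $\sqrt{n}$ factor gives the claimed $\tilde{O}(n\CDA\sqrt{1/T})$ bound (absorbing an extra $\sqrt{n\CDA}$ versus $n\CDA$ slack, which is harmless since $\CDA \ge 1$). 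One subtlety is that DA here runs for $T - T_0 = T(1-o(1))$ rounds rather than exactly $T$; since $T_0 = o(T)$, replacing $T$ by $T-T_0$ in Lemma~\ref{lem_dabound} only changes constants, so the rate is unaffected.

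The main obstacle I expect is handling the small-probability event where Event $\mB$ fails, or where the DA realization produces some $\bar{u}_i^{\DA}$ atypically close to $0$ so that the Lipschitz argument breaks: there one cannot bound the NSW difference by a constant times the $\ell_2$ distance. The remedy is the same as elsewhere in the paper: such events have probability $O(1/T)$ (from the Chernoff bound giving $\mA$, the sub-Gaussian concentration giving $\mB$, and the $\Omega(T/n)$ lower bound on wins holding with probability $1 - 1/T$), and on them the NSW difference is trivially $O(1)$ since all utilities are uniformly bounded; hence their contribution to the expectation is $O(1/T) = \tilde{O}(1/\sqrt{T})$ and is absorbed into the stated bound. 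Writing this out carefully — splitting the expectation over the good and bad events and checking the boundedness used on the bad event — is the only place requiring genuine care; the rest is the Lipschitz estimate plus Lemma~\ref{lem_dabound}.
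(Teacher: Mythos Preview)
Your approach is correct but takes a different route from the paper. You use Lipschitz continuity of the geometric-mean map on a compact box, then Jensen and Lemma~\ref{lem_dabound}; the paper instead applies Markov's inequality per coordinate to bound the tail of the maximum relative deviation $R = \max_i (u_i^{*,\DA} - \bar{u}_i^{\DA})/u_i^{*,\DA}$, observes that the NSW gap is at most $R\prod_i (u_i^{*,\DA})^{B_i}$ (since $\bar{u}_i^{\DA} \ge (1-R)u_i^{*,\DA}$ and $R \le 1$ always), and then integrates the tail of $R$ against that bound. Your route is more direct and in fact delivers a tighter constant (order $\sqrt{n\CDA}$ rather than $n\CDA$), but it needs $\bar{u}_i^{\DA}$ bounded away from zero so that the Lipschitz constant is finite---hence your reliance on the multiplier-balancing argument from the proof of \thmref{thm_etc}. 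The paper's route sidesteps this dependency: because $R \le 1$ holds deterministically (from $\bar{u}_i^{\DA} \ge 0$), it only needs a lower bound on $u_i^{*,\DA}$, which comes for free from the EG structure (Lemma~\ref{lem:beta_ast_bound}). One minor simplification for you: the balancing argument actually yields $\bar{u}_i^{\DA} = \Theta(1)$ \emph{deterministically} once the frozen values lie in $[l'/2, 2h']$, so the only genuinely small-probability bad event you need to split off is $\mB^c$ itself, not a separate ``$\bar{u}_i^{\DA}$ close to $0$'' event.
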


%(todo delete this eq)
%\[
%\Ex\left[
%\| \bar{u} - u^{*,\DA} \|^2
%\right]
%\le \frac{C \log t}{t},
%\]
\begin{proof}[Proof of Lemma \ref{lem_da_nsw}]
Remember that $\bar{u}_i^{\DA}$ is the regret-per-round during the exploitation rounds. Let $T' = T - T_0 = \Theta(T)$. 
In view of DA, it is an online learning with $T'$ rounds where the value of each item is $\{\vep_{i, j(t)}\}_i$.

\begin{comment}
% u^{*,\DA}が定数として扱われている理由を書いたけど無駄に証明をわかりにくくしたのでコメントアウト
In the following, we show 
\begin{equation}\label{ineq_da_nsw_cond}
\Ex\left[
\prod_i (u_i^{*,\DA})^{B_i} - \prod_i (\bar{u}_i^\EtC)^{B_i}
\biggl| (u_i^{*,\DA})^{B_i}
\right]
\le
\tilde{O}\left(n \CDA \sqrt{\frac{1}{T}}\right)
\end{equation}
%ようするに以下の証明では (u_i^{*,\DA})^{B_i}は定数だと思っています
for any $(u_i^{*,\DA})^{B_i}$ that is determined at the end of round $T_0$, where the expectation is taken on the run of DA during rounds $T_0+1,T_0+2,\dots,T$. Lemma \ref{lem_da_nsw} is given by the law of total expectation
\[
\Ex\left[
\prod_i (u_i^{*,\DA})^{B_i} - \prod_i (\bar{u}_i^\EtC)^{B_i}
\right]
= \Ex\left[
\Ex\left[
\prod_i (u_i^{*,\DA})^{B_i} - \prod_i (\bar{u}_i^\EtC)^{B_i}
\biggl| (u_i^{*,\DA})^{B_i}
\right]
\right].
\]
For ease of notation, we drop the conditional notation of Eq.~\eqref{ineq_da_nsw_cond} and deal $(u_i^{*,\DA})^{B_i}$ as a constant.
\end{comment}

Lemma \ref{lem_dabound} implies that 
\[
\Ex\left[
(u^{*,\DA}_i - \bar{u}_i^{\DA})^2
\right]
\le \frac{\CDA \log T'}{T'},
\]
and Markov's inequality implies that
\begin{align*}
\Pr[|u^{*,\DA}_i - \bar{u}_i^{\DA}| \ge \eps' ] 
\le \frac{1}{\eps'^2}\frac{\CDA\log T'}{T'},
\end{align*}
and by letting $\eps' = \eps u^{*,\DA}_i$ we have
\begin{align}\label{ineq_markov}
\Pr\left[
\frac{|u^{*,\DA}_i - \bar{u}_i^{\DA}|}{u^{*,\DA}_i} \ge \eps
\right] 
\le \frac{1}{(u^{*,\DA}_i)^2 \eps^2}\frac{\CDA\log T'}{T'},
\end{align}
Taking union bound of Eq.~\eqref{ineq_markov} over $i \in N$, we have
\begin{align}\label{ineq_markov_R}
\Pr\left[
\max_i \frac{|u^{*,\DA}_i - \bar{u}_i^{\DA}|}{u^{*,\DA}_i} \ge \eps
\right] 
\le \frac{n}{\min_i (u^{*,\DA}_i)^2 \eps^2}\frac{\CDA\log T'}{T'}.
\end{align}

Here, letting 
\[
R := \max_i \frac{u^{*,\DA}_i - \bar{u}_i^{\DA}}{u^{*,\DA}_i} \leq 1,
\]
then $ \bar{u}_i^{\DA}  \ge (1-R)u^{*,\DA}_i$ for all $i$ and we have 
\begin{align*}
\prod_i (u_i^{*,\DA})^{B_i} - \prod_i (\bar{u}_i^{\DA})^{B_i}
&\le \prod_i (u_i^{*,\DA})^{B_i} - \prod_i ((1-R)u_i^{*,\DA})^{B_i}\\
&\le R \prod_i  (u_i^{*,\DA})^{B_i}. 
\text{\ \ \ \ (by $\prod_i (x)^{B_i} = x$)}
\end{align*}
Using this, we have
\begin{align*}
\lefteqn{
\Ex\left[
\prod_i (u_i^{*,\DA})^{B_i} - \prod_i (\bar{u}_i^{\DA})^{B_i}
\right]
}\\
&\le \Ex\left[R \prod_i  (u_i^{*,\DA})^{B_i}\right]\\
&\le \int_{1/\sqrt{T'}}^1 \prod_i(u_i^{*,\DA})^{B_i} \Pr[R \ge x] dx + \sqrt{\frac{1}{T'}} \prod_i (u_i^{*,\DA})^{B_i}\\
&= \int_{1/\sqrt{T'}}^1 \prod_i(u_i^{*,\DA})^{B_i} \Pr[R \ge x] dx + O\left(\sqrt{\frac{1}{T'}}\right)\\
&\le \int_{1/\sqrt{T'}}^1 \prod_i(u_i^{*,\DA})^{B_i} \frac{n}{\min_i (u^{*,\DA}_i)^2 x^2}\frac{\CDA\log T'}{T'} dx 
+ O\left(\sqrt{\frac{1}{T'}}\right)
\text{\ \ \ \ \ (by Eq.~\eqref{ineq_markov_R})}\\
&= (u_i^{*,\DA})^{B_i} \frac{n}{\min_i (u^{*,\DA}_i)^2}\frac{\CDA\log T'}{T'} [-1/x]_{1/\sqrt{T'}}^1 + O\left(\sqrt{\frac{1}{T'}}\right)\\
&\leq (u_i^{*,\DA})^{B_i} \frac{n}{\min_i (u^{*,\DA}_i)^2}\frac{\CDA\log T'}{\sqrt{T'}} + O\left(\sqrt{\frac{1}{T'}}\right) \\
&= \tilde{O}\left(n \CDA \sqrt{\frac{1}{T'}}\right). 
\end{align*}
\end{proof} %[Proof of \ref{lem_da_nsw}]

% 今のRegret upper boundは l <= v_ij <= hでこれはあべさんのLemmaより仮定が強いので緩和したい（が時間がないので断念）
\begin{comment}
\begin{lemma}
With probability at least $1 - O(n/\sqrt{T})$, we have 
\[
\bar{u}_i^{\DA} = \Theta(1)
\]
for all $i$.
\begin{align}\label{ineq_markov_tododelete}
\Pr\left[
\frac{|u^{*,\DA}_i - \bar{u}_i^{\DA}|}{u^{*,\DA}_i} \ge \eps
\right] 
\le \frac{1}{(u^{*,\DA}_i)^2 \eps^2}\frac{\CDA\log T'}{T'},
\end{align}
\end{lemma}
\begin{proof}
% s_j = 1/m for all items
Let $l', h' = l/2, 2h$.
Under $\mA, \mB$ we have
\[
l' \le \frac{1}{m} \sum_j v_{i,j}^{\DA} \le h'
\]
for all $i$. 

We first show that $\sum_i U_i^{\DA}(t) = \sum_{s<t} \Ind[i_t= i] u_i^{\DA}(t)$ is linear to $t$ with high probability. 
\text{Case (A): $\max_i$}
...

In the following we assume that $\sum_i U_i(t) \ge C t$ for some $C$ for all $t > T_1$ such that $T - T_1 = \Omega(T)$. By assumption 
(1) $j^*(i) := \argmax_j v_{i,j}^{\DA} \ge l'/m$ for all $i$ and (2) $\max_{i,j} v_{i,j}^{\DA} \le mh'$.
%The latter implies $U_i(t) \le mh't$. 

Let $U_i^{\mathrm{cap}}(t) = $

If there exists $i$ such that $U_i(t) \le \frac{l'}{2 m^2 h'} $

Fix $i_1, i_2 \in N$. The quantity
\[
U_{i_1} - \frac{l'}{2 m^2 h'} U_{i_2}
\]
\end{proof}
\end{comment}

\section{Proofs on RDA-UCB}
\subsection{Proof of Theorem \ref{thm_ucb_reset}}

\begin{proof}[Proof of Theorem \ref{thm_ucb_reset}] 
First, let $u_i^{*,\true}$ be the utility of EG problem with $v_{i,j}$. 
Let 
\begin{equation}\label{ineq_meanutility}
u_i^{\DA,\true} = U_i(T)/T 
\end{equation}
be the empirical mean utility that each agent receives when we run the algorithm.
Remember that RDA-UCB utilizes several instances of DA. Let $y$ be the number of the DA instances that appear in RDA-UCB. We use index $k=1,2,\dots,y$ to represent each instance of DA. Let $T_k$ be the number of rounds where each DA was run. Note that each $T_k$, as well as $y$, are the random variables.
Let $u_i^{*,\RUCB}, u_i^{\DA,\RUCB}$ be:
\begin{align}
\label{ineq_rucb_meanopt}
u_i^{*,\RUCB}&=\frac{1}{T}(T_1u_i^{*,\RUCB(1)}+T_2u_i^{*,\RUCB(2)}+\dots+T_yu_i^{*,\RUCB(y)}),\\
u_i^{\DA,\RUCB}&=\frac{1}{T}\sum_{t=1}^T \Ind[I(t)=i] \vucb_{i(t),j(t),N_{i,j(t)}(s)}\\
&=\frac{1}{T}(T_1u_i^{\DA,\RUCB(1)}+T_2u_i^{\DA,\RUCB(1)}+\dots+T_yu_i^{\DA,\RUCB(y)}).
\label{ineq_rucb_meanda}
\end{align}
Namely, The value $u_i^{*,\RUCB}$ indicates the mean true utility and the random variable $u_i^{\DA,\RUCB}$ indicates the mean utility over DA instances.

Moreover, let $\vrucb_{i,j,n}$ is the empirical estimate of $v_{i,j}$ with $n$ samples and two events be
\begin{equation}\label{ineq_ucbreset_upward}
\mG_1 = 
\bigcap_{i,j,n} \{\vrucb_{i,j,n} \ge v_{i,j}\},
\end{equation}
\begin{equation}\label{ineq_ucbreset_lower}
\mG_2 = 
\bigcap_i 
\left\{
\sum_{t=1}^T 
%\Ind[I(t)=i] 
\vrucb_{i(t),j(t),N_{i,j(t)}(t)}
\le \sum_{t=1}^T 
(v_{i(t),j(t)} + \eps_t)
 + C_3 nm\sqrt{T}
\right\},
\end{equation}
where $C_3 = \tilde{O}(1)$ is defined later in Eq.~\eqref{ineq_thirdterm_bound}.

From lemma \ref{lem_ucbreset_rewards}, $\mG := \mG_1 \cap \mG_2$ holds with probability at least $1 - O(1/T)$.

We have 
\begin{align}
\Ind[\mG] 
\frac{\Regret(T)}{T}
&=
\Ind[\mG](\prod_i (u^{*,\true}_i)^{B_i} - \prod_i (u_i^{*,\RUCB})^{B_i}) +  
\Ind[\mG](\prod_i (u_i^{*,\RUCB})^{B_i}) - \prod_i (u_i^{\DA,\RUCB})^{B_i}) \nn
&+ \Ind[\mG](\prod_i (u_i^{\DA,\RUCB})^{B_i}) - \prod_i (u_i^{\DA,\true})^{B_i})
\nn
&\le  \Ind[\mG_1 \cap \mG_2](\prod_i (u^{*,\true}_i)^{B_i} - \prod_i (u_i^{*,\RUCB})^{B_i}) +  
%\Ind[\mG]
|(\prod_i (u_i^{*,\RUCB})^{B_i}) - \prod_i (u_i^{\DA,\RUCB})^{B_i})| \nn
&+ \Ind[\mG_1 \cap \mG_2]|(\prod_i (u_i^{\DA,\RUCB})^{B_i}) - \prod_i (u_i^{\DA,\true})^{B_i})|
\nn
&\le 
0 
+ |(\prod_i (u_i^{*,\RUCB})^{B_i}) - \prod_i (u_i^{\DA,\RUCB})^{B_i})| +
\tilde{O}(\frac{nm\sqrt{T}}{T}),
\label{ineq_oneandthree}
\end{align}
where we have used Lemmas \ref{lem_UCB_proof1} and \ref{lem_UCB_proof3} in the last transformation.

Therefore, the regret bound is:
\begin{align}
\Ex\left[
\Regret(T) 
\right]
&= 
\Ex\left[\Ind[\mG] \Regret(T) \right]
+
\Ex\left[\Ind[\mG^c] \Regret(T) \right]
\\
&\le
\Ex\left[\Ind[\mG] \Regret(T) \right]
+
T \Pr\left[\mG^c\right]
\\
&\le
\Ex\left[\Ind[\mG] \Regret(T) \right]
+
T \times \tilde{O}(\frac{1}{T})
\text{\ \ \ \ (by Lemma \ref{lem_ucbreset_rewards})}\\
&\le \Ex\left[T\left|\prod_i (u_i^{*,\RUCB})^{B_i}) - \prod_i (u_i^{\DA,\RUCB})^{B_i}\right| \right]
+ \tilde{O}(nm\sqrt{T})
\text{\ \ \ \ (by Eq.~\eqref{ineq_oneandthree})}\label{ineq_twoleft}
%&+ \Ex\left[  \Ind[\mG]T|(\prod_i (u_i^{*,\RUCB})^{B_i}) - \prod_i (u_i^{\DA,\RUCB})^{B_i})| \right] + \tilde{O}(1)
%&= \frac{\Regret(T)}{T} \times T \le \tilde{O}(T^{\frac{2}{3}}N\sqrt{NM})+\tilde{O}(|M|\sqrt{T}).
\end{align}

%Let $\Termtwo = T\left|\prod_i (u_i^{*,\RUCB})^{B_i}) - \prod_i (u_i^{\DA,\RUCB})^{B_i}\right|$. 
Lemma \ref{lemma_secondterm_newmain} bounds the first term on RHS of Eq.~\eqref{ineq_twoleft} bounded as
\begin{equation}
\Ex\left[T\left|\prod_i (u_i^{*,\RUCB})^{B_i}) - \prod_i (u_i^{\DA,\RUCB})^{B_i}\right| \right] = 
\tilde{O}\left( n^4 m^2 \sqrt{T} \right),
\end{equation}
which completes the proof.
\end{proof} %thm_RDA-UCB

\subsection{Lemmas on the probability of event $\mG$}

\begin{lemma}\label{lem_ucbreset_rewards}
Event $\mG_1 \cap \mG_2$ holds with probability at least $1 - O(1/T)$.
\end{lemma}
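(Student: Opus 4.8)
\textbf{Proof proposal for Lemma~\ref{lem_ucbreset_rewards}.}

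The plan is to bound the failure probabilities of $\mG_1$ and $\mG_2$ separately and take a union bound. For $\mG_1$, recall that $\vrucb_{i,j,n}$ is the projection onto $[l,h]$ of the empirical mean plus the confidence width $\sqrt{\log(T^2)/(2n)}$. The key observation is that since $v_{i,j} \in [l,h]$ by assumption, the projection can only help: $\vrucb_{i,j,n} \ge v_{i,j}$ fails only if the unprojected UCB already lies below $v_{i,j}$. For a fixed $(i,j)$ and a fixed sample count $n$, the empirical mean $\vep_{i,j}$ is an average of $n$ i.i.d.\ $\sigma$-sub-Gaussian observations (recall $\varepsilon_t$ is sub-Gaussian with radius $\sigma$, and for the Bernoulli/bounded case $\sigma = O(1)$), so a Hoeffding/sub-Gaussian tail bound gives $\Pr[\vep_{i,j} + \sqrt{\log(T^2)/(2n)} < v_{i,j}] \le \exp(-2\log(T^2)/(2 \cdot 1)) $-type bound, i.e.\ $O(1/T^2)$ up to the $\sigma^2$ factor. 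Taking a union bound over $i \in N$, $j \in M$, and $n \in \{1, 2, \dots, T\}$ costs a factor $nmT$, giving failure probability $O(nm/T) = O(1/T)$ after absorbing $nm$ into the $\tilde O$ (or one re-tunes the log constant in the confidence width to kill the extra factors; the excerpt's $\log(T^2)$ already carries slack for exactly this).

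For $\mG_2$, the target is an upper bound on $\sum_{t=1}^T \vrucb_{i(t),j(t),N_{i,j(t)}(t)}$ in terms of $\sum_t (v_{i(t),j(t)} + \varepsilon_t)$ plus a $\tilde O(nm\sqrt T)$ slack. First split $\vrucb = \vep + (\text{confidence width})$ up to the projection (again projection onto $[l,h]$ only decreases the value when $v \in [l,h]$, so it is harmless for an upper bound once we also control $\vep$ near $v$). The confidence-width part is the standard bandit sum: each time $(i,j)$ is pulled its contribution is $\sqrt{\log(T^2)/(2N_{i,j})}$, and $\sum_{n=1}^{N_{i,j}(T)} 1/\sqrt n = O(\sqrt{N_{i,j}(T)})$, so summing over all $(i,j)$ and using Cauchy--Schwarz with $\sum_{i,j} N_{i,j}(T) \le T$ (really $n$ of the agents share each type, giving $\le nT$) yields $O(\sqrt{\log T}\sqrt{nm}\sqrt{T}) = \tilde O(nm\sqrt T)$ — absorbing constants, this defines $C_3$ as promised in Eq.~\eqref{ineq_thirdterm_bound}. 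For the remaining piece, $\sum_t (\vep_{i(t),j(t),N_{i,j(t)}(t)} - v_{i(t),j(t)})$, one again invokes a uniform (over all $(i,j,n)$) deviation bound to say each empirical mean is within $\sqrt{\log(T^2)/(2n)}$ of its true value with probability $1 - O(1/T^2)$ per triple, which both is another $\tilde O(nm\sqrt T)$ contribution of the same form and holds with overall probability $1 - O(1/T)$ after the union bound.

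The main obstacle is handling the fact that $N_{i,j}(t)$ is itself a random, data-dependent index: one cannot directly apply a fixed-$n$ concentration inequality to $\vep_{i,j}$ evaluated at $N_{i,j}(t)$. The standard fix, which I would use here, is exactly the union-over-$n$ device already visible in the DA-EtC proof: prove the deviation bound simultaneously for every possible value $n \in \{1,\dots,T\}$ at a cost of a factor $T$ in the failure probability (killed by using $\log(T^2)$ in the radius), so that it holds in particular at the realized random value $N_{i,j}(t)$. A secondary subtlety is the interaction with the reset structure of RDA-UCB — the $\vrucb$ value used inside the $k$-th DA instance is frozen at the count reached at the start of that instance — but since the confidence width is monotone decreasing in the sample count and the count only grows, the frozen value is always at least as large as the fully-updated one, so the upper bound in $\mG_2$ is, if anything, easier; and $\mG_1$'s optimism is inherited because freezing at an earlier (smaller) count only inflates the UCB. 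Once both $\mG_1$ and $\mG_2$ are established each with failure probability $O(1/T)$, a final union bound gives $\Pr[\mG_1 \cap \mG_2] \ge 1 - O(1/T)$.
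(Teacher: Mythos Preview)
Your treatment of $\mG_1$ matches the paper's: uniform sub-Gaussian concentration over all triples $(i,j,n)$ with $n\le T$, a union bound costing $O(nmT/T^2)=O(1/T)$, and the projection onto $[l,h]$ being harmless because $v_{i,j}\in[l,h]$.

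For $\mG_2$ there are two genuine gaps. First, your handling of the reset structure is reversed. You write that because the frozen RUCB value is at least as large as the fully-updated one, ``the upper bound in $\mG_2$ is, if anything, easier.'' But $\mG_2$ asks for an \emph{upper} bound on $\sum_t \vrucb$, so larger (frozen) values make the sum you must control \emph{larger} and the job \emph{harder}; your ``standard bandit sum'' $\sum_{n=1}^{N_{i,j}} 1/\sqrt{n}$ therefore \emph{under}-estimates the contribution you actually need to bound. The paper fixes this by exploiting the power-of-two structure explicitly: between consecutive resets at counts $2^p$ and $2^{p+1}$ there are at most $2^p$ rounds, each contributing the frozen deviation $\vrucb_{i,j,2^p}-v_{i,j}=O(\sqrt{\log T/2^p})$, and $\sum_{p\le\log_2 T}2^p\cdot\sqrt{1/2^p}=O(\sqrt{T})$ per pair $(i,j)$, giving $\tilde O(nm\sqrt{T})$ in total. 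Second, you never control the term $-\sum_t\varepsilon_t$ that remains when comparing $\sum_t\vrucb$ to $\sum_t(v_{i(t),j(t)}+\varepsilon_t)$: your decomposition bounds $\sum_t(\vrucb-v)$, not $\sum_t(\vrucb-v-\varepsilon_t)$. The paper dispatches this with one additional sub-Gaussian concentration on $\left|\sum_{t=1}^T\varepsilon_t\right|=\tilde O(\sqrt{T})$, but it does need to be said.
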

\begin{proof}[Proof of Lemma \ref{lem_ucbreset_rewards}]
Subgaussian concentration inequality implies
\begin{equation}\label{ineq_subgauss}
\bigcap_{i,j,n}
\left|
\vrucb_{i,j,n}
-
v_{i,j}
\right| \le \sigma 
\sqrt{\frac{2 \log(T^2)}{n}}
\end{equation}
holds with probability at least $1-2/T^2 \times nmT = 1 - O(1/T)$.
Eq.~\eqref{ineq_ucbreset_upward} easily follows from the Eq.~\eqref{ineq_subgauss}.
Moreover,
\begin{align}
\lefteqn{
\sum_{t=1}^T (
\vrucb_{i(t),j(t),N_{i(t),j(t)}(t)}
-
(v_{i(t),j(t)}+\eps_t)
)
}\\
&\le 
2h + 
\sum_{i,j}
\sum_{p=1}^{\log_2 T} 
2^p
(
\vrucb_{i,j,2^p}
-
v_{i,j}
)+
|\sum_t \eps_t|
\\
&\text{\ \ \ \ (by the fact that RDA-UCB resets once allocation $(i,j)$ reaches $N_i(t)=2,2^2,2^3,\dots,$)}\\
&\le 
2h + 
\sum_{i,j}
\sum_{p=1}^{\log_2 T} 
2^p
(
\vrucb_{i,j,2^p}
-
v_{i,j}
)+
\sigma \sqrt{2 \log(T^2)}
\sqrt{T}
\text{\ \ \ \ (by subgaussian concentration inequality on $\sum_t \eps(t)$)}\\
&\le 
2h + 
nm \sum_{p=1}^{\log_2 T} 
2^p 
\times 
\sigma 
\sqrt{\frac{2 \log(T^2)}{2^p}}
%\tilde{O}\left(
%\sqrt{\frac{1}{2^p}}
%\right)
+
\tilde{O}(\sqrt{T})
\text{\ \ \ \ (by Eq.~\eqref{ineq_subgauss})}\\
&=:
C_3
nm 
\sqrt{T}
.\label{ineq_thirdterm_bound}
\end{align}
\end{proof} %Proof of Lemma \ref{lem_ucbreset_rewards}

%lemma
\subsection{Lemmas on the first term}
\begin{lemma}\label{lem_UCB_proof1}
for any agent i $\in N$, with $\mG_1 \cap \mG_2$ from lemma \ref{lem_ucbreset_rewards},
\begin{align}\label{ine_UCB_proof1}
\Ind[\mG_1 \cap \mG_2](\prod_i (u^{*,\true}_i)^{B_i} - \prod_i (u_i^{*,\RUCB})^{B_i}) \le 0.
\end{align}
\end{lemma}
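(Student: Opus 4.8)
The plan is to establish, pathwise on the event $\mG_1$, the inequality $\prod_i (u_i^{*,\true})^{B_i} \le \prod_i (u_i^{*,\RUCB})^{B_i}$; since $\mG_1\cap\mG_2\subseteq\mG_1$, multiplying by $\Ind[\mG_1\cap\mG_2]$ then immediately yields Eq.~\eqref{ine_UCB_proof1}. Two facts drive the argument. First, on $\mG_1$ every value matrix that RDA-UCB ever feeds to DA-Iter dominates the true value matrix entrywise. Second, $u_i^{*,\RUCB}$ is, by Eq.~\eqref{ineq_rucb_meanopt}, a convex combination of the per-instance optima $u_i^{*,\RUCB(k)}$ with weights $T_k/T$, and $u\mapsto\prod_i u_i^{B_i}$ is concave on the positive orthant, so I can push a per-instance bound through Jensen's inequality.

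For the per-instance bound: within the $k$th DA instance the value used is the fixed matrix $\vrucb_{i,j}(s_k)$ (with $s_k$ its start round), and $u^{*,\RUCB(k)}$ is the EG solution for that matrix. On $\mG_1$ one has $\vrucb_{i,j,n}\ge v_{i,j}$ for every sample count $n$; I would also note that since $l\le v_{i,j}\le h$, the clipping $\mathrm{Proj}_{[l,h]}$ cannot break this one-sided domination, so $\vrucb_{i,j}(s_k)\ge v_{i,j}$ for all $i,j$. Plugging an optimal true-value allocation $\{x_{i,j}^*\}$ from Eq.~\eqref{ineq_ONSW} into the $k$th RUCB program as a feasible point then gives $\sum_j s_j \vrucb_{i,j}(s_k) x_{i,j}^* \ge \sum_j s_j v_{i,j} x_{i,j}^* = u_i^{*,\true}$ for each $i$, and hence $\prod_i (u_i^{*,\RUCB(k)})^{B_i}\ge\prod_i (u_i^{*,\true})^{B_i}$ by optimality and monotonicity of the objective in the value matrix.

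For the aggregation step I would set $\lambda_k=T_k/T\ge0$, note $\sum_{k=1}^y\lambda_k=1$ because $\sum_k T_k=T$, and invoke concavity of the weighted geometric mean on $\Real_{++}^n$ (the same property the paper already uses implicitly via $\prod_i x^{B_i}=x$) to write
\[
\prod_i (u_i^{*,\RUCB})^{B_i}=\prod_i\Bigl(\sum_{k=1}^y\lambda_k u_i^{*,\RUCB(k)}\Bigr)^{B_i}\ge\sum_{k=1}^y\lambda_k\prod_i (u_i^{*,\RUCB(k)})^{B_i}\ge\sum_{k=1}^y\lambda_k\prod_i (u_i^{*,\true})^{B_i}=\prod_i (u_i^{*,\true})^{B_i},
\]
the second inequality being the per-instance bound above. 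This is exactly the claimed pointwise inequality.

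I do not expect a substantive obstacle here; the only things needing care are (a) confirming that the clipping $\mathrm{Proj}_{[l,h]}$ preserves the domination $\vrucb\ge v$, which relies on the standing assumption $l\le v_{i,j}\le h$, and (b) keeping the argument pathwise despite $y$ and the $T_k$ being random and the last instance being truncated at $T$ rather than at a power of two — neither matters, since $\sum_k T_k=T$ holds on every path and each $u^{*,\RUCB(k)}$ is the EG optimum of a bona fide value matrix. The genuinely heavy lifting in the overall theorem is deferred to Lemmas~\ref{lem_UCB_proof3} and~\ref{lemma_secondterm_newmain}.
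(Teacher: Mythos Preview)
Your proof is correct and follows essentially the same route as the paper: establish $\prod_i (u_i^{*,\RUCB(k)})^{B_i}\ge\prod_i (u_i^{*,\true})^{B_i}$ for each DA instance under $\mG_1$, then aggregate over $k$ via the concavity of the weighted geometric mean. The paper phrases the aggregation step as membership in a convex super-level set (its Lemma~\ref{lem_convexprod}) rather than via Jensen, and is terser both about why $\mG_1$ yields the per-instance domination and about the $\mathrm{Proj}_{[l,h]}$ clipping you flag, but the arguments are equivalent.
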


\begin{proof}[Proof of Lemma \ref{lem_UCB_proof1}]
$\\$
Let $T_k$ be the number of rounds in which $k$th instance of DA was run and $u^{*,\RUCB(k)}$ be the mean utility of Dual Averaging in the $k$th instance of DA.
Let $y$ be the index of the last instance of DA.

Recall that
\begin{align}\label{ineq_ucbprod}
u_i^{*,\RUCB}
=
\frac{1}{T}(T_1u_i^{*,\RUCB(1)}+T_2u_i^{*,\RUCB(2)}+\dots+T_yu_i^{*,\RUCB(y)}).
\end{align}

Lemma \ref{lem_convexprod} implies that the region 
\[
\mathcal{U} = \left\{(u_i)_{i\in[N]} \in \mathbb{R}^N \,\biggl|\, (\prod_i (u^{*,\true}_i)^{B_i} \le (\prod_i (u_i)^{B_i})\right\}
\]
is convex. Event $\mG_1$ implies that $u^{*,\RUCB(k)}$ for each $k$ lies in $\mathcal{U}$. 
Since $u^{*,\RUCB}$ is a non-negative and Affine combination of elements in $\mathcal{U}$, $u^{*,\RUCB} \in \mathcal{U}$, which is Eq.~\eqref{ine_UCB_proof1}.
\end{proof}

\begin{lemma}{\rm (Convexity of region)}\label{lem_convexprod}
Let $v, v^1, v^2 \in (\Real^+)^n$ be $n$-dimensional vectors. If $\prod_i v^1_i, \prod_i v^2_i \ge \prod_i v_i$, then for any $p \in [0, 1]$, $\prod_i (p v^1_i + (1-p) v^2_i) \ge \prod_i v_i$.
\end{lemma}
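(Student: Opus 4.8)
\textbf{Proof proposal for Lemma~\ref{lem_convexprod} (Convexity of region).} The plan is to reduce the claim to a single-variable statement about weighted geometric means, namely the weighted AM--GM inequality, together with the fact that a weighted geometric mean is concave. First I would rewrite the hypothesis and goal in terms of the weights $B_i$: although the lemma is stated with a plain product $\prod_i v_i$, the only place this gets used (Lemma~\ref{lem_UCB_proof1}) is with the weighted product $\prod_i v_i^{B_i}$ and $\sum_i B_i = 1$, so the natural object to handle is the map $G(v) = \prod_i v_i^{B_i}$ on $(\Real^+)^n$; the unweighted case is the special case $B_i = 1$ for all $i$, which still works because the argument below never uses $\sum_i B_i = 1$. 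The key structural fact is that $G$ is concave on $(\Real^+)^n$: its logarithm $\log G(v) = \sum_i B_i \log v_i$ is concave (sum of concave functions), and $G = \exp(\log G)$ is a composition of the increasing convex... --- more precisely, concavity of $G$ follows directly from the weighted AM--GM inequality, which I would invoke as the one ``calculation'' step.

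Concretely, here is the chain of steps. \textbf{Step 1:} Show $G(p v^1 + (1-p) v^2) \ge p\, G(v^1) + (1-p)\, G(v^2)$ for $p \in [0,1]$. This is exactly concavity of the weighted geometric mean; the standard one-line proof applies weighted AM--GM coordinatewise to the ratios $v^1_i/(p v^1_i + (1-p)v^2_i)$ and $v^2_i/(p v^1_i + (1-p)v^2_i)$, obtaining
\[
\frac{G(v^1)}{G(pv^1+(1-p)v^2)} \cdot p + \frac{G(v^2)}{G(pv^1+(1-p)v^2)} \cdot (1-p) \ge \Big(\tfrac{G(v^1)}{G(pv^1+(1-p)v^2)}\Big)^{p}\Big(\tfrac{G(v^2)}{G(pv^1+(1-p)v^2)}\Big)^{1-p},
\]
and the right-hand side equals $1$ after expanding $G$ as a product of powers; rearranging gives the concavity inequality. \textbf{Step 2:} Combine with the hypothesis: $G(v^1) \ge G(v)$ and $G(v^2) \ge G(v)$ imply $p\, G(v^1) + (1-p)\, G(v^2) \ge G(v)$, and chaining with Step~1 yields $G(pv^1+(1-p)v^2) \ge G(v)$, which is the assertion (for $B_i=1$ it is literally the statement as written, for general $B_i$ it is the form used in Lemma~\ref{lem_UCB_proof1}).

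I do not expect a genuine obstacle here; the lemma is a textbook convexity fact and the only mild care needed is (i) making sure the version proved is the one actually invoked downstream — i.e. that the ``region'' $\mathcal{U}$ in Lemma~\ref{lem_UCB_proof1} is a superlevel set of the concave function $G$ and hence convex — and (ii) noting that all quantities live in $(\Real^+)^n$ so the powers $v_i^{B_i}$ and the logarithms are well defined. If one prefers to avoid AM--GM entirely, an alternative is to verify concavity of $\log G$ via the Hessian (it is diagonal with entries $-B_i/v_i^2 \le 0$, so negative semidefinite) and then note that $\exp$ is convex and increasing, so... actually the cleaner route is simply: $\log$ of a superlevel set of $G$ equals a superlevel set of the concave function $\log G$, hence convex, and $\exp$ applied coordinatewise does not preserve convexity in general, so I would stick with the direct AM--GM argument of Step~1 rather than this detour. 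The writeup will therefore be: invoke weighted AM--GM to get concavity of $G$, then apply monotonicity of the hypothesis.
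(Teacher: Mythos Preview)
Your Step~1 claim --- that $G(v)=\prod_i v_i^{B_i}$ is concave on $(\Real^+)^n$ for any choice of $B_i$ --- is false precisely in the unweighted case $B_i=1$ that the lemma is stated for. Take $n=2$ and $G(x,y)=xy$: the Hessian has eigenvalues $\pm 1$, so $G$ is neither concave nor convex. Concavity of the weighted geometric mean holds only when $\sum_i B_i \le 1$, so your remark ``the argument below never uses $\sum_i B_i = 1$'' is where the error enters. Separately, your displayed AM--GM inequality does not do what you say: its right-hand side equals $G(v^1)^pG(v^2)^{1-p}/G(pv^1+(1-p)v^2)$, which is not $1$ in general, so the ``rearranging gives concavity'' step does not go through.

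The paper's proof avoids concavity altogether and is shorter. It applies weighted AM--GM \emph{coordinatewise},
\[
p\,v^1_i + (1-p)\,v^2_i \;\ge\; (v^1_i)^p (v^2_i)^{1-p},
\]
takes the product over $i$ to obtain $\prod_i (pv^1_i+(1-p)v^2_i) \ge (\prod_i v^1_i)^p(\prod_i v^2_i)^{1-p}$, and then uses the hypothesis $\prod_i v^1_i,\prod_i v^2_i \ge \prod_i v_i$ to bound the right side by $\prod_i v_i$. This only needs the ``multiplicative'' (log-concavity) bound $G(pv^1+(1-p)v^2)\ge G(v^1)^pG(v^2)^{1-p}$, which is valid for any positive exponents $B_i$, not the additive concavity you aimed for. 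Your ``coordinatewise ratios'' description actually points at this argument; if you carry it out you land on the paper's proof rather than on concavity.
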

\begin{proof}
Let $c = \prod_i v_i$. 
Weighted inequality of arithmetic and geometric means states that 
\begin{align}
p v^1_i + (1-p) v^2_i 
\ge (v^1_i)^p (v^2_i)^{1-p}
\end{align}
for each $i$, and thus
\begin{equation}
\prod_i (p v^1_i + (1-p) v^2_i)
\ge \left(\prod_i v^1_i\right)^p \left(\prod_i v^2_i\right)^{1-p} \ge
\left(\prod_i v_i\right)^p \left(\prod_i v_i\right)^{1-p}
= \prod_i v_i.
\end{equation}
%It suffices to show that $\mathcal{V} = \{v' \in (\Real^+)^n: \prod_i v'_i \ge c\}$ is a convex set. 
\end{proof}

\subsection{Lemmas on the second term}

The structure of this section is as follows. Section \ref{subsec_rucb_singleda} shows the results for each instance of DA. Section \ref{subsec_rucb_multipleda} uses the results for RDA-UCB, which uses multiple instances of DA. 
By using these lemmas, Section \ref{subsec_termtwo_mainlemmas} bounds the second term in the main proof.

\subsubsection{Auxiliary lemmas on a single instance of DA}\label{subsec_rucb_singleda}

This section introduces lemmas that apply for each instance of DA. For ease of notation, we drop the index $k$ of the instances in the context it is clear. For example, $\beta^*$ indicates the optimal solution of the dual EG problem for the $k$th instance of DA. In this optimal solution, the value matrix is the corresponding UCB values  $v = \vucbclip_{i,j}(t_k)$, where $t_k$ is the first round of the $k$th instance.
 Lemmas \ref{lem:distance_upper_bound}--\ref{lem_UCB_proof2}) are used to derive Lemma \ref{lem:utility_bound}, and Lemma \ref{lem:utility_bound} is used in the subsequent lemmas.
Lemmas \ref{lem:distance_upper_bound}--\ref{lem_UCB_proof2} are the version of the similar results in \cite{xiao_dual} (Theorem 1(b) therein) that are tailored for our version of DA. During these lemmas, we use the notation of \cite{xiao_dual}. 
Namely, for a time step $\tau$ in view of DA, let $f(\beta, j(\tau))$ be $\max_{i\in N} v_{i,j(\tau)}\beta_i$ and $\Psi(\beta)$ be $-\sum_{i\in N}B_i\log \beta_i$.
Let $R_t(\beta)=\sum_{\tau=1}^t\left(f(\beta^{\tau}, j({\tau})) - \Psi(\beta^{\tau}) - f(\beta^{\ast}, j({\tau})) + \Psi(\beta^{\ast})\right)$ be the \textit{online regret}\footnote{The online regret is different from the regret in our paper} \cite{xiao_dual}, where $\beta^{\tau}$ be the multiplier of DA at $\tau$. 
Let $S_t$ be $-R_t(\beta) + C\log T$ where $C = \frac{\|v\|^2}{\sigma^2}$, 
let $\Psi^*$ be convex conjugate of $\Psi$ and let $\sigma$ be
$\sigma=\frac{l^2\min_i\in N B_i}{1+\delta_0}.$
\begin{lemma}
\label{lem:distance_upper_bound}
Let $\beta^{\tau} \in \Real^n$ be the solution of DA and $\beta^{*} \in \Real^n$ be the solution of the corresponding EG program.
Then, the following inequality holds for any $t\geq 1$ and $(j({\tau}))_{\tau=1}^t$:
\begin{align*}
    &\frac{\sigma t}{2}\|\beta^{t+1} - \beta^{\ast}\|^2 \leq  t\Psi^{\ast}\left(-\frac{1}{t}s^t\right) + \sum_{\tau=1}^t\left(\langle g^{\tau}, \beta^{\tau} \rangle + \Psi(\beta^{\tau})\right) + \sum_{\tau=1}^t\left(f(\beta^{\ast}, j({\tau})) + \Psi(\beta^{\ast}) - f(\beta^{\tau}, j({\tau})) - \Psi(\beta^{\tau})\right),
\end{align*}
where $g^t\in \partial f(\beta^t, j({\tau})^t)$.
\end{lemma}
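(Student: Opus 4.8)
Looking at this, Lemma~\ref{lem:distance_upper_bound} is the key descent inequality for the regularized dual averaging scheme, adapted from \cite{xiao_dual} to the present variant with projection onto $[l,h]$-type boxes and the regularizer $\Psi(\beta) = -\sum_i B_i \log\beta_i$. Here is how I would prove it.

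\textbf{Setup and strong convexity of $\Psi$.}
First I would observe that on the (bounded) feasible region where each $\beta_i$ is confined by the projection in DA-Iter, the regularizer $\Psi(\beta) = -\sum_i B_i \log \beta_i$ is $\sigma$-strongly convex with respect to the $\ell_2$ norm, where $\sigma = l^2 \min_{i\in N} B_i/(1+\delta_0)$ is exactly the quantity defined just above the lemma. Indeed $\nabla^2 \Psi(\beta) = \mathrm{diag}(B_i/\beta_i^2)$, and since $\beta_i \le (1+\delta_0)/l$ on the feasible set, each diagonal entry is at least $B_i l^2/(1+\delta_0)^2$; a slightly more careful bookkeeping of the projection constants gives the stated $\sigma$. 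This strong-convexity modulus is what converts the dual-averaging regret telescoping into the $\|\beta^{t+1}-\beta^\ast\|^2$ bound on the left-hand side.

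\textbf{Dual averaging telescoping.}
Next I would recall the standard regularized-dual-averaging identity. Writing $s^t = \sum_{\tau=1}^t g^\tau$ for the running subgradient sum (with $g^\tau \in \partial f(\beta^\tau, j(\tau))$), the DA update is $\beta^{t+1} = \argmin_\beta \{ \langle s^t, \beta\rangle + t\Psi(\beta)\}$, so that $t\Psi^\ast(-\tfrac1t s^t) = -\langle s^t,\beta^{t+1}\rangle - t\Psi(\beta^{t+1})$ by definition of the convex conjugate. Using the $\sigma$-strong convexity of $\Psi$ and hence the $\tfrac{1}{\sigma t}$-smoothness of $\beta\mapsto t\Psi^\ast(-\tfrac1t\cdot)$ in the usual way, one gets the "prox-inequality" controlling $\tfrac{\sigma t}{2}\|\beta^{t+1}-\beta^\ast\|^2$ by $t\Psi^\ast(-\tfrac1t s^t) + \langle s^t,\beta^\ast\rangle + t\Psi(\beta^\ast)$ plus the accumulated linearization gap $\sum_\tau(\langle g^\tau,\beta^\tau\rangle + \Psi(\beta^\tau) - \langle g^\tau,\beta^\ast\rangle - \Psi(\beta^\ast))$; then adding and subtracting $f(\beta^\tau,j(\tau))$ and $f(\beta^\ast,j(\tau))$ and using the subgradient inequality $f(\beta^\ast,j(\tau)) \ge f(\beta^\tau,j(\tau)) + \langle g^\tau, \beta^\ast - \beta^\tau\rangle$ (convexity of $f(\cdot,j)$, which holds since $f$ is a max of linear functions) rearranges exactly into the claimed inequality, where the $\langle g^\tau,\beta^\tau\rangle$ terms remain explicit and the $\langle g^\tau,\beta^\ast\rangle$ terms are absorbed into the $f$-difference block.

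\textbf{Putting it together and the main obstacle.}
Concretely I would (i) state the strong convexity constant $\sigma$ and prove it from the Hessian bound on the projected box; (ii) write the one-step prox-inequality for regularized DA with step count $t$ (i.e., stepsize profile matching \cite{xiao_dual}); (iii) telescope over $\tau = 1,\dots,t$; (iv) insert $\pm f(\beta^\tau, j(\tau))$, $\pm f(\beta^\ast,j(\tau))$ and invoke convexity of $f(\cdot, j(\tau))$ to rewrite $\langle g^\tau, \beta^\ast - \beta^\tau\rangle$ as a term bounded by the $f$-gap, yielding the three summand groups on the right-hand side. The main obstacle, I expect, is not the telescoping itself — that is essentially verbatim from \cite{xiao_dual} — but rather verifying that our \emph{projected} multiplier update (the $\mathrm{Proj}_{[B_i/(h(1+\delta_0)),(1+\delta_0)/l]}$ step in DA-Iter, together with the relation $\beta_i = \mathrm{Proj}(B_i/\bar u_i^{\DA})$ rather than a direct $\argmin$) really is the $\argmin$ characterization needed for the conjugate identity $t\Psi^\ast(-\tfrac1t s^t) = -\langle s^t,\beta^{t+1}\rangle - t\Psi(\beta^{t+1})$, and that the strong-convexity modulus survives the projection with the stated constants $l, h, \delta_0$. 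Handling that correspondence — showing the box-projected reciprocal-of-mean-utility update coincides with regularized DA on a constrained domain whose strong convexity is controlled — is the delicate bookkeeping step; everything else follows the \cite{xiao_dual} template.
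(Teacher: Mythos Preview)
Your approach is essentially the paper's: obtain the base inequality $\frac{\sigma t}{2}\|\beta^{t+1}-\beta^\ast\|^2 \le t\Psi^\ast(-\tfrac1t s^t) + \langle s^t,\beta^\ast\rangle + t\Psi(\beta^\ast)$ from first-order optimality of $\beta^{t+1}$ together with $\sigma$-strong convexity of $\Psi$, then split $\langle s^t,\beta^\ast\rangle = \sum_\tau\langle g^\tau,\beta^\ast-\beta^\tau\rangle + \sum_\tau\langle g^\tau,\beta^\tau\rangle$ and bound the first sum by $\sum_\tau(f(\beta^\ast,j(\tau))-f(\beta^\tau,j(\tau)))$ via convexity of $f(\cdot,j)$.

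Two small corrections, though. First, there is \emph{no telescoping} in this lemma: your step (iii) and the $\tfrac{1}{\sigma t}$-smoothness of the conjugate belong to the next lemma (Lemma~\ref{lem:telescoping_sum}), not here; Lemma~\ref{lem:distance_upper_bound} is a one-shot inequality at a fixed $t$. Second, the ``main obstacle'' you highlight --- showing the box-projected reciprocal update coincides with the constrained $\argmin$ --- is a legitimate concern for the surrounding framework, but it plays no role in the paper's proof of this lemma: the argument only uses the variational first-order optimality condition $\langle s^t + t\,b^{t+1}, \beta - \beta^{t+1}\rangle \ge 0$ for $b^{t+1}\in\partial\Psi(\beta^{t+1})$, which holds for any constrained minimizer however it is parameterized.
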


%proof
\begin{proof}[Proof of Lemma \ref{lem:distance_upper_bound}]
Let us define $s^t = \sum_{\tau=1}^t g^{\tau}$.
From the first-order optimality condition for $\beta^{t+1}$, there exist a subgradient $b^{t+1}\in \partial \Psi(\beta^{t+1})$ such that:
\begin{align}
\label{eq:betat_first_order_opt_cond}
    \langle s^t + t b^{t+1}, \beta - \beta^{t+1}\rangle \geq 0, ~\forall \beta \in \mathrm{dom}(\Psi).
\end{align}
From the strong convexity of $\Psi$, we have:
\begin{align}
\label{eq:psi_strong_convexity}
    \Psi(\beta) \geq \Psi(\beta^{t+1}) + \langle b^{t+1}, \beta - \beta^{t+1}\rangle + \frac{\sigma}{2}\|\beta^{t+1} - \beta\|^2, ~\forall \beta \in \mathrm{dom}(\Psi).
\end{align}
By combining \eqref{eq:betat_first_order_opt_cond} and \eqref{eq:psi_strong_convexity}, we get:
\begin{align*}
    \frac{\sigma t}{2}\|\beta^{t+1} - \beta^{\ast}\|^2 &\leq \langle s^t, \beta^{\ast} - \beta^{t+1}\rangle + t\Psi(\beta^{\ast}) - t\Psi(\beta^{t+1}) \\
    &= \langle -s^t, \beta^{t+1}\rangle + t\Psi(\beta^{\ast}) - t\Psi(\beta^{t+1}) + \langle s^t, \beta^{\ast}\rangle \\
    &= \langle -s^t, \beta^{t+1}\rangle + t\Psi(\beta^{\ast}) - t\Psi(\beta^{t+1}) + \sum_{\tau=1}^t\langle g^{\tau}, \beta^{\ast}\rangle \\
    &= \langle -s^t, \beta^{t+1} \rangle + t\Psi(\beta^{\ast}) - t\Psi(\beta^{t+1}) + \sum_{\tau=1}^t\langle g^{\tau}, \beta^{\ast} - \beta^{\tau}\rangle + \sum_{\tau=1}^t\langle g^{\tau}, \beta^{\tau} \rangle.
\end{align*}
Then, from the convexity of $f(\cdot, j({\tau}))$,
\begin{align*}
    &\frac{\sigma t}{2}\|\beta^{t+1} - \beta^{\ast}\|^2 \\
    &\leq \langle -s^t, \beta^{t+1} \rangle + t\Psi(\beta^{\ast}) - t\Psi(\beta^{t+1}) + \sum_{\tau=1}^t\left(f(\beta^{\ast}, j({\tau})) - f(\beta^{\tau}, j({\tau}))\right) + \sum_{\tau=1}^t\langle g^{\tau}, \beta^{\tau} \rangle \\
    &= \langle -s^t, \beta^{t+1} \rangle - t\Psi(\beta^{t+1}) + \sum_{\tau=1}^t\left(\langle g^{\tau}, \beta^{\tau} \rangle + \Psi(\beta^{\tau})\right) + \sum_{\tau=1}^t\left(f(\beta^{\ast}, j({\tau})) + \Psi(\beta^{\ast}) - f(\beta^{\tau}, j({\tau})) - \Psi(\beta^{\tau})\right).
\end{align*}
Since $\beta^{t+1} = \argmax_{\beta} \{\langle -s^t, \beta \rangle - t\Psi(\beta)\}$, we have:
\begin{align*}
    &\frac{\sigma t}{2}\|\beta^{t+1} - \beta^{\ast}\|^2 \leq  t\Psi^{\ast}\left(-\frac{1}{t}s^t\right) + \sum_{\tau=1}^t\left(\langle g^{\tau}, \beta^{\tau} \rangle + \Psi(\beta^{\tau})\right) + \sum_{\tau=1}^t\left(f(\beta^{\ast}, j({\tau})) + \Psi(\beta^{\ast}) - f(\beta^{\tau}, j({\tau})) - \Psi(\beta^{\tau})\right).
\end{align*}
\end{proof} %lem:distance_upper_bound

\begin{lemma}
\label{lem:telescoping_sum}
We have for any $t\geq 1$ and $(z_{\tau})_{\tau=1}^t$:
\begin{align*}
    \sum_{\tau=1}^t \left(\langle \beta^{\tau}, g^{\tau}\rangle + \Psi(\beta^{\tau })\right) &\leq - t\Psi^{\ast}\left(-\frac{1}{t}s^t\right) + \Psi(\beta^2) - \Psi(\beta^1) + \frac{1}{2\sigma}\left(\|g^1\|^2 + \sum_{\tau=2}^t\frac{\|g^{\tau}\|^2}{\tau - 1}\right).
\end{align*}
\end{lemma}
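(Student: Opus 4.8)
The plan is to prove this by induction on $t$ via a potential‑function argument in the spirit of the regularized dual averaging analysis of \cite{xiao_dual}, adapted to the growing regularization $t\Psi$ and to the box‑constrained log‑barrier $\Psi(\beta) = -\sum_{i\in N}B_i\log\beta_i + \mathbb{I}_{\mathrm{box}}(\beta)$ used here (and reading the ``$(z_\tau)$'' in the statement as the subgradient/arrival sequence $(g^\tau)_{\tau=1}^t$). Set the potential $V_t := t\Psi^{\ast}(-\tfrac1t s^t) = \max_{\beta}\{\langle -s^t,\beta\rangle - t\Psi(\beta)\}$; by the first‑order characterization of the DA update this maximum is attained at $\beta^{t+1}$, so in particular $V_1 = -\langle g^1,\beta^2\rangle - \Psi(\beta^2)$ and $V_0 = 0$. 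Moving the term $-t\Psi^{\ast}(-\tfrac1t s^t) = -V_t$ to the left, the target inequality is equivalent to
\[
V_t + \sum_{\tau=1}^t\big(\langle\beta^\tau,g^\tau\rangle + \Psi(\beta^\tau)\big) \;\le\; \Psi(\beta^2) - \Psi(\beta^1) + \frac{1}{2\sigma}\Big(\|g^1\|^2 + \sum_{\tau=2}^t\frac{\|g^\tau\|^2}{\tau-1}\Big).
\]
I would therefore establish the one‑step recursion $V_\tau + \langle\beta^\tau,g^\tau\rangle + \Psi(\beta^\tau) \le V_{\tau-1} + \tfrac{\|g^\tau\|^2}{2\sigma(\tau-1)}$ for $\tau\ge 2$, telescope it from $2$ to $t$ (the $V$'s collapse to $V_t - V_1$), and then dispose of the remaining $\tau=1$ contribution with a separate base‑case estimate.

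For the one‑step recursion ($\tau\ge 2$), introduce $\psi_\tau(\beta) := \langle -s^\tau,\beta\rangle - \tau\Psi(\beta)$, so $V_\tau = \max_\beta \psi_\tau(\beta)$ and $\psi_\tau$ is $\tau\sigma$‑strongly concave, where $\sigma$ is the strong‑convexity modulus of $\Psi$ over the box (bounded below because $-B_i\log\beta_i$ has curvature $B_i/\beta_i^2 \ge B_i(l/(1+\delta_0))^2$ there, matching the $\sigma$ defined above). Since $\beta^\tau$ maximizes $\psi_{\tau-1}$ we have $\psi_{\tau-1}(\beta^\tau) = V_{\tau-1}$, and writing $\psi_\tau = \psi_{\tau-1} + \langle -g^\tau,\cdot\rangle - \Psi$ yields $\psi_\tau(\beta^\tau) = V_{\tau-1} - \langle g^\tau,\beta^\tau\rangle - \Psi(\beta^\tau)$. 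The strong‑concavity bound for $\psi_\tau$ then gives $V_\tau - \psi_\tau(\beta^\tau) \le \tfrac{1}{2\tau\sigma}\,\mathrm{dist}(0,\partial(-\psi_\tau)(\beta^\tau))^2$, and the first‑order optimality of $\beta^\tau$ for $\psi_{\tau-1}$ (i.e.\ $-\bar g^{\tau-1}\in\partial\Psi(\beta^\tau)$ with $\bar g^{\tau-1} := \tfrac{1}{\tau-1}s^{\tau-1}$) identifies $g^\tau - \bar g^{\tau-1}$ as an element of $\partial(-\psi_\tau)(\beta^\tau)$. Using $g^\tau = v_{i(\tau),j(\tau)}e_{i(\tau)}$ with $v_{i,j}\in[l,h]$, we have $\|g^\tau - \bar g^{\tau-1}\|^2 \le 4h^2 \le \tfrac{4h^2}{l^2}\|g^\tau\|^2$, and combining with $\tfrac1\tau \le \tfrac1{\tau-1}$ and rearranging gives $V_\tau + \langle g^\tau,\beta^\tau\rangle + \Psi(\beta^\tau) \le V_{\tau-1} + \tfrac{\|g^\tau\|^2}{2\sigma(\tau-1)}$, where the constant $4h^2/l^2$ is absorbed (harmless, since the final bound is tracked only up to polynomial factors in $n,m,l,h$).

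For the base case $\tau=1$ one needs $V_1 + \langle g^1,\beta^1\rangle + \Psi(\beta^1) \le \Psi(\beta^2) - \Psi(\beta^1) + \tfrac{\|g^1\|^2}{2\sigma}$, i.e.\ $\langle g^1,\beta^1-\beta^2\rangle + 2\big(\Psi(\beta^1)-\Psi(\beta^2)\big) \le \tfrac{\|g^1\|^2}{2\sigma}$. The key point is that the DA initialization places $\beta^1$ at the upper corner of the box, and since $-\sum_i B_i\log\beta_i$ is coordinatewise decreasing this corner is exactly $\argmin_{\mathrm{box}}\Psi$; hence $\Psi(\beta^1)-\Psi(\beta^2)\le 0$ and $0$ lies in the box‑constrained subdifferential of $\Psi$ at $\beta^1$. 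Then $\sigma$‑strong convexity of $h(\beta):=\langle g^1,\beta\rangle + \Psi(\beta)$, which is minimized at $\beta^2$, gives $h(\beta^1)-h(\beta^2) \le \tfrac{1}{2\sigma}\,\mathrm{dist}(0,\partial h(\beta^1))^2 \le \tfrac{\|g^1\|^2}{2\sigma}$, i.e.\ $\langle g^1,\beta^1-\beta^2\rangle + \Psi(\beta^1)-\Psi(\beta^2) \le \tfrac{\|g^1\|^2}{2\sigma}$; since the extra $\Psi(\beta^1)-\Psi(\beta^2)$ is nonpositive, adding it to the left only decreases the left‑hand side, closing the case. Telescoping the one‑step recursion over $\tau = 2,\dots,t$, adding back $\langle g^1,\beta^1\rangle + \Psi(\beta^1)$, and invoking $V_1 = -\langle g^1,\beta^2\rangle - \Psi(\beta^2)$ together with the base case then yields the lemma.

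The main obstacle is the one‑step recursion: because the regularization coefficient grows with $\tau$ (it is $\tau\Psi$, not a fixed multiple of $\Psi$ as in textbook fixed‑stepsize dual averaging), moving from step $\tau-1$ to step $\tau$ introduces the extra $-\Psi(\cdot)$ term, so the subgradient of the step‑$\tau$ objective at $\beta^\tau$ is $g^\tau-\bar g^{\tau-1}$ rather than $g^\tau$; one must exploit the previous iterate's optimality condition to control it, and accept the (harmless) $h^2/l^2$ slack to reach the stated $\|g^\tau\|^2$ form. A secondary nuisance throughout is that $\Psi$ is nonsmooth (the box indicator), so all first‑order conditions must be read via subdifferentials and normal cones, exactly as in Lemma~\ref{lem:distance_upper_bound}.
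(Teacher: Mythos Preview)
Your overall architecture---define the potential $V_t = t\Psi^{\ast}(-\tfrac{1}{t}s^t)$, prove a one-step telescoping inequality for $\tau\ge 2$, and handle $\tau=1$ separately using that $\beta^1=\argmin\Psi$---is exactly the paper's. Your base case is correct and essentially identical to theirs.

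The gap is in the one-step recursion. By applying strong convexity of $-\psi_\tau$ in the primal at $\beta^\tau$ and reading off a subgradient from the optimality condition of $\psi_{\tau-1}$, the quadratic term you obtain is $\tfrac{1}{2\tau\sigma}\|g^\tau-\bar g^{\tau-1}\|^2$, not $\tfrac{1}{2\sigma(\tau-1)}\|g^\tau\|^2$. Bounding $\|g^\tau-\bar g^{\tau-1}\|^2\le 4h^2\le \tfrac{4h^2}{l^2}\|g^\tau\|^2$ and then declaring the factor $4h^2/l^2$ ``absorbed'' does not prove the lemma as stated: the inequality in the lemma carries the exact constant $\tfrac{1}{2\sigma}$, and your argument only delivers $\tfrac{2h^2}{l^2\sigma}$ there. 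It is true that downstream everything is $\tilde O(\cdot)$, so a multiplicative constant would be harmless for Theorem~\ref{thm_ucb_reset}, but it is not a proof of Lemma~\ref{lem:telescoping_sum} itself.

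The paper avoids this loss by working on the dual side. Since $\Psi$ is $\sigma$-strongly convex, $\Psi^{\ast}$ is $\tfrac{1}{\sigma}$-smooth; applying smoothness between the points $-\tfrac{1}{\tau-1}s^{\tau-1}$ and $-\tfrac{1}{\tau-1}s^{\tau}$ (same denominator $\tau-1$, so the increment is exactly $-\tfrac{1}{\tau-1}g^\tau$) and using $\nabla\Psi^{\ast}(-\tfrac{1}{\tau-1}s^{\tau-1})=\beta^\tau$ gives
\[
\langle \beta^\tau, g^\tau\rangle \;\le\; (\tau-1)\Bigl(\Psi^{\ast}(-\tfrac{1}{\tau-1}s^{\tau-1}) - \Psi^{\ast}(-\tfrac{1}{\tau-1}s^{\tau})\Bigr) + \frac{\|g^\tau\|^2}{2\sigma(\tau-1)},
\]
with the clean $\|g^\tau\|^2$ and no $\bar g^{\tau-1}$ at all. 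The denominator is then changed from $\tau-1$ to $\tau$ by evaluating the conjugate at $\beta^{\tau+1}$, which produces the extra $-\Psi(\beta^{\tau+1})$: one obtains $\langle \beta^\tau,g^\tau\rangle + \Psi(\beta^{\tau+1}) \le V_{\tau-1}-V_\tau + \tfrac{\|g^\tau\|^2}{2\sigma(\tau-1)}$. Note the index: the paper telescopes $\Psi(\beta^{\tau+1})$, not $\Psi(\beta^\tau)$, and only at the very end shifts back using $\Psi(\beta^1)-\Psi(\beta^{t+1})\le 0$. If you replace your primal strong-convexity step by this dual-smoothness step, the rest of your outline goes through verbatim with the stated constant.
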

%proof
\begin{proof}[Proof of Lemma \ref{lem:telescoping_sum}]
We introduce the following lemma:
\begin{lemma}
\label{lem:gradient_of_conjugate}
Assume that a function $v$ is closed and convex and its convex conjugate $v^{\ast}$ is differentiable.
Then, the gradient of $v^{\ast}$ is given by:
\begin{align*}
    \nabla v^{\ast}(y) = \argmax_x \{\langle y, x\rangle - v(x)\}.
\end{align*}
\end{lemma}

From Lemma \ref{lem:gradient_of_conjugate} and the fact that $\beta^{\tau} = \argmax_{\beta}\left\{\left\langle -\frac{1}{\tau - 1}s^{\tau - 1}, \beta \right\rangle - \Psi(\beta)\right\}$, we have:
\begin{align}
\label{eq:v_gradient}
    \nabla \Psi^{\ast}\left(-\frac{1}{\tau-1}s^{\tau-1}\right) &= \beta^{\tau}.
\end{align}
Furthermore, since $\Psi$ is $\sigma$-strongly convex, its convex conjugate is $\frac{1}{\sigma}$-smooth, we get:
\begin{align}
\label{eq:v_smoothness}
    \frac{1}{\tau - 1}\left\langle \nabla \Psi^{\ast}\left(-\frac{1}{\tau-1}s^{\tau-1}\right), g^{\tau}\right\rangle &= \left\langle \nabla \Psi^{\ast}\left(-\frac{1}{\tau-1}s^{\tau-1}\right), -\frac{1}{\tau-1}s^{\tau-1} + \frac{1}{\tau - 1}s^{\tau}\right\rangle \nonumber\\
    &\leq \Psi^{\ast}\left(-\frac{1}{\tau-1}s^{\tau-1}\right) - \Psi^{\ast}\left(-\frac{1}{\tau-1}s^{\tau}\right) + \frac{1}{2\sigma}\left\|-\frac{1}{\tau-1}s^{\tau-1} + \frac{1}{\tau - 1}s^{\tau}\right\|^2 \nonumber\\
    &= \Psi^{\ast}\left(-\frac{1}{\tau-1}s^{\tau-1}\right) - \Psi^{\ast}\left(-\frac{1}{\tau-1}s^{\tau}\right) + \frac{\|g^{\tau}\|^2}{2\sigma(\tau - 1)^2}.
\end{align}
By combining \eqref{eq:v_gradient} and \eqref{eq:v_smoothness}, we have:
\begin{align*}
    \langle \beta^{\tau}, g^{\tau}\rangle &= (\tau - 1)\Psi^{\ast}\left(-\frac{1}{\tau-1}s^{\tau-1}\right) - (\tau - 1)\Psi^{\ast}\left(-\frac{1}{\tau-1}s^{\tau}\right) + \frac{\|g^{\tau}\|^2}{2\sigma(\tau - 1)} \\
    &= (\tau - 1)\Psi^{\ast}\left(-\frac{1}{\tau-1}s^{\tau-1}\right) - (\tau - 1)\max_{\beta}\left\{\left\langle -\frac{1}{\tau - 1}s^{\tau}, \beta \right\rangle - \Psi(\beta)\right\} + \frac{\|g^{\tau}\|^2}{2\sigma(\tau - 1)} \\
    &= (\tau - 1)\Psi^{\ast}\left(-\frac{1}{\tau-1}s^{\tau-1}\right) - \max_{\beta}\left\{\langle -s^{\tau}, \beta \rangle - (\tau - 1)\Psi(\beta)\right\} + \frac{\|g^{\tau}\|^2}{2\sigma(\tau - 1)} \\
    &\leq (\tau - 1)\Psi^{\ast}\left(-\frac{1}{\tau-1}s^{\tau-1}\right) + \langle s^{\tau}, \beta^{\tau + 1} \rangle + (\tau - 1) \Psi(\beta^{\tau + 1}) + \frac{\|g^{\tau}\|^2}{2\sigma(\tau - 1)} \\
    &= (\tau - 1)\Psi^{\ast}\left(-\frac{1}{\tau-1}s^{\tau-1}\right) + \langle s^{\tau}, \beta^{\tau + 1} \rangle + \tau \Psi(\beta^{\tau + 1}) - \Psi(\beta^{\tau + 1}) + \frac{\|g^{\tau}\|^2}{2\sigma(\tau - 1)}.
\end{align*}
Since $\beta^{\tau + 1} = \argmin_{\beta}\left\{\langle s^{\tau}, \beta \rangle + \tau\Psi(\beta)\right\}$,
\begin{align*}
    \langle \beta^{\tau}, g^{\tau}\rangle &\leq (\tau - 1)\Psi^{\ast}\left(-\frac{1}{\tau-1}s^{\tau-1}\right) + \min_{\beta}\left\{\langle s^{\tau}, \beta \rangle + \tau \Psi(\beta)\right\} - \Psi(\beta^{\tau + 1}) + \frac{\|g^{\tau}\|^2}{2\sigma(\tau - 1)} \\
    &= (\tau - 1)\Psi^{\ast}\left(-\frac{1}{\tau-1}s^{\tau-1}\right) - \max_{\beta}\left\{\langle -s^{\tau}, \beta \rangle - \tau \Psi(\beta)\right\} - \Psi(\beta^{\tau + 1}) + \frac{\|g^{\tau}\|^2}{2\sigma(\tau - 1)} \\
    &= (\tau - 1)\Psi^{\ast}\left(-\frac{1}{\tau-1}s^{\tau-1}\right) - \tau\Psi^{\ast}\left(-\frac{1}{\tau}s^{\tau}\right) - \Psi(\beta^{\tau + 1}) + \frac{\|g^{\tau}\|^2}{2\sigma(\tau - 1)}.
\end{align*}
Thus, for any $\tau \geq 2$:
\begin{align}
\label{eq:inner_prod_bound_tau}
    \langle \beta^{\tau}, g^{\tau}\rangle + \Psi(\beta^{\tau + 1}) \leq (\tau - 1) \Psi^{\ast}\left(-\frac{1}{\tau-1}s^{\tau-1}\right) - \tau \Psi^{\ast}\left(-\frac{1}{\tau}s^{\tau}\right) + \frac{\|g^{\tau}\|^2}{2\sigma(\tau - 1)} 
\end{align}

Similarly, from Lemma \ref{lem:gradient_of_conjugate} and the assumption that $\beta^1= \argmin_{\beta}\Psi(\beta)$:
\begin{align}
\label{eq:v_gradient_0}
    \nabla \Psi^{\ast}(0) &= \beta^1.
\end{align}
Furthermore, from the smoothness of $\Psi^{\ast}$, we get:
\begin{align}
\label{eq:v_smoothness_0}
    \langle \nabla \Psi^{\ast}(0), g^1\rangle &= \langle \nabla \Psi^{\ast}(-s^0), -s^0 + s^1\rangle \nonumber\\
    &\leq \Psi^{\ast}(-s^0) - \Psi^{\ast}(-s^1) + \frac{1}{2\sigma}\|-s^0 + s^1\|^2 \nonumber\\
    &= \Psi^{\ast}(-s^0) - \Psi^{\ast}(-s^1) + \frac{\|g^1\|^2}{2\sigma}.
\end{align}
By combining \eqref{eq:v_gradient_0} and \eqref{eq:v_smoothness_0}, we have:
\begin{align*}
    \langle \beta^1, g^1\rangle &\leq \Psi^{\ast}(-s^0) - \Psi^{\ast}(-s^1) + \frac{\|g^1\|^2}{2\sigma}.
\end{align*}
Thus, we have for $\tau = 1$:
\begin{align}
\label{eq:inner_prod_bound_1}
    \langle \beta^1, g^1\rangle + \Psi(\beta^2) \leq \Psi^{\ast}(-s^0) - \Psi^{\ast}(-s^1) + \frac{\|g^1\|^2}{2\sigma} + \Psi(\beta^2).
\end{align}
By summing \eqref{eq:inner_prod_bound_tau} and \eqref{eq:inner_prod_bound_1} from $\tau=1$ to $t$, we have:
\begin{align*}
    \sum_{\tau=1}^t \left(\langle \beta^{\tau}, g^{\tau}\rangle + \Psi(\beta^{\tau + 1})\right) &\leq \Psi^{\ast}(-s^0) - t\Psi^{\ast}\left(-\frac{1}{t}s^t\right) + \Psi(\beta^2) + \frac{1}{2\sigma}\left(\|g^1\|^2 + \sum_{\tau=2}^t\frac{\|g^{\tau}\|^2}{\tau - 1}\right) \\
    &= - t\Psi^{\ast}\left(-\frac{1}{t}s^t\right) + \Psi(\beta^2) - \Psi(\beta^1) + \frac{1}{2\sigma}\left(\|g^1\|^2 + \sum_{\tau=2}^t\frac{\|g^{\tau}\|^2}{\tau - 1}\right).
\end{align*}
Since $\beta^1 = \argmin_{\beta} \Psi(\beta)$, we have $\Psi(\beta^{t+1}) \geq \Psi(\beta^1)$.
Thus, adding $\Psi(\beta^1) - \Psi(\beta^{t+1}) \leq 0$ to the above inequality yields:
\begin{align*}
    \sum_{\tau=1}^t \left(\langle \beta^{\tau}, g^{\tau}\rangle + \Psi(\beta^{\tau })\right) &\leq - t\Psi^{\ast}\left(-\frac{1}{t}s^t\right) + \Psi(\beta^2) - \Psi(\beta^1) + \frac{1}{2\sigma}\left(\|g^1\|^2 + \sum_{\tau=2}^t\frac{\|g^{\tau}\|^2}{\tau - 1}\right).
\end{align*}
\end{proof} % lem:telescoping_sum

\begin{lemma}
\label{lem:psi_sub}
We have:
\begin{align*}
    \Psi(\beta^2) - \Psi(\beta^1)  \leq \frac{2}{\sigma}\|g^1\|^2.
\end{align*}
\end{lemma}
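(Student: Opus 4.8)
The plan is to bound $\Psi(\beta^2)-\Psi(\beta^1)$ using only the optimality characterizations of $\beta^1$ and $\beta^2$ together with the smoothness of the conjugate $\Psi^{\ast}$. First I would observe that since $s^0 = 0$ we have $s^1 = g^1$, so that $\beta^2 = \argmin_{\beta}\{\langle s^1,\beta\rangle + \Psi(\beta)\} = \argmin_{\beta}\{\langle g^1,\beta\rangle + \Psi(\beta)\}$, whereas $\beta^1 = \argmin_{\beta}\Psi(\beta)$. Comparing the value of the map $\beta \mapsto \langle g^1,\beta\rangle + \Psi(\beta)$ at its minimizer $\beta^2$ with its value at $\beta^1$ gives $\langle g^1,\beta^2\rangle + \Psi(\beta^2) \le \langle g^1,\beta^1\rangle + \Psi(\beta^1)$, hence $\Psi(\beta^2) - \Psi(\beta^1) \le \langle g^1, \beta^1 - \beta^2\rangle \le \|g^1\|\,\|\beta^1 - \beta^2\|$ by Cauchy--Schwarz.

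The remaining step is to control $\|\beta^1 - \beta^2\|$. I would invoke Lemma~\ref{lem:gradient_of_conjugate}, which (as already used in the proof of Lemma~\ref{lem:telescoping_sum}) identifies $\beta^1 = \nabla\Psi^{\ast}(0)$ and $\beta^2 = \nabla\Psi^{\ast}(-s^1) = \nabla\Psi^{\ast}(-g^1)$. Since $\Psi$ is $\sigma$-strongly convex, $\Psi^{\ast}$ is $(1/\sigma)$-smooth, so $\nabla\Psi^{\ast}$ is $(1/\sigma)$-Lipschitz, giving $\|\beta^1 - \beta^2\| \le \frac{1}{\sigma}\|g^1\|$. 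Substituting this back yields $\Psi(\beta^2) - \Psi(\beta^1) \le \frac{1}{\sigma}\|g^1\|^2 \le \frac{2}{\sigma}\|g^1\|^2$, which is the claimed bound (in fact with a factor to spare).

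An equivalent self-contained route avoids $\Psi^{\ast}$ entirely: the function $h(\beta) = \langle g^1,\beta\rangle + \Psi(\beta)$ is itself $\sigma$-strongly convex with minimizer $\beta^2$, so $h(\beta^1) \ge h(\beta^2) + \frac{\sigma}{2}\|\beta^1 - \beta^2\|^2$; rearranging gives $\Psi(\beta^2) - \Psi(\beta^1) \le \langle g^1,\beta^1-\beta^2\rangle - \frac{\sigma}{2}\|\beta^1-\beta^2\|^2 \le \|g^1\|\,d - \frac{\sigma}{2}d^2$ with $d = \|\beta^1-\beta^2\|$, and the right-hand side is at most $\frac{\|g^1\|^2}{2\sigma}$. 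I do not anticipate any genuine obstacle here; the only points deserving a line of care are that $\beta^1$ is the minimizer of $\Psi$ over the same feasible (projection) box, so the optimality comparison is legitimate, and that $\sigma$ is precisely the strong-convexity constant of $\Psi$ restricted to that box — both already pinned down in the surrounding setup. This lemma is a routine stability estimate feeding into the telescoping bound of Lemma~\ref{lem:telescoping_sum}.
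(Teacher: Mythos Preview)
Your proposal is correct and follows essentially the same skeleton as the paper: both start from the optimality of $\beta^2$ for $\langle g^1,\cdot\rangle + \Psi(\cdot)$ to get $\Psi(\beta^2)-\Psi(\beta^1)\le \|g^1\|\,\|\beta^1-\beta^2\|$, and then control $\|\beta^1-\beta^2\|$ via $\sigma$-strong convexity. The only difference is in that second step: the paper combines the strong-convexity inequality $\Psi(\beta^2)\ge \Psi(\beta^1)+\tfrac{\sigma}{2}\|\beta^1-\beta^2\|^2$ (from first-order optimality of $\beta^1$) with the first inequality to obtain $\|\beta^1-\beta^2\|\le \tfrac{2}{\sigma}\|g^1\|$, whereas your primary route uses the $(1/\sigma)$-Lipschitzness of $\nabla\Psi^\ast$ to get the sharper $\|\beta^1-\beta^2\|\le \tfrac{1}{\sigma}\|g^1\|$, and your alternative route optimizes the resulting quadratic directly to reach $\tfrac{1}{2\sigma}\|g^1\|^2$. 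All three arguments are valid; yours happen to yield tighter constants, but the structure and the ingredients are the same.
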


%proof
\begin{proof}[Proof of Lemma \ref{lem:psi_sub}]
Since $\beta^2 = \argmin_{\beta}\{\langle s^1, \beta\rangle + \Psi(\beta)\}$, we have:
\begin{align*}
    \langle s^1, \beta^2 \rangle + \Psi(\beta^2) \leq \langle s^1, \beta^1 \rangle + \Psi(\beta^1),
\end{align*}
and then:
\begin{align*}
    \Psi(\beta^2) - \Psi(\beta^1) \leq \langle s^1, \beta^1 - \beta^2 \rangle \leq \|s^1\|\|\beta^1 - \beta^2\| = \|g^1\|\|\beta^1 - \beta^2\|.
\end{align*}
On the other hand, by strong convexity of $\Psi$ and the first-order optimality condition for $\beta^1$, we have:
\begin{align*}
    \Psi(\beta^2) \geq \Psi(\beta^1) + \frac{\sigma}{2}\|\beta^1 - \beta^2\|^2.
\end{align*}
By combining these inequalities, we have:
\begin{align*}
    \frac{\sigma}{2}\|\beta^1 - \beta^2\|^2 \leq \|g^1\|\|\beta^1 - \beta^2\|,
\end{align*}
and then:
\begin{align*}
    \|\beta^1 - \beta^2\| \leq \frac{2}{\sigma}\|g^1\|.
\end{align*}
Therefore, we have:
\begin{align*}
    \Psi(\beta^2) - \Psi(\beta^1)  \leq \frac{2}{\sigma}\|g^1\|^2.
\end{align*}
\end{proof}

\begin{lemma}\label{lem_UCB_proof2}

For any $t\geq 1$ and $(j({\tau}))_{\tau=1}^t$:
\begin{align}\label{ineq_beta_bound5}
\|\beta^{\DA,\UCB(k)}-\beta^{*,\UCB(k)}\|^2\le O(\frac{C\log T_k - R_{T_k}(\beta)}{T_k}).
\end{align}
where $C = \frac{\|v\|^2}{\sigma^2} $ %$\frac{2}{\sigma^2 t}\left(2\|v\|^2 + \frac{1}{2}\left(\|v\|^2 + \sum_{\tau=2}^t\frac{\|v\|^2}{\tau - 1}\right)\right) = G+\frac{\|v\|^2\log T}{\sigma^2} = O(\log T)$.
\end{lemma}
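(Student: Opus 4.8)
The plan is to obtain the bound purely by assembling the three preceding pathwise estimates, Lemma~\ref{lem:distance_upper_bound}, Lemma~\ref{lem:telescoping_sum} and Lemma~\ref{lem:psi_sub}, and then specializing to $t=T_k$. All three hold for an arbitrary number of rounds $t$ and an arbitrary realized type sequence $(j(\tau))_{\tau=1}^t$, so the resulting bound is deterministic for each realization, with randomness entering only through $T_k$ and the online regret $R_{T_k}(\beta)$. Here $\beta^{\DA,\UCB(k)}$ is the multiplier produced by the $k$th instance of DA after its $T_k$ rounds, i.e.\ $\beta^{t+1}$ with $t=T_k$ in the notation of those lemmas, and $\beta^{*,\UCB(k)}$ is $\beta^{\ast}$ for that instance (whose frozen value matrix is $v=\vucbclip_{i,j}(t_k)$, so $\|v\|=\|v\|_\infty$ is the constant appearing in $C$).

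First I would substitute the bound on $\sum_{\tau=1}^t\bigl(\langle g^{\tau},\beta^{\tau}\rangle+\Psi(\beta^{\tau})\bigr)$ from Lemma~\ref{lem:telescoping_sum} into the right-hand side of Lemma~\ref{lem:distance_upper_bound}. The two $\pm t\Psi^{\ast}(-\tfrac1t s^t)$ contributions cancel, and by the definition of the online regret the leftover sum $\sum_{\tau=1}^t\bigl(f(\beta^{\ast},j(\tau))+\Psi(\beta^{\ast})-f(\beta^{\tau},j(\tau))-\Psi(\beta^{\tau})\bigr)$ is exactly $-R_t(\beta)$, leaving
\begin{equation*}
\frac{\sigma t}{2}\|\beta^{t+1}-\beta^{\ast}\|^2\le \Psi(\beta^2)-\Psi(\beta^1)+\frac{1}{2\sigma}\Bigl(\|g^1\|^2+\sum_{\tau=2}^{t}\frac{\|g^{\tau}\|^2}{\tau-1}\Bigr)-R_t(\beta).
\end{equation*}
Then Lemma~\ref{lem:psi_sub} replaces $\Psi(\beta^2)-\Psi(\beta^1)$ by $\tfrac{2}{\sigma}\|g^1\|^2$. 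Next I bound the gradients: since $f(\cdot,j)=\max_{i\in N}v_{i,j}\beta_i$ is a pointwise maximum of linear maps, every $g^{\tau}\in\partial f(\beta^{\tau},j(\tau))$ equals $v_{i^{\star},j(\tau)}e_{i^{\star}}$ for the maximizing coordinate, so $\|g^{\tau}\|^2\le\|v\|^2$; together with $\sum_{\tau=2}^{t}\tfrac{1}{\tau-1}\le 1+\log t$ this gives $\|g^1\|^2+\sum_{\tau=2}^t\tfrac{\|g^{\tau}\|^2}{\tau-1}\le\|v\|^2(2+\log t)$. Collecting terms yields $\tfrac{\sigma t}{2}\|\beta^{t+1}-\beta^{\ast}\|^2\le\tfrac{\|v\|^2}{\sigma}\bigl(3+\tfrac{\log t}{2}\bigr)-R_t(\beta)$, and dividing by $\sigma t/2$,
\begin{equation*}
\|\beta^{t+1}-\beta^{\ast}\|^2\le \frac{\|v\|^2(6+\log t)}{\sigma^2 t}-\frac{2R_t(\beta)}{\sigma t}=\frac{C(6+\log t)}{t}-\frac{2R_t(\beta)}{\sigma t},
\end{equation*}
using $C=\|v\|^2/\sigma^2$. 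Specializing to $t=T_k$ and absorbing the additive $6$ (via $6=O(\log T_k)$) and the factor $2/\sigma$ — both $O(1)$ since $\sigma=l^2(\min_i B_i)/(1+\delta_0)$ is a fixed problem constant — into the $O(\cdot)$ gives the claimed $\|\beta^{\DA,\UCB(k)}-\beta^{*,\UCB(k)}\|^2\le O\!\bigl(\tfrac{C\log T_k-R_{T_k}(\beta)}{T_k}\bigr)$.

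I do not expect a genuine obstacle here: all the analytic content already lives in Lemmas~\ref{lem:distance_upper_bound}--\ref{lem:psi_sub}, and this statement is essentially their arithmetic assembly. The only places demanding care are the telescoping cancellation of the $\Psi^{\ast}$ terms and matching the residual $f,\Psi$-sum to $-R_t(\beta)$ with the correct sign, plus tracking the powers of $\sigma$ so that the final estimate comes out in terms of $C=\|v\|^2/\sigma^2$ rather than some other combination; the harmonic-sum step is standard.
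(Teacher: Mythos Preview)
Your proposal is correct and follows essentially the same route as the paper: start from Lemma~\ref{lem:distance_upper_bound}, plug in Lemma~\ref{lem:telescoping_sum} so the $t\Psi^\ast(-\tfrac1t s^t)$ terms cancel and the residual $f,\Psi$-sum collapses to $-R_t(\beta)$, then apply Lemma~\ref{lem:psi_sub} and the gradient bound $\|g^\tau\|\le\|v\|$ before dividing through and specializing to $t=T_k$. Your write-up is in fact more explicit than the paper's (which skips the harmonic-sum and constant-tracking details and jumps directly to the $O(\cdot)$ conclusion), but the logical skeleton is identical.
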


\begin{proof}[Proof of Lemma \ref{lem_UCB_proof2}]
From lemma \ref{lem:distance_upper_bound} ,for any $t\geq 1$ and $(j({\tau}))_{\tau=1}^t$,
\begin{align}\label{ineq_beta_bound1}
    \frac{\sigma t}{2}\|\beta^{t+1} - \beta^{\ast}\|^2 &\leq  t\Psi^{\ast}\left(-\frac{1}{t}s^t\right) + \sum_{\tau=1}^t\left(\langle v^{\tau}, \beta^{\tau} \rangle + \Psi(\beta^{\tau})\right) + \sum_{\tau=1}^t\left(f(\beta^{\ast}, j({\tau})) + \Psi(\beta^{\ast}) - f(\beta^{\tau}, j({\tau})) - \Psi(\beta^{\tau})\right)\\
    & = t\Psi^{\ast}\left(-\frac{1}{t}s^t\right) + \sum_{\tau=1}^t\left(\langle v^{\tau}, \beta^{\tau} \rangle + \Psi(\beta^{\tau})\right) - R_t(\beta).
\end{align}

$R_t(\beta)=\sum_{\tau=1}^t\left(f(\beta^{\tau}, j({\tau})) - \Psi(\beta^{\tau}) - f(\beta^{\ast}, j({\tau})) + \Psi(\beta^{\ast})\right)$.

By combining this inequality and lemma \ref{lem:telescoping_sum},
\begin{align}\label{ineq_beta_bound2}
    \frac{\sigma t}{2}\|\beta^{t+1} - \beta^{\ast}\|^2 &\leq \Psi(\beta^2) - \Psi(\beta^1) + \frac{1}{2\sigma}\left(\|g^1\|^2 + \sum_{\tau=2}^t\frac{\|g^{\tau}\|^2}{\tau - 1}\right) - R_t(\beta).
\end{align}

By combining this inequality and lemma \ref{lem:psi_sub},
\begin{align}\label{ineq_beta_bound3}
    \frac{\sigma t}{2}\|\beta^{t+1} - \beta^{\ast}\|^2 &\leq \frac{2}{\sigma}\|g^1\|^2+ \frac{1}{2\sigma}\left(\|g^1\|^2 + \sum_{\tau=2}^t\frac{\|g^{\tau}\|^2}{\tau - 1}\right) - R_t(\beta).
\end{align}
Thus,

\begin{align}\label{ineq_beta_bound4}
    \|\beta^{t+1} - \beta^{\ast}\|^2 &\leq \frac{2}{\sigma^2 t}\left(2\|v^1\|^2 + \frac{1}{2}\left(\|v^1\|^2 + \sum_{\tau=2}^t\frac{\|v^{\tau}\|^2}{\tau - 1}\right)\right) - \frac{2}{\sigma t}R_t(\beta).
\end{align}

This inequality implies that, 
\begin{align}
    \|\beta^{t+1} - \beta^{\ast}\|^2 &\leq O(\frac{C\log t - R_t(\beta)}{t}).
\end{align}
where $C = \frac{\|v\|^2}{\sigma^2} $.

\end{proof}

\begin{lemma}
\label{lem:utility_bound}
%Assume that $B_i=1/n$, then 
For any $k$,
\begin{align*}
\|u^{\DA,\RUCB(k)}-u^{*,\RUCB(k)}\|\le O(\frac{1}{\min_i B_i}\sqrt{\frac{C\log T_k - R_{T_k}(\beta)}{T_k}}).
\end{align*}
\end{lemma}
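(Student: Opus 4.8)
The plan is to transport the bound on the multipliers from Lemma~\ref{lem_UCB_proof2} to a bound on the mean utilities, exploiting the fact that, within the $k$-th instance of DA, an agent's utility and her multiplier are tied together by a smooth, well-conditioned map. Concretely, for the EG program solved in the $k$-th instance (whose value matrix is the fixed clipped matrix $\{\vucbclip_{i,j}(t_k)\}$), complementary slackness of the EG primal--dual pair gives the competitive-equilibrium identity $u_i^{*,\RUCB(k)} = B_i/\beta_i^{*,\UCB(k)}$; on the other side, DA-Iter sets $\beta_i^{\DA,\UCB(k)} = \mathrm{Proj}_{[B_i/(h(1+\delta_0)),(1+\delta_0)/l]}\bigl(B_i/u_i^{\DA,\RUCB(k)}\bigr)$, since $u_i^{\DA,\RUCB(k)}$ is exactly the running mean DA-utility $\bar u_i^{\DA}$ at the end of the instance. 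So both sides are obtained from the corresponding $\beta$ by inverting $\beta \mapsto B_i/\beta$, and it remains to show this inversion is Lipschitz on the relevant range.

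First I would argue the projection in $\beta_i^{\DA,\UCB(k)}$ is inactive, so that $u_i^{\DA,\RUCB(k)} = B_i/\beta_i^{\DA,\UCB(k)}$ holds with equality. The lower clip can never bind because each per-round DA-utility is at most $h$, hence $\bar u_i^{\DA}\le h < h(1+\delta_0)$; the upper clip binds only if $\bar u_i^{\DA} < lB_i/(1+\delta_0)$, which is ruled out by the same balanced-allocation argument used for DA-EtC (a starved agent has a blown-up multiplier and is prioritized), and in any case $\beta_i^{\DA,\UCB(k)}$ stays in $[B_i/(h(1+\delta_0)),(1+\delta_0)/l]$. Second, on that interval the map $g_i(\beta)=B_i/\beta$ has derivative bounded by $\sup_\beta B_i/\beta^2 \le (h(1+\delta_0))^2/B_i$, so by the mean value theorem $|u_i^{\DA,\RUCB(k)} - u_i^{*,\RUCB(k)}| \le \frac{(h(1+\delta_0))^2}{B_i}\,|\beta_i^{\DA,\UCB(k)} - \beta_i^{*,\UCB(k)}|$. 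Squaring, summing over $i$, and pulling the worst $1/B_i$ out of the sum gives $\|u^{\DA,\RUCB(k)} - u^{*,\RUCB(k)}\| \le \frac{(h(1+\delta_0))^2}{\min_i B_i}\,\|\beta^{\DA,\UCB(k)} - \beta^{*,\UCB(k)}\|$; taking square roots in Lemma~\ref{lem_UCB_proof2} and absorbing the constants $h,\delta_0$ into the $O(\cdot)$ yields the claim.

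I expect the main obstacle to be the step that makes $u_i^{\DA,\RUCB(k)} = B_i/\beta_i^{\DA,\UCB(k)}$ hold exactly, i.e.\ that the projection is (eventually) inactive: this needs that within each DA instance the allocation stays balanced enough that no agent's running mean utility collapses, which leans on the structure of DA-Iter and must be checked to survive the i.i.d.-within-instance regime that RDA-UCB is engineered to create. A more routine secondary point is verifying the competitive-equilibrium identity $u_i^{*}=B_i/\beta_i^{*}$ for the clipped-UCB EG program, i.e.\ that restricting the values to $[l,h]$ leaves the KKT relation between primal utilities and dual multipliers intact (it does, since $l\le v_{i,j}\le h$ keeps the clipped matrix a bona fide strictly positive instance).
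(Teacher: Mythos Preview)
Your approach is essentially the same as the paper's: both transport the $\beta$-bound of Lemma~\ref{lem_UCB_proof2} to a $u$-bound through the correspondence $\beta_i \leftrightarrow B_i/u_i$. The only cosmetic differences are that the paper does not argue the projection is inactive (it simply writes $\beta_i^{\DA,\UCB(k)}=O(B_i/u_i^{\DA,\RUCB(k)})$ and moves on), and in place of your Lipschitz estimate on $\beta\mapsto B_i/\beta$ it uses the algebraic inequality $|u_i-u_i^*|\le |1/u_i-1/u_i^*|=|\beta_i-\beta_i^*|/B_i$, valid because $u_i,u_i^*\le \|v\|_\infty\le 1$, which directly yields the $1/\min_i B_i$ factor.
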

\begin{proof}[Proof of Lemma \ref{lem:utility_bound}]
From lemma \ref{lem_UCB_proof2},
%(\ref{ineq_beta_bound5}),
\begin{align}
\|\beta^{\DA,\UCB(k)}-\beta^{*,\UCB(k)}\|^2\le O(\frac{C\log T_k - R_{T_k}(\beta)}{T_k}),
\end{align}
which implies
\begin{align}\label{ineq_beta_bound_UCB}
\|\beta^{\DA,\UCB(k)}-\beta^{*,\UCB(k)}\|\le O(\sqrt{\frac{C\log T_k - R_{T_k}(\beta)}{T_k}}).
\end{align}
By definition of $\beta$, 
\begin{align}
\beta^{*,\UCB(k)}_i = \frac{B_i}{u^{*,\RUCB(k)}_i}\\
\beta^{\DA,\UCB(k)}_i = O\left( \frac{B_i}{u^{\DA,\RUCB(k)}_i} \right). 
\end{align}

By combining (\ref{ineq_beta_bound_UCB}) and definition of $\beta$,
\begin{align}
(\min_i B_i)\|\frac{1}{u^{\DA,\RUCB(k)}}-\frac{1}{u^{*,\RUCB(k)}}\|\le O(\sqrt{\frac{C\log T_k - R_{T_k}(\beta)}{T_k}}).
\end{align}
Thus,
\begin{align}\label{ineq_utility_bound}
(\min_i B_i)\|\frac{u^{\DA,\RUCB(k)}-u^{*,\RUCB(k)}}{u^{\DA,\RUCB(k)}u^{*,\RUCB(k)}}\|\le O(\sqrt{\frac{C\log T_k - R_{T_k}(\beta)}{T_k}}).
\end{align}
By $\|v\|_\infty \le 1$, we get $0\le u^{\DA,\RUCB(k)}_i \le1$, $0\le u^{*,\RUCB(k)}_i \le1$.

Thus, (\ref{ineq_utility_bound}) imply that,
\begin{align}
\|u^{\DA,\RUCB(k)}-u^{*,\RUCB(k)}\|\le O(\frac{1}{\min_i B_i}\sqrt{\frac{C\log T_k - R_{T_k}(\beta)}{T_k}})
\end{align}
Moreover, combining this inequality and $0\le u_i^{\DA,\RUCB(k)}u_i^{*,\RUCB(k)} \le1$,

\begin{align}\label{ineq_utility_bound_UCB}
\|u^{\DA,\RUCB(k)}-u^{*,\RUCB(k)}\|\le O(\frac{1}{\min_i B_i}\sqrt{\frac{C\log T_k - R_{T_k}(\beta)}{T_k}})
\end{align}
\end{proof} %lem:utility_bound

The following lemma bounds the online regret $R_\tau$ uniformly over the rounds $\tau$.
% Ville's inequality https://en.wikipedia.org/wiki/Ville%27s_inequality
\begin{lemma}{\rm (Martingale bound for DA)}\label{lem_martingale}
Assume that $\Delta_t(\beta) \le C \log T$ for all $T$. Then, for any $a>0$
\begin{equation}\label{ineq_martingale}
\Pr[
\sup_{\tau} (-R_\tau) + C\log T \ge  a
] \le \frac{C \log T}{a}.
\end{equation}
\end{lemma}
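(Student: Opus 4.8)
The plan is to exhibit $S_\tau := -R_\tau(\beta) + C\log T$ as a \emph{nonnegative supermartingale} with deterministic initial value $S_0 = C\log T$ (since $R_0(\beta)=0$), and then invoke the maximal inequality for nonnegative supermartingales (Ville's inequality), which states $\Pr[\sup_\tau S_\tau \ge a]\le \Ex[S_0]/a$. This is exactly the claimed bound \eqref{ineq_martingale}; note that for $a\le C\log T$ it is vacuous, so the content lies in the regime $a>C\log T$.

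I would fix the instance of DA under consideration; throughout it the value matrix $v=\vucbclip_{i,j}(t_k)$ is held constant, so the functions $f(\beta,j)=\max_{i\in N}v_{i,j}\beta_i$ and $\Psi(\beta)=-\sum_{i\in N}B_i\log\beta_i$, the expected dual EG objective $D(\beta):=\Ex_{j\sim\mS}[f(\beta,j)]+\Psi(\beta)=\sum_{j\in M}s_jf(\beta,j)+\Psi(\beta)$, and its minimizer $\beta^{\ast}$ are all well defined and fixed. Let $\mathcal{F}_\tau$ denote the $\sigma$-algebra generated by the first $\tau$ item types $j(1),\dots,j(\tau)$. The multiplier $\beta^\tau$ used by DA-Iter at round $\tau$ is computed from the history strictly before that round, hence is $\mathcal{F}_{\tau-1}$-measurable, whereas $\beta^{\ast}$ is deterministic. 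With the per-round term $X_\tau:=\bigl(f(\beta^\tau,j(\tau))+\Psi(\beta^\tau)\bigr)-\bigl(f(\beta^{\ast},j(\tau))+\Psi(\beta^{\ast})\bigr)$, so that $S_\tau-S_{\tau-1}=-X_\tau$, and using that $j(\tau)\sim\mS$ is independent of $\mathcal{F}_{\tau-1}$,
\[
\Ex[X_\tau\mid\mathcal{F}_{\tau-1}]=D(\beta^\tau)-D(\beta^{\ast})\ge 0,
\]
the inequality being the statement that $\beta^{\ast}$ minimizes $D$. Hence $\Ex[S_\tau-S_{\tau-1}\mid\mathcal{F}_{\tau-1}]\le 0$, and since the multipliers all lie in the compact projection box of DA-Iter so that $|S_\tau|$ is bounded (hence integrable), $(S_\tau)_\tau$ is a supermartingale.

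Nonnegativity of $S_\tau$ is precisely where the hypothesis $\Delta_\tau(\beta)\le C\log T$ enters: $\Delta_\tau(\beta)$ is the \emph{pathwise} upper bound on the online regret $R_\tau(\beta)$ obtained in the proof of Lemma~\ref{lem_UCB_proof2}, where nonnegativity of $\tfrac{\sigma\tau}{2}\|\beta^{\tau+1}-\beta^{\ast}\|^2$ yields $R_\tau(\beta)\le \tfrac{2}{\sigma}\|g^1\|^2+\tfrac{1}{2\sigma}\bigl(\|g^1\|^2+\sum_{\tau'=2}^{\tau}\|g^{\tau'}\|^2/(\tau'-1)\bigr)$, which is bounded using $\|g^{\tau'}\|\le\|v\|$ and $\sum_{\tau'\le\tau}1/(\tau'-1)=O(\log T)$. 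The hypothesis then gives $R_\tau(\beta)\le C\log T$ surely, i.e. $S_\tau=-R_\tau(\beta)+C\log T\ge 0$ for every $\tau$. Combining this with the supermartingale property and applying Ville's inequality gives $\Pr[\sup_\tau S_\tau\ge a]\le\Ex[S_0]/a=(C\log T)/a$, which is \eqref{ineq_martingale}.

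The one genuinely delicate point I expect is the conditional-expectation identity $\Ex[X_\tau\mid\mathcal{F}_{\tau-1}]=D(\beta^\tau)-D(\beta^{\ast})$: it must be argued with care, using that $\beta^\tau$ is \emph{predictable} (chosen before the type $j(\tau)$ is revealed) and that $\beta^{\ast}$ optimizes the \emph{expected} dual objective $D$ rather than any realized term, and it is valid only because within a single DA instance of RDA-UCB the UCB value matrix --- hence $f$, $\Psi$, $D$, $\beta^{\ast}$ --- does not change. Everything else is a routine application of the nonnegative-supermartingale maximal inequality.
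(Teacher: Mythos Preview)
Your proposal is correct and follows essentially the same approach as the paper: identify $S_\tau=-R_\tau+C\log T$ as a nonnegative supermartingale and apply Ville's inequality with $S_0=C\log T$. The paper's own proof is a two-line sketch that simply asserts ``$R_t$ is a submartingale'' and cites Theorem~1(a) of Xiao for $R_t\le C\log T$; you supply the missing justification for the submartingale property via the predictability of $\beta^\tau$ and the optimality of $\beta^\ast$ for the expected dual objective, which is exactly the argument that underlies the paper's assertion.
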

\begin{proof}[Proof of Lemma \ref{lem_martingale}]
Since $R_t \le C \log T$ (Theorem 1 (a) in Xiao) and $R_t$ is a submartingale, $S_t = -R_t + C \log T$ is a non-negative supermartingale. Ville's inequality implies for any supermartingale 
\begin{equation}
\Pr[\sup_\tau S_\tau \ge a] \le \frac{S_0}{a},
\end{equation}
which is Eq.~\eqref{ineq_martingale}.
\end{proof} %Proof of Lemma \ref{lem_martingale}

\begin{lemma}
\label{lem:utility_bound_martingale}
%Assume that $B_i=1/n$, then 
Let 
\begin{equation}\label{ineq:utility_bound_martingale}
\mX_{ka} = \left\{
\|u^{\DA,\RUCB(k)}-u^{*,\RUCB(k)}\|\le O(\frac{1}{\min_i B_i}\sqrt{\frac{a}{T_k}})
\right\}.
\end{equation}
With probability at least $1 - \frac{C \log T}{a}$, event $\mX_{ka}$ holds.
\end{lemma}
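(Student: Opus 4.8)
The plan is to obtain the high‑probability bound by feeding the martingale tail estimate of Lemma~\ref{lem_martingale} into the deterministic inequality of Lemma~\ref{lem:utility_bound}. Recall that Lemma~\ref{lem:utility_bound} already yields, for every realization,
\[
\|u^{\DA,\RUCB(k)}-u^{*,\RUCB(k)}\|\le O\!\left(\frac{1}{\min_i B_i}\sqrt{\frac{C\log T_k - R_{T_k}(\beta)}{T_k}}\right),
\]
so the only quantity inside this bound that is not under direct control is the online‑regret term $-R_{T_k}(\beta)$. The strategy is to replace it by a quantity that is uniform over all rounds of the $k$‑th DA instance and for which Lemma~\ref{lem_martingale} supplies a tail bound.

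First I would use $T_k\le T$ together with the trivial inequality $-R_{T_k}(\beta)\le\sup_{\tau}(-R_\tau(\beta))$, where the supremum ranges over the rounds $\tau$ of the $k$‑th instance, to get
\[
C\log T_k - R_{T_k}(\beta)\le\sup_{\tau}\bigl(-R_\tau(\beta)\bigr) + C\log T.
\]
Second, I would invoke Lemma~\ref{lem_martingale}, which (via Ville's inequality applied to the non‑negative supermartingale $S_\tau=-R_\tau(\beta)+C\log T$) gives
\[
\Pr\!\left[\sup_{\tau}\bigl(-R_\tau(\beta)\bigr)+C\log T\ge a\right]\le\frac{C\log T}{a}.
\]
Hence with probability at least $1-\frac{C\log T}{a}$ the event $\{\sup_\tau(-R_\tau(\beta))+C\log T<a\}$ holds, and on this event $C\log T_k - R_{T_k}(\beta)\le a$. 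Substituting into the bound of Lemma~\ref{lem:utility_bound} then gives $\|u^{\DA,\RUCB(k)}-u^{*,\RUCB(k)}\|\le O\!\left(\frac{1}{\min_i B_i}\sqrt{\frac{a}{T_k}}\right)$, i.e.\ the event $\mX_{ka}$ occurs, which is exactly the claim.

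The only point that requires care — and the main obstacle — is making the supermartingale setup of Lemma~\ref{lem_martingale} rigorous for a single DA instance whose length $T_k$ is itself a stopping time (the instance ends when some count $N_{i,j}$ reaches a power of two). One has to check that $R_\tau(\beta)$ is a submartingale with respect to the natural filtration generated by the item arrivals inside the instance, so that $S_\tau$ is a non‑negative supermartingale and Ville's inequality applies, and that the comparison $T_k\le T$ and the uniform‑in‑$\tau$ supremum can be invoked simultaneously. Once that is granted, everything else is a direct substitution requiring no further estimates.
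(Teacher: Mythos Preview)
Your proposal is correct and matches the paper's approach exactly: the paper's proof is the single sentence ``The lemma immediately follows from Lemmas~\ref{lem:utility_bound} and~\ref{lem_martingale},'' and you have spelled out precisely how those two lemmas combine (replacing $C\log T_k - R_{T_k}(\beta)$ by the uniform-in-$\tau$ quantity $\sup_\tau(-R_\tau)+C\log T$ via $T_k\le T$, then applying the Ville-inequality bound). Your caveat about $T_k$ being a stopping time is a legitimate technical point that the paper does not address explicitly either, so you are not missing anything relative to the original argument.
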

\begin{proof}[Proof of Lemma \ref{lem:utility_bound_martingale}]
The lemma immediately follows from Lemmas \ref{lem:utility_bound} and \ref{lem_martingale}.
\end{proof}

\subsubsection{Auxiliary lemmas over the multiple instances of DA}\label{subsec_rucb_multipleda}

%\updated{(minor) Probabilityが$\Pr$で統一されているか確認 確認済}
Next, we introduce the following lemma:
\begin{lemma}
\label{lem:utility_bound_UCB}
For any $a > 0$, %with $\bigcup_k \mX_{ka}$,
\begin{align*}
\Pr[\bigcup_k \mX_{ka}]  \le \frac{nmC(\log T)^2}{a}.
\end{align*}
\end{lemma}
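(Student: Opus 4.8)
The plan is a union bound over the DA instances, built on the single-instance tail bound of Lemma~\ref{lem:utility_bound_martingale}; the only real content is to show that the number of instances is deterministically polylogarithmic in $T$, so that union-bounding over them is legitimate. Note that for the inequality to be meaningful one reads the event in question as the \emph{bad} event, i.e.\ the complement $\mX_{ka}^c$ of the high-probability event of Lemma~\ref{lem:utility_bound_martingale}, and I describe how to bound $\Pr[\bigcup_k \mX_{ka}^c]$.

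First I would count the instances. A new instance of DA is spawned exactly after a ``reset'', and by Line~\ref{line_powertwo} of Algorithm~\ref{RDA-UCB} a reset occurs when the count $N_{i(t),j(t)}(t)$ of the pair drawn in the current round enters $\{2,2^2,\dots,2^{\lfloor\log_2 T\rfloor}\}$. Index a reset by the triple $(i,j,p)$, where $(i,j)$ is the pair whose count triggered it and $2^p$ is the value reached. Since each $N_{i,j}(\cdot)$ is nondecreasing and grows by at most $1$ per round, a given triple can trigger at most one reset, so there are at most $nm\lfloor\log_2 T\rfloor$ resets and the number of instances $y$ satisfies $y\le nm\lfloor\log_2 T\rfloor+1$ \emph{surely} (the $+1$ accounts for the final instance, which may be terminated by $t\ge T$ rather than by a reset). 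Fixing the index set $\{1,\dots,nm\lfloor\log_2 T\rfloor+1\}$ and declaring $\mX_{ka}$ to hold vacuously for indices beyond the realized $y$, we may henceforth union-bound over this fixed finite set.

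For each fixed $k$ in that set, conditionally on the history $\mF_{t_k}$ up to the first round $t_k$ of the $k$th instance, the RUCB matrix used by the instance is a fixed matrix with entries in $[l,h]$ and the arriving item types $(j(\tau))$ are i.i.d.\ from $\mS$ and independent of $\mF_{t_k}$ --- exactly the hypotheses behind Lemma~\ref{lem:utility_bound_martingale}, whose tail bound is moreover uniform over the random (stopping-time) instance length $T_k$ because it is derived from the supremum-over-time estimate of Lemma~\ref{lem_martingale} (Ville's inequality). Hence $\Pr[\mX_{ka}^c\mid\mF_{t_k}]\le C\log T/a$, so $\Pr[\mX_{ka}^c]\le C\log T/a$, with the implicit constant in $\mX_{ka}$ the same for all $k$ (it depends only on $\min_i B_i$ and on $C$, which is controlled via $\|v\|_\infty\le h$). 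A union bound then gives
\[
\Pr\!\Big[\bigcup_{k}\mX_{ka}^c\Big]
\le\sum_{k=1}^{nm\lfloor\log_2 T\rfloor+1}\Pr[\mX_{ka}^c]
\le(nm\lfloor\log_2 T\rfloor+1)\frac{C\log T}{a}
\le\frac{nmC(\log T)^2}{a},
\]
up to the absolute constant relating $\log_2$ and $\log$, as claimed. The one step requiring care is the counting in the second paragraph: we must bound the number of instances by a \emph{deterministic} quantity (otherwise we would be union-bounding over a random number of events), and we must ensure the per-instance bound survives the random instance length $T_k$ --- both points handled, respectively, by the counting argument and by the fact that Lemma~\ref{lem_martingale} is a supremum-over-all-rounds statement.
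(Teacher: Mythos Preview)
Your proposal is correct and follows essentially the same approach as the paper: bound the number of DA instances by $nm\log_2 T$ and union-bound the single-instance tail from Lemma~\ref{lem:utility_bound_martingale}. You are more careful than the paper in two places the paper glosses over---you make explicit that the instance count must be bounded \emph{deterministically} before union-bounding, and you correctly flag (and resolve) the notational slip whereby the statement should concern the complement $\mX_{ka}^c$ rather than $\mX_{ka}$---but the underlying argument is identical.
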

\begin{proof}[Proof of Lemma \ref{lem:utility_bound_UCB}]
DA-UCB-Reset resets the DA subroutine at most $nm\log_2 T$ times. Thus, from union bound of Lemma \ref{lem:utility_bound_martingale} over all instances of DA yields 
\begin{equation}
\Pr[\bigcup_k \mX_{ka}] \le \sum_k \Pr[\mX_{ka}] \le \frac{nmC(\log T)^2}{a}.
\end{equation}
\end{proof} % Lemma \ref{lem:utility_bound_UCB}

Next, We introduce the following lemma:
\begin{lemma}
\label{lem:utility_bound_UCB2}
For any $a > 0$, with $\bigcap_k \mX^c_{ka}$,
\begin{align*}
\Ind[\bigcap_k \mX^c_{ka}]|u_i^{*,\RUCB} - u_i^{\DA,\RUCB}|
&\le \tilde{O}(\sqrt{\frac{anm}{T(\min_i B_i)^2}}).
\end{align*}
\end{lemma}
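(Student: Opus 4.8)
The plan is to reduce the aggregate discrepancy to the per-instance discrepancies, each of which is already controlled on the conditioning event. Writing out the definitions \eqref{ineq_rucb_meanopt} and \eqref{ineq_rucb_meanda}, we have $u_i^{*,\RUCB} - u_i^{\DA,\RUCB} = \frac{1}{T}\sum_{k=1}^{y} T_k\bigl(u_i^{*,\RUCB(k)} - u_i^{\DA,\RUCB(k)}\bigr)$, so by the triangle inequality together with $|x_i| \le \|x\|$,
\begin{equation*}
\Ind\Bigl[\textstyle\bigcap_k \mX^c_{ka}\Bigr]\,\bigl|u_i^{*,\RUCB} - u_i^{\DA,\RUCB}\bigr| \;\le\; \frac{1}{T}\sum_{k=1}^{y} T_k\,\bigl\|u^{*,\RUCB(k)} - u^{\DA,\RUCB(k)}\bigr\|.
\end{equation*}
On the event $\bigcap_k \mX^c_{ka}$ the per-instance bound \eqref{ineq:utility_bound_martingale} of Lemma~\ref{lem:utility_bound_martingale} holds for every $k$, so each summand is at most $T_k\cdot O\!\bigl(\tfrac{1}{\min_i B_i}\sqrt{a/T_k}\bigr) = O\!\bigl(\tfrac{\sqrt a}{\min_i B_i}\bigr)\sqrt{T_k}$, and the right-hand side is at most $\frac{O(\sqrt a)}{T\,\min_i B_i}\sum_{k=1}^{y}\sqrt{T_k}$.

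The remaining step is to bound $\sum_{k=1}^{y}\sqrt{T_k}$. Two facts suffice: $\sum_{k=1}^{y} T_k = T$, and the number of DA instances is $y = O(nm\log T)$, because each of the $nm$ entries of the count matrix $N_{i,j}$ is non-decreasing and bounded by $T$, hence crosses the thresholds $\{2,2^2,\dots,2^{\lfloor\log_2 T\rfloor}\}$ at most $\lfloor\log_2 T\rfloor$ times, and each crossing terminates exactly one instance (Line~\ref{line_powertwo} of Algorithm~\ref{RDA-UCB}), with at most one additional instance terminated by the clause $t\ge T$. Cauchy--Schwarz then gives $\sum_{k=1}^{y}\sqrt{T_k} \le \sqrt{y}\,\sqrt{\sum_{k=1}^y T_k} = \sqrt{yT} = O(\sqrt{nmT\log T})$. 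Substituting,
\begin{equation*}
\Ind\Bigl[\textstyle\bigcap_k \mX^c_{ka}\Bigr]\,\bigl|u_i^{*,\RUCB} - u_i^{\DA,\RUCB}\bigr| \;\le\; \frac{O(\sqrt a)}{T\,\min_i B_i}\cdot O\bigl(\sqrt{nmT\log T}\bigr) \;=\; \tilde{O}\!\left(\sqrt{\frac{anm}{T(\min_i B_i)^2}}\right),
\end{equation*}
which is the claim.

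The argument is a weighted-averaging estimate and is mostly routine; the one point that needs care is the bound $\sum_k \sqrt{T_k} = O(\sqrt{yT})$. Here the worst case under the constraint $\sum_k T_k = T$ is when the $T_k$ are spread as evenly as possible, which is exactly what Cauchy--Schwarz captures, and it is essential that the instance count $y$ is only $\tilde O(1)$ in $T$ rather than polynomial in $T$ — this is what preserves the overall $\sqrt{T}$ rate. I therefore expect the main obstacle to be the combinatorial bookkeeping that pins down $y = O(nm\log T)$ from the reset rule, together with checking that the constant hidden in the per-instance bound of Lemma~\ref{lem:utility_bound} is uniform across all instances so that it can be pulled out of the sum.
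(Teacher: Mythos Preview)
Your proposal is correct and follows essentially the same route as the paper's proof: expand the weighted-average definitions, apply the triangle inequality together with the per-instance bound from Lemma~\ref{lem:utility_bound_martingale}, then control $\sum_k \sqrt{T_k}$ via Cauchy--Schwarz using $\sum_k T_k = T$ and the reset-count bound $y \le nm\log_2 T$. Your explicit justification of $y = O(nm\log T)$ from the threshold-crossing structure of Line~\ref{line_powertwo} is slightly more detailed than the paper, which simply asserts the bound; otherwise the arguments coincide.
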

\begin{proof}[Proof of Lemma \ref{lem:utility_bound_UCB2}]
%From Lemma \ref{lem:utility_bound_martingale}, 
By definition, 
\begin{align}
\Ind[\mX_{ka}]\|u^{\DA,\RUCB(k)}-u^{*,\RUCB(k)}\|\le O(\frac{1}{\min_i B_i}\sqrt{\frac{a}{T_k}}),
\end{align}
%Thus, 
%\begin{align}\label{ineq_utility_boynd_UCB2}
%\Ind[\bigcap_k \mX^c_{ka}]\|u^{\DA,\RUCB(k)}-u^{*,\RUCB(k)}\|\le O(\frac{1}{\min_i B_i}\sqrt{\frac{a}{T_k}}).
%\end{align}
which implies,
\begin{align}\label{ineq_utility_boynd_UCB2}
\Ind[\bigcap_k \mX^c_{ka}]|u_i^{\DA,\RUCB(k)}-u_i^{*,\RUCB(k)}|\le O(\frac{1}{\min_i B_i}\sqrt{\frac{a}{T_k}}).
\end{align}

By definition of $u_i^{*,\RUCB}$ and $u_i^{\DA,\RUCB}$,
\begin{align}
u_i^{*,\RUCB}&=\frac{1}{T}(T_1u_i^{*,\RUCB(1)}+T_2u_i^{*,\RUCB(2)}+\dots+T_yu_i^{*,\RUCB(y)}),\\
u_i^{\DA,\RUCB}&=\frac{1}{T}\sum_{t=1}^T \Ind[I(t)=i] \vucb_{i(t),j(t),N_{i,j(t)}(s)}\\
&=\frac{1}{T}(T_1u_i^{\DA,\RUCB(1)}+T_2u_i^{\DA,\RUCB(2)}+\dots+T_yu_i^{\DA,\RUCB(y)}).
\end{align}

Thus, from the triangle inequality,
\begin{align}
\Ind[\bigcap_k \mX^c_{ka}]|u_i^{*,\RUCB} - u_i^{\DA,\RUCB}| &\le \Ind[\bigcap_k \mX^c_{ka}](\frac{T_1}{T}|u_i^{*,\RUCB(1)}-u_i^{\DA,\RUCB(1)}| + \frac{T_2}{T}|u_i^{*,\RUCB(2)}-u_i^{\DA,\RUCB(2)}|\\
&+ \dots + \frac{T_y}{T}|u_i^{*,\RUCB(y)}-u_i^{\DA,\RUCB(y)}|).
\end{align}

%\updated{$k=1,2,\dots,y$?}
For all $k=1,2,\dots,y$, from (\ref{ineq_utility_boynd_UCB2}),
\begin{align}
\Ind[\bigcap_k \mX^c_{ka}]\frac{T_k}{T}|u_i^{*,\RUCB(k)}-u_i^{\DA,\RUCB(k)}| &\le O(\frac{T_k}{T\min_i B_i}\sqrt{\frac{a}{T_k}}) \\
&= \tilde{O}(\frac{\sqrt{T_k a}}{T\min_i B_i}).
\end{align}

Thus, with $\bigcap_k \mX^c_{ka}$,
\begin{align}
\Ind[\bigcap_k \mX^c_{ka}]|u_i^{*,\RUCB} - u_i^{\DA,\RUCB}| &\le \frac{T_1}{T}|u_i^{*,\RUCB(1)}-u_i^{\DA,\RUCB(1)}| + \frac{T_2}{T}|u_i^{*,\RUCB(2)}-u_i^{\DA,\RUCB(2)}|\\
&+ \dots + \frac{T_k}{T}|u_i^{*,\RUCB(k)}-u_i^{\DA,\RUCB(k)}|\\
&\le \tilde{O}(\frac{a^{\frac{1}{2}}(\sqrt{T_1}+\sqrt{T_2}+\dots +\sqrt{T_k})}{T\min_i B_i}).
\end{align}

Here, DA-UCB-Reset resets the DA subroutine at most $|NM|\log_2 T$ times, and from Cauchy–Schwarz inequality, 
\begin{align}
&\sqrt{T_1}+\sqrt{T_2}+\dots +\sqrt{T_k}\le \sqrt{(nm\log_2 T) T}\\
%&\because T_1+T_2+\dots+T_k=T,
\end{align}
Therefore,
\begin{align}
\Ind[\bigcap_k \mX^c_{ka}]|u_i^{*,\RUCB} - u_i^{\DA,\RUCB}|
&\le \tilde{O}(\frac{\sqrt{a}(\sqrt{T_1}+\sqrt{T_2}+\dots +\sqrt{T_k})}{T\min_i B_i})\\
&\le \tilde{O}(\frac{\sqrt{anm T}}{T\min_i B_i})\\
&\le \tilde{O}(\sqrt{\frac{anm}{T(\min_i B_i)^2}}).
\end{align}
\end{proof}

\subsubsection{Bound on the second term}
\label{subsec_termtwo_mainlemmas}

\begin{lemma}\label{lem_termtwo_tail}
Assume that $B_i = 1/n$.
Let $\Termtwo = T |\prod_i u_i^{*,\RUCB} - \prod_i u_i^{\DA,\RUCB}|$. Then, for any $x>0$, we have
\begin{equation}
\Pr[\Termtwo \ge x] \le \min\left(1, \frac{n^4m^2C^3(\log T)^2 T}{x^2}\right).
\end{equation}
\end{lemma}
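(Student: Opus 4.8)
The plan is to reduce the difference of products to a coordinatewise quantity and then import the high‑probability control on $\|u^{\DA,\RUCB(k)}-u^{*,\RUCB(k)}\|$ already established for RDA‑UCB. Since $\|v^{\RUCB}\|_\infty\le 1$, both the EG utilities $u_i^{*,\RUCB}$ and the realized DA utilities $u_i^{\DA,\RUCB}$ lie in $[0,1]$ (exactly as in \lemref{lem:utility_bound}), so the telescoping identity for products of numbers in $[0,1]$ gives
\begin{align*}
\Termtwo &= T\Bigl|\prod_i u_i^{*,\RUCB}-\prod_i u_i^{\DA,\RUCB}\Bigr| \le T\sum_i \bigl|u_i^{*,\RUCB}-u_i^{\DA,\RUCB}\bigr| \le Tn\max_i \bigl|u_i^{*,\RUCB}-u_i^{\DA,\RUCB}\bigr|.
\end{align*}
Hence it suffices to obtain a tail bound for $\max_i\bigl|u_i^{*,\RUCB}-u_i^{\DA,\RUCB}\bigr|$.

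I would then introduce a free parameter $a>0$ and condition on the intersection $\bigcap_k\mX_{ka}$ of the per‑instance convergence events. By \lemref{lem:utility_bound_UCB} the complement of this event has probability at most $nmC(\log T)^2/a$, while on $\bigcap_k\mX_{ka}$, \lemref{lem:utility_bound_UCB2} applied with $B_i=1/n$ (so $\min_iB_i=1/n$) yields $\max_i\bigl|u_i^{*,\RUCB}-u_i^{\DA,\RUCB}\bigr|\le \tilde{O}\bigl(\sqrt{a n^3 m/T}\bigr)$. Combining this with the display above, on $\bigcap_k\mX_{ka}$ we get the deterministic bound $\Termtwo\le \tilde{O}\bigl(n^{5/2}m^{1/2}\sqrt{aT}\bigr)$.

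Finally, for each fixed $x>0$ I would tune $a$ so that this deterministic bound equals the threshold, i.e.\ $a=\tilde{\Theta}\bigl(x^2/(n^5 m T)\bigr)$. With this choice the event $\{\Termtwo\ge x\}$ is contained in the complement of $\bigcap_k\mX_{ka}$, so
\[
\Pr[\Termtwo\ge x] \le \frac{nmC(\log T)^2}{a} = \tilde{O}\!\left(\frac{n^6 m^2 C(\log T)^2 T}{x^2}\right).
\]
Recalling that $C=\|v\|^2/\sigma^2$ with $\sigma\propto\min_iB_i=1/n$, we have $C=\Omega(n^2)$, hence $n^6 C\le n^4 C^3$, and the right‑hand side is at most $n^4 m^2 C^3(\log T)^2 T/x^2$ up to an absolute constant; intersecting with the trivial bound $\Pr[\,\cdot\,]\le 1$ yields the claimed $\min$. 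The main obstacle is the bookkeeping around the parameter $a$, which must simultaneously be large enough to drive the failure probability down and small enough that the induced worst‑case bound on $\Termtwo$ still reaches $x$; keeping track of the $n$‑dependence hidden inside $C$ (and of the polylogarithmic factors suppressed by the $\tilde{O}$'s) so that the exponents collapse to exactly $n^4 m^2 C^3(\log T)^2$ requires care, as does interpreting the (loosely used) $\mX_{ka}$ versus $\mX_{ka}^{c}$ notation of the cited auxiliary lemmas.
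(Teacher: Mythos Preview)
Your proposal is correct and follows essentially the same route as the paper's proof: invoke \lemref{lem:utility_bound_UCB} and \lemref{lem:utility_bound_UCB2}, then solve the resulting one‑parameter inequality for $a$ in terms of $x$ and take the trivial $\min$ with $1$. The only cosmetic difference is that the paper asserts $\Pr[X\ge C\sqrt{an^3mT}]\le nmC(\log T)^2/a$ in one line (without spelling out the passage from the coordinatewise bound to the product difference) and then substitutes $a=x^2/(C^2n^3mT)$ directly, whereas you make that passage explicit via telescoping and then reabsorb the extra factor of $n$ using $C=\Omega(n^2)$.
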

\begin{proof}[Proof of Lemma \ref{lem_termtwo_tail}]
Lemmas \ref{lem:utility_bound_UCB} and \ref{lem:utility_bound_UCB2} with $min_i B_i = 1/n$ imply there exists a constant $C>0$ such that
\begin{equation}\label{ineq_termtwo_original}
\Pr\left[X \ge C \sqrt{an^3m T}\right] 
\le \frac{nmC(\log T)^2}{a}.
\end{equation}
Substituting $a = x^2(n^3mT)^{-1}C^{-2}$, we can see that Eq.~\eqref{ineq_termtwo_original} is equivalent to
\begin{equation}\label{ineq_termtwo_modified}
\Pr\left[X \ge x\right] 
\le \frac{n^4m^2C^3(\log T)^2 T}{x^2}.
\end{equation}
Moreover, $\Pr[\Termtwo \ge x] \le 1$ by definition of probability. Combining this with Eq.~\eqref{ineq_termtwo_modified} completes the lemma.
\end{proof}% Proof of Lemma \ref{lem_termtwo_tail}

\begin{lemma}{\rm (survival function)}\label{lem_survival}

Let $X$ be non-negative random variable such that
\begin{equation}\label{ineq_termtwotail}
P[X > x] \le C\min\left(
\frac{T}{x^2}, 1
\right)
\end{equation}
for some $C>0$. Then, 
\begin{equation}
\Ex[X] \le 2 C \sqrt{T}.
\end{equation}
\end{lemma}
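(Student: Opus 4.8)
The plan is to invoke the layer-cake (tail-integral) identity for a non-negative random variable,
\[
\Ex[X] = \int_0^\infty \Pr[X > x]\, dx,
\]
which applies because $X \ge 0$, and then substitute the assumed tail bound \eqref{ineq_termtwotail}. The dominating function $C\min(T/x^2,1)$ has two regimes that meet at the threshold $x_\star = \sqrt{T}$: for $x < \sqrt{T}$ one has $T/x^2 > 1$, hence $\min(T/x^2,1) = 1$, whereas for $x \ge \sqrt{T}$ one has $\min(T/x^2,1) = T/x^2$. Accordingly I would split the integral at $\sqrt{T}$.

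On the low range, $\int_0^{\sqrt{T}} \Pr[X>x]\,dx \le \int_0^{\sqrt{T}} C\,dx = C\sqrt{T}$. On the high range, $\int_{\sqrt{T}}^\infty \Pr[X>x]\,dx \le \int_{\sqrt{T}}^\infty \frac{CT}{x^2}\,dx = CT\bigl[-1/x\bigr]_{\sqrt{T}}^\infty = CT/\sqrt{T} = C\sqrt{T}$; note this improper integral converges precisely because the tail decays like $x^{-2}$. Adding the two contributions yields $\Ex[X] \le 2C\sqrt{T}$, which is the claim.

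There is essentially no obstacle: the only points to verify are that $X \ge 0$ (so the layer-cake identity is legitimate) and that the $x^{-2}$ tail renders the improper integral finite, both of which are immediate from the hypotheses. One could remark in passing that replacing $C$ by the sharper $\min(C,1)$ on $[0,\sqrt{T}]$ would only affect the constant, and that the factor $2$ is exactly what the subsequent lemmas require, so no further sharpening is needed.
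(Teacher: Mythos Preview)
Your proposal is correct and follows essentially the same route as the paper: apply the tail-integral identity $\Ex[X]=\int_0^\infty \Pr[X>x]\,dx$, split the integral at $x_\star=\sqrt{T}$, and bound the two pieces by $C\sqrt{T}$ each. There is no substantive difference between your argument and the paper's.
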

\begin{proof}[Proof of Lemma \ref{lem_survival}]    

Let $S^{ub}(x)$ be such that $\Pr[X > x] \le S^{ub}(x)$. Then,
%$E[X] \le \int_0^\infty S^{ub}(x) dx$
%Bound on the 2nd term:
\begin{align}
E[X]
&= \int_0^\infty \Pr[X > x] dx \text{\ \ \ \ (by def of survival function)}\\
&\le \int_0^\infty S^{ub}(x) dx  \\
&\le C \int_0^\infty \min(T/x^2, 1) dx\\
&\le C \int_{\sqrt{T}}^\infty T/x^2 dx + \int_0^{\sqrt{T}} dx\\
&\le C \left(
[-T/x]_{\sqrt{T}}^\infty dx + \sqrt{T} 
\right) \\
&\le C \left( \sqrt{T} + \sqrt{T} \right),
\end{align}
which completes the proof of Lemma \ref{lem_survival}.
\end{proof} %Proof of Lemma \ref{lem_survival}

\begin{lemma}\label{lemma_secondterm_newmain}
We have
\begin{equation}\label{ineq_secondterm_newmain}
\Ex\left[T\left|\prod_i (u_i^{*,\RUCB})^{B_i}) - \prod_i (u_i^{\DA,\RUCB})^{B_i}\right| \right] = 
\tilde{O}\left( n^4 m^2 \sqrt{T} \right).
\end{equation}
\end{lemma}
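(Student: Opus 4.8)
The plan is to combine the tail bound of Lemma~\ref{lem_termtwo_tail} with the survival-function estimate of Lemma~\ref{lem_survival}; at this point all the substantive work has already been done upstream, so the argument reduces to bookkeeping of constants. Under the assumption $B_i = 1/n$, the random variable $\Termtwo = T\bigl|\prod_i (u_i^{*,\RUCB})^{B_i} - \prod_i (u_i^{\DA,\RUCB})^{B_i}\bigr|$ of Lemma~\ref{lem_termtwo_tail} is precisely the quantity whose expectation appears on the left-hand side of Eq.~\eqref{ineq_secondterm_newmain}, so it suffices to bound $\Ex[\Termtwo]$.

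First I would apply Lemma~\ref{lem_termtwo_tail}, which gives, for every $x>0$,
\[
\Pr[\Termtwo \ge x] \le \min\left(1, \frac{n^4 m^2 C^3 (\log T)^2\, T}{x^2}\right).
\]
Write $A := n^4 m^2 C^3(\log T)^2$, and note we may assume $A \ge 1$ (otherwise replace $A$ by $\max(A,1)$, which is still $\tilde{O}(n^4 m^2)$ and does not affect the stated bound). Then $\min(1, AT/x^2) \le A$ and $\min(1, AT/x^2) \le AT/x^2$, so $\Pr[\Termtwo \ge x] \le A\,\min(1, T/x^2)$. Hence $\Termtwo$ is a non-negative random variable satisfying the hypothesis of Lemma~\ref{lem_survival} with the constant $A$ in place of the $C$ there.

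Applying Lemma~\ref{lem_survival} then yields $\Ex[\Termtwo] \le 2A\sqrt{T} = 2 n^4 m^2 C^3(\log T)^2 \sqrt{T}$. Since $C = \tilde{O}(1)$ and the $(\log T)^2$ factor is polylogarithmic, this is $\tilde{O}(n^4 m^2 \sqrt{T})$, which is exactly Eq.~\eqref{ineq_secondterm_newmain}. The only place requiring a little care is the passage from the mixed $\min(1, AT/x^2)$ form produced by Lemma~\ref{lem_termtwo_tail} to the $C\min(T/x^2,1)$ form required by Lemma~\ref{lem_survival}, which is handled above; I do not anticipate any genuine obstacle at this last step, since the per-instance convergence bound (Lemma~\ref{lem_UCB_proof2}, Lemma~\ref{lem:utility_bound}), the Ville/martingale control of the online regret (Lemma~\ref{lem_martingale}), and the Cauchy--Schwarz aggregation over the $O(nm\log T)$ DA instances (Lemma~\ref{lem:utility_bound_UCB2}) have all already been established in the preceding lemmas.
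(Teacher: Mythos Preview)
Your proposal is correct and follows essentially the same approach as the paper: both combine Lemma~\ref{lem_termtwo_tail} with Lemma~\ref{lem_survival}, identifying the constant in Lemma~\ref{lem_survival} with $n^4 m^2 C^3(\log T)^2$. Your write-up is in fact slightly more careful than the paper's one-line proof, since you explicitly verify (via the observation $A\ge 1$) that the tail bound $\min(1, AT/x^2)$ produced by Lemma~\ref{lem_termtwo_tail} can be put in the form $A\min(T/x^2,1)$ required by the hypothesis of Lemma~\ref{lem_survival}.
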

\begin{proof}[Proof of Lemma \ref{lemma_secondterm_newmain}]
Combining Lemmas \ref{lem_termtwo_tail} and \ref{lem_survival} with $C = n^4m^2C^3(\log T)^2$ yields Eq.~\eqref{ineq_secondterm_newmain}.
\end{proof} %Lemma \ref{lemma_secondterm_newmain}

\subsection{Lemmas on the third term}

\begin{lemma}\label{lem_UCB_proof3}
for any agent i $\in N$, with $\mG_1 \cap \mG_2$ from lemma \ref{lem_ucbreset_rewards},
\begin{align}
\Ind[\mG_1 \cap \mG_2]
|(\prod_i (u_i^{\DA,\RUCB})^{B_i}) - \prod_i (u_i^{\DA,\true})^{B_i})|
= %\le
\tilde{O}\left(
\frac{nm\sqrt{T}}{T}
\right).
\end{align}
\end{lemma}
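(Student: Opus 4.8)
The plan is to reduce the claim to a uniform $\ell_\infty$-estimate between the two mean-utility vectors and then invoke the Lipschitz continuity of the NSW functional. First I would fix an agent $i\in N$ and, using $u_i(t)=v_{i(t),j(t)}+\eps_t$ and $u_i^{\DA,\true}=U_i(T)/T$, write
\[
T\big(u_i^{\DA,\RUCB}-u_i^{\DA,\true}\big)=\underbrace{\sum_{t:\,i(t)=i}\big(\vrucb_{i,j(t),N_{i,j(t)}(t)}-v_{i,j(t)}\big)}_{=:\,A_i}-\underbrace{\sum_{t:\,i(t)=i}\eps_t}_{=:\,E_i}.
\]
On $\mG_1$ every RUCB estimate dominates the corresponding true value, so $A_i\ge 0$, and moreover $T u_i^{\DA,\RUCB}\ge\sum_{t:\,i(t)=i}v_{i,j(t)}=T u_i^{\DA,\true}-E_i$, which already gives the one-sided bound $A_i-E_i\ge -E_i$.

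The upper bound on $A_i$ I would obtain by summing over agents: $\sum_{i'}A_{i'}=\sum_{t=1}^T\big(\vrucb_{i(t),j(t),N_{i(t),j(t)}(t)}-v_{i(t),j(t)}\big)$ is exactly the quantity already estimated inside the proof of \lemref{lem_ucbreset_rewards} — the doubling/reset rule of RDA-UCB makes each pair contribute $\tilO(1/\sqrt{n})$ per draw over geometrically growing windows, summing to $C_3\,nm\sqrt{T}$, and on $\mG_2$ this (together with the sub-Gaussian bound on $\sum_t\eps_t$ used there) yields $\sum_{i'}A_{i'}=\tilO(nm\sqrt{T})$. Since each $A_{i'}\ge 0$, this forces $0\le A_i\le\tilO(nm\sqrt{T})$ for every $i$.

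The remaining piece — and the one I expect to need the most care, because $\mG_1\cap\mG_2$ do not by themselves control it — is the per-agent noise sum $E_i$. Since $\eps_t$ is mean-zero sub-Gaussian and is revealed only after $i(t)$ has been chosen, $\big(\sum_{\tau\le t}\Ind[i(\tau)=i]\eps_\tau\big)_t$ is a martingale with sub-Gaussian increments; a stopped Azuma–Hoeffding inequality with threshold $\sigma\sqrt{2T\log(nT^2)}$, followed by a union bound over the $n$ agents, shows that outside an event of probability $O(1/T)$ (which I would absorb into $\mG$, costing only $O(1)$ additional regret) we have $|E_i|\le\tilO(\sqrt T)$ for all $i$. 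Combining the two sides then gives $|A_i-E_i|\le\tilO(nm\sqrt{T})$, i.e.
\[
\big|u_i^{\DA,\RUCB}-u_i^{\DA,\true}\big|=\tilO\!\left(\tfrac{nm}{\sqrt T}\right)\qquad\text{uniformly in }i.
\]

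Finally I would pass to the NSW. As in the proof of \thmref{thm_etc}, the projection range in the definition of $\beta_i$ forbids any agent from accumulating more than a constant factor more items than another, so every agent wins $\Omega(T)$ rounds; since each RUCB value and each (noise-corrected) realized value lies within a constant factor of $[l,h]$, both $u_i^{\DA,\RUCB}$ and $u_i^{\DA,\true}$ are bounded away from $0$ and above by $h+\tilO(1/\sqrt T)$. On this box the map $u\mapsto\prod_i u_i^{B_i}$ is $O(1)$-Lipschitz (its $i$-th partial derivative equals $B_i\prod_{i'}u_{i'}^{B_{i'}}/u_i=O(B_i)$), so \lemref{lem_da_sensitivity} gives
\[
\Big|\prod_i\big(u_i^{\DA,\RUCB}\big)^{B_i}-\prod_i\big(u_i^{\DA,\true}\big)^{B_i}\Big|\le O\!\Big(\max_i\big|u_i^{\DA,\RUCB}-u_i^{\DA,\true}\big|\Big)=\tilO\!\left(\tfrac{nm}{\sqrt T}\right)=\tilO\!\left(\tfrac{nm\sqrt T}{T}\right),
\]
which is the asserted bound (it holds deterministically on $\mG_1\cap\mG_2$ up to the $O(1/T)$-probability noise event). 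The main obstacle is thus the per-agent martingale bound on $E_i$ over the random winning set, together with the $\Omega(1)$ lower bound on the winning utilities, which is essential because $\prod_i u_i^{B_i}$ is only Hölder — not Lipschitz — near the boundary of the positive orthant.
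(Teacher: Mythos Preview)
Your approach is correct and is essentially a careful expansion of the paper's argument: the paper's own proof of this lemma is the single sentence ``The proof is immediately derived from the definition of $\mG_2$,'' so your decomposition $T(u_i^{\DA,\RUCB}-u_i^{\DA,\true})=A_i-E_i$, the use of $\mG_1$ for $A_i\ge 0$, and the passage to the NSW via \lemref{lem_da_sensitivity} are exactly the details the paper leaves implicit.

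The one point worth flagging is the per-agent noise term $E_i$. You are right that $\mG_1\cap\mG_2$ as literally written in the paper does \emph{not} control $E_i=\sum_{t:i(t)=i}\eps_t$ individually (the proof of \lemref{lem_ucbreset_rewards} only bounds the aggregate $\sum_t\eps_t$), and your fix --- an Azuma--Hoeffding bound on the per-agent martingale, union-bounded over $i$ and absorbed into $\mG$ at an $O(1/T)$ cost --- is the clean way to close this gap. The paper's $\bigcap_i$ in the definition of $\mG_2$ suggests the authors intended a per-agent version, but the supporting calculation in Eq.~\eqref{ineq_thirdterm_bound} is written only for the total sum, so your extra event is genuinely needed. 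Similarly, your observation that the $\Theta(1)$ lower bound on both utility vectors (via the $\beta$-projection argument from the proof of \thmref{thm_etc}) is required before applying \lemref{lem_da_sensitivity} is a real detail the paper omits here. In short, your proof follows the same route as the paper but supplies the two missing ingredients --- the per-agent martingale control and the positivity of the utilities --- that the one-line proof elides.
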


\begin{proof}[Proof of Lemma \ref{lem_UCB_proof3}]
The proof is immediately derived from the definition of $\mG_2$.

\end{proof}

\section{Proofs on DA}
Let $v_i = (v_{i,1},v_{i,2},\dots,v_{i,m})$ and $s = (s_1,s_2,\dots,s_m)$ be corresponding vectors of size $m$ and $\langle v_i, s\rangle$ be an inner product of them.
Let $\mathbf{e}^{i}$ be the unit vector of the $i$-th coordinate. 
Let $\beta = (\beta_1,\beta_2,\dots,\beta_n)$ and its value at the beginning of $t$ be $\beta^t$.
Let $B = (B_1, B_2, \dots, B_n)$ and $\mathbf{1} = (1,1,\dots,1)$.

\subsection{Proof of Lemma \ref{lem_dabound}}
\begin{proof}[Proof of Lemma \ref{lem_dabound}]
Let us consider the following event that no projection occurs when updating $\beta_i^{t+1}$ for buyer $i\in N$ and $t\geq 1$:
\begin{align*}
    \mV_i^t := \left\{\frac{B_i}{\bar{u}_i^{\DA}(t)}\in \left[\frac{B_i}{h(1+\delta_0)}, \frac{1+\delta_0}{l}\right]\right\}.
\end{align*}
From Lemma \ref{lem:beta_ast_bound}, whenever the complementary event $(\mV_i^t)^c = \left\{\frac{B_i}{\bar{u}_i^{\DA}(t)}\notin \left[\frac{B_i}{h(1+\delta_0)}, \frac{1+\delta_0}{l}\right]\right\}$ occurs, it holds that:
\begin{align*}
    |\beta_i^{t+1} - \beta_i^{\ast, \DA}| > \min\left(\frac{1+\delta_0}{l} - \beta_i^{\ast, \DA}, \beta_i^{\ast, \DA} - \frac{B_i}{h(1+\delta_0)}\right) > 0.
\end{align*}
Therefore, we have:
\begin{align*}
    \mathbb{E}\left[(\beta_i^{t+1} - \beta_i^{\ast, \DA})^2\right] &= \mathbb{E}\left[(\beta_i^{t+1} - \beta_i^{\ast, \DA})^2 ~|~ \mV_i^t\right]\mathbb{P}(\mV_i^t) + \mathbb{E}\left[(\beta_i^{t+1} - \beta_i^{\ast, \DA})^2 ~|~ (\mV_i^t)^c\right]\mathbb{P}((\mV_i^t)^c) \\
    &\geq \mathbb{E}\left[(\beta_i^{t+1} - \beta_i^{\ast, \DA})^2 ~|~ (\mV_i^t)^c\right]\mathbb{P}((\mV_i^t)^c) \\
    &\geq \left(\min\left(\frac{1+\delta_0}{l} - \beta_i^{\ast, \DA}, \beta_i^{\ast, \DA} - \frac{B_i}{h(1+\delta_0)}\right)\right)^2\mathbb{P}((\mV_i^t)^c).
\end{align*}
Then, we get:
\begin{align}
    \label{eq:conditioning_complemenraty_event}
    \mathbb{P}((\mV_i^t)^c) \leq \frac{1}{\left(\min\left(\frac{1+\delta_0}{l} - \beta_i^{\ast, \DA}, \beta_i^{\ast, \DA} - \frac{B_i}{h(1+\delta_0)}\right)\right)^2}\mathbb{E}\left[(\beta_i^{t+1} - \beta_i^{\ast, \DA})^2\right].
\end{align}

On the other hand, conditioning on $\mV_i^t$, we have
\begin{align}
    \label{eq:conditioning_event}
    \bar{u}_i^{\DA}(t) = \frac{B_i}{\beta_i^{t+1}}.
\end{align}

Moreover, from Lemma \ref{lem:beta_ast_bound} and the assumption that $l\leq \langle v_i^{\DA}, s\rangle$, we get:
\begin{align}
    \label{eq:u_ast_ub_by_vnorm}
    u_i^{\ast, \DA} \leq h = \frac{h}{l}l \leq \frac{h}{l}\langle v_i^{\DA},s\rangle \leq \frac{h}{l}\|v_i^{\DA}\|_{\infty}\|s\|_1 = \frac{h}{l}\|v_i^{\DA}\|_{\infty},
\end{align}
where the last inequality follows from H\"{o}lder's inequality.

By combining \eqref{eq:conditioning_complemenraty_event}, \eqref{eq:conditioning_event}, and \eqref{eq:u_ast_ub_by_vnorm}, we have:
\begin{align*}
    &\mathbb{E}\left[(\bar{u}_i^{\DA}(T) - u_i^{\ast, \DA})^2\right] \\
    &= \mathbb{E}\left[\mathbb{I}\{(\mV_i^T)^c\}(\bar{u}_i^{\DA}(T) - u_i^{\ast, \DA})^2\right] + \mathbb{E}\left[\mathbb{I}\{\mV_i^T\}(\bar{u}_i^{\DA}(T) - u_i^{\ast, \DA})^2\right] \\
    &= \mathbb{E}\left[\mathbb{I}\{(\mV_i^T)^c\}(\bar{u}_i^{\DA}(T) - u_i^{\ast, \DA})^2\right] + \mathbb{E}\left[\mathbb{I}\{\mV_i^T\}\left(\frac{B_i}{\beta_i^{T+1}} - u_i^{\ast, \DA}\right)^2\right] \\
    &\leq \mathbb{E}\left[\mathbb{I}\{(\mV_i^T)^c\}\left(\max(u_i^{\ast, \DA}, \|v_i^{\DA}\|_{\infty})\right)^2\right] + \mathbb{E}\left[\mathbb{I}\{\mV_i^T\}\left(\frac{B_i}{\beta_i^{T+1}} - u_i^{\ast, \DA}\right)^2\right] \\
    &\leq \frac{h}{l}\|v_i^{\DA}\|_{\infty}^2\mathbb{E}\left[\mathbb{I}\{(\mV_i^T)^c\}\right] + (u_i^{\ast, \DA})^2\mathbb{E}\left[\mathbb{I}\{\mV_i^T\}\left(\frac{B_i}{\beta_i^{T+1}u_i^{\ast, \DA}} - 1\right)^2\right] \\
    &= \frac{h}{l}\|v_i^{\DA}\|_{\infty}^2\mathbb{P}\left[(\mV_i^T)^c\right] + (u_i^{\ast, \DA})^2\mathbb{E}\left[\mathbb{I}\{\mV_i^T\}\left(\frac{\beta_i^{\ast, \DA}}{\beta_i^{T+1}} - 1\right)^2\right] \\
    &\leq \frac{h\|v_i^{\DA}\|_{\infty}^2}{l\left(\min\left(\frac{1+\delta_0}{l} - \beta_i^{\ast, \DA}, \beta_i^{\ast, \DA} - \frac{B_i}{h(1+\delta_0)}\right)\right)^2}\mathbb{E}\left[(\beta_i^{T+1} - \beta_i^{\ast, \DA})^2\right] + \left(\frac{h(1+\delta_0)u_i^{\ast, \DA}}{B_i}\right)^2\mathbb{E}\left[(\beta_i^{T+1} - \beta_i^{\ast, \DA})^2 \right],
\end{align*}
where the first inequality follow from $0\leq \bar{u}_i^{DA}\leq \|v_i^{\DA}\|_{\infty}$, and the third equality follows from $\beta_i^{\ast,\DA}=\frac{B_i}{u_i^{\ast, \DA}}$ by Theorem 1 in \cite{gao:pace:2021}.
Since $\frac{B_i}{h}\leq \beta_i^{\ast, \DA}\leq \frac{1}{l}$ from Lemma \ref{lem:beta_ast_bound}, 
\begin{align*}
    \mathbb{E}\left[(\bar{u}_i^{\DA}(T) - u_i^{\ast, \DA})^2\right] &\leq \left(\frac{h\|v_i^{\DA}\|_{\infty}^2}{l\left(\min\left(\frac{\delta_0}{l}, \frac{B_i\delta_0}{h(1+\delta_0)}\right)\right)^2} + \left(\frac{h^2(1+\delta_0)}{B_i}\right)^2\right)\mathbb{E}\left[(\beta_i^{T+1} - \beta_i^{\ast, \DA})^2\right] \\
    &\leq \left(\left(\frac{(1+\delta_0)}{\delta_0}\right)^2\frac{h^3\|v_i^{\DA}\|_{\infty}^2}{lB_i^2} + \left(\frac{h^2(1+\delta_0)}{B_i}\right)^2\right)\mathbb{E}\left[(\beta_i^{T+1} - \beta_i^{\ast, \DA})^2\right] \\
    &\leq \left(\left(\frac{(1+\delta_0)}{\delta_0}\right)^2\frac{h^3\|v_i^{\DA}\|_{\infty}^2}{l\left(\min_{i\in N}B_i\right)^2} + \left(\frac{h^2(1+\delta_0)}{\min_{i\in N}B_i}\right)^2\right)\mathbb{E}\left[(\beta_i^{T+1} - \beta_i^{\ast, \DA})^2\right],
\end{align*}
where the second inequality follows from $\frac{\delta_0}{l} \geq \frac{\delta_0}{h} \geq \frac{\delta_0}{h(1+\delta_0)} \geq \frac{B_i\delta_0}{h(1+\delta_0)}$.
From Lemma \ref{lem:convergence_beta}, by summing up the above inequality for $i\in N$, we have:
\begin{align*}
    \mathbb{E}\left[\|\bar{u}^{\DA}(T) - u^{\ast, \DA}\|^2\right] &\leq \sum_{i\in N}\left(\left(\frac{(1+\delta_0)}{\delta_0}\right)^2\frac{h^3\|v_i^{\DA}\|_{\infty}^2}{l\left(\min_{i\in N}B_i\right)^2} + \left(\frac{h^2(1+\delta_0)}{\min_{i\in N}B_i}\right)^2\right)\mathbb{E}\left[(\beta_i^{T+1} - \beta_i^{\ast, \DA})^2\right] \\
    &\leq \left(\left(\frac{(1+\delta_0)}{\delta_0}\right)^2\frac{h^3\|v^{\DA}\|_{\infty}^2}{l\left(\min_{i\in N}B_i\right)^2} + \left(\frac{h^2(1+\delta_0)}{\min_{i\in N}B_i}\right)^2\right)\mathbb{E}\left[\|\beta^{T+1} - \beta^{\ast, \DA}\|^2\right] \\
    &\leq \left(\left(\frac{(1+\delta_0)}{\delta_0}\right)^2\frac{h^3\|v^{\DA}\|_{\infty}^2}{l\left(\min_{i\in N}B_i\right)^2} + \left(\frac{h^2(1+\delta_0)}{\min_{i\in N}B_i}\right)^2\right)\frac{G^2}{\mu^2 T}(6 + \log T),
\end{align*}
where $G=\|v^{\DA}\|_{\infty}$ and $\mu=\frac{l^2\min_{i\in N}B_i}{(1+\delta_0)^2}$.
\end{proof}

%\section{Proofs of Additional Lemmas}

\section{Additional Lemmas on DA}
\begin{lemma}
\label{lem:beta_ast_bound}
Assume that the values of items are given by the deterministic values $\{v^{\DA}_{i,j}\}_{i\in N, j\in M}$.
Furthermore, assume that the parameters $l,h$ in DA satisfies that $l\leq \sum_{j\in M}s_jv_{i,j}^{\DA}\leq h$ for all $i\in N$.
Then, the equilibrium utilities satisfy $B_il\leq u_i^{\ast,\DA}\leq h$ and hence $\frac{B_i}{h}\leq \beta_i^{\ast, \DA} \leq \frac{1}{l}$.
\end{lemma}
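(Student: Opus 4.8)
The plan is to reduce the statement to the two utility bounds $B_i l \le u_i^{\ast,\DA}\le h$; the claimed bounds on $\beta_i^{\ast,\DA}$ then follow at once from the identity $\beta_i^{\ast,\DA}=B_i/u_i^{\ast,\DA}$ (Theorem~1 of \cite{gao:pace:2021}), since $u_i^{\ast,\DA}\le h$ gives $\beta_i^{\ast,\DA}\ge B_i/h$ and $u_i^{\ast,\DA}\ge B_i l$ gives $\beta_i^{\ast,\DA}\le 1/l$. For the upper bound, let $\{x_{i,j}^{\ast}\}$ be the EG-optimal allocation; feasibility gives $0\le x_{i,j}^{\ast}\le\sum_{i'}x_{i',j}^{\ast}\le 1$, so $u_i^{\ast,\DA}=\sum_j s_j v_{i,j}^{\DA}x_{i,j}^{\ast}\le\sum_j s_j v_{i,j}^{\DA}\le h$. (Here I use $v_{i,j}^{\DA}\ge 0$, which holds in every setting where the lemma is applied, the $v^{\DA}$ being clipped UCB or empirical values lying in $[l,h]$ or $[0,1]$.)

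For the lower bound I would work with the KKT conditions of the log-form EG program $\max_{x}\sum_i B_i\log\big(\sum_j s_j v_{i,j}^{\DA}x_{i,j}\big)$ over $\{x\ge 0:\ \sum_i x_{i,j}\le 1\ \forall j\}$. Attaching a multiplier $p_j\ge 0$ to each supply constraint and using $\beta_i^{\ast,\DA}=B_i/u_i^{\ast,\DA}$, stationarity reads $\beta_i^{\ast,\DA}s_j v_{i,j}^{\DA}\le p_j$ for all $i,j$, with equality whenever $x_{i,j}^{\ast}>0$. Multiplying the tight relations by $x_{i,j}^{\ast}$ and summing over $j$ yields $\beta_i^{\ast,\DA}u_i^{\ast,\DA}=\sum_j p_j x_{i,j}^{\ast}$, i.e.\ the budget identity $\sum_j p_j x_{i,j}^{\ast}=B_i$; summing over $i$ and invoking complementary slackness ($\sum_i x_{i,j}^{\ast}=1$ whenever $p_j>0$) together with $\sum_i B_i=1$ gives the conservation identity $\sum_j p_j=1$. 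Also, the EG value is finite because the feasible uniform allocation $x_{i,j}=B_i$ attains objective $\ge\sum_i B_i\log(B_i l)>-\infty$, hence $u_i^{\ast,\DA}>0$ for every $i$, so agent $i$ buys some $j_0$ with $x_{i,j_0}^{\ast}>0$ and $s_{j_0}v_{i,j_0}^{\DA}>0$; combined with the stationarity inequality this shows $1/\beta_i^{\ast,\DA}=\max_{j:\,p_j>0} s_j v_{i,j}^{\DA}/p_j$.

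Now I would finish by the elementary fact that a maximum dominates any convex combination of the same quantities: since $\sum_{j:\,p_j>0}p_j=1$,
\[
u_i^{\ast,\DA}=\frac{B_i}{\beta_i^{\ast,\DA}}=B_i\max_{j:\,p_j>0}\frac{s_j v_{i,j}^{\DA}}{p_j}\ \ge\ B_i\sum_{j:\,p_j>0}p_j\cdot\frac{s_j v_{i,j}^{\DA}}{p_j}=B_i\sum_{j:\,p_j>0}s_j v_{i,j}^{\DA}\ \ge\ B_i\sum_{j}s_j v_{i,j}^{\DA}\ \ge\ B_i l,
\]
where the next-to-last inequality only discards terms $s_j v_{i,j}^{\DA}$ with $p_j=0$, each of which is $\le 0$ by stationarity ($\beta_i^{\ast,\DA}s_j v_{i,j}^{\DA}\le p_j=0$ and $\beta_i^{\ast,\DA}>0$). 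This yields $\beta_i^{\ast,\DA}\le 1/l$ and closes the proof.

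I expect the only delicate part to be the KKT bookkeeping: handling goods with $p_j=0$ or $s_j=0$, verifying $u_i^{\ast,\DA}>0$ so that $\beta_i^{\ast,\DA}$ and the ratios $s_j v_{i,j}^{\DA}/p_j$ are well defined, and checking that the maximizing good $j_0$ can be taken with positive price; all of these become routine once finiteness of the EG optimum is in hand. A shorter alternative would be to quote the classical Eisenberg--Gale characterization of the EG optimum as a Fisher-market competitive equilibrium, which supplies the budget-exhaustion and money-conservation identities directly, at the cost of relying on an external result.
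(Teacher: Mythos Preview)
Your proof is correct. The upper bound $u_i^{\ast,\DA}\le h$ is handled identically in the paper. For the lower bound, however, the paper takes the shorter route you mention at the end: it invokes the Fisher-market equilibrium characterization (Theorem~1 of \cite{gao:pace:2021}) to obtain the budget identity $\langle p^{\ast,\DA},x_i^{\ast,\DA}\rangle=B_i$ and the market-clearing identity $\langle p^{\ast,\DA},s\rangle=1$, then observes that the proportional bundle $x_i^{\circ}:=B_i s$ is exactly affordable under $p^{\ast,\DA}$, so buyer optimality gives $u_i^{\ast,\DA}\ge\langle v_i^{\DA},x_i^{\circ}\rangle=B_i\langle v_i^{\DA},s\rangle\ge B_i l$ in one line. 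Your KKT argument instead derives the bang-per-buck identity $1/\beta_i^{\ast,\DA}=\max_{j:\,p_j>0}s_jv_{i,j}^{\DA}/p_j$ from stationarity and then uses ``max dominates any convex combination'' to reach the same inequality. Your route is more self-contained (it re-derives the equilibrium structure from raw KKT rather than citing it) and makes explicit the edge cases $p_j=0$, $s_j=0$, $u_i^{\ast,\DA}>0$, at the cost of being longer; the paper's route is cleaner but relies on the external equilibrium result. Both are sound, and you correctly anticipated the paper's approach as your ``shorter alternative.''
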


\begin{lemma}
\label{lem:psi_strongly_convexity}
$\Psi(\beta):= -\sum_{i\in N}B_i\log \beta_i$ is $\mu$-strongly convex on $[\frac{B}{h(1+\delta_0)}, \frac{1+\delta_0}{l}\mathbf{1}]$ with $\mu=\frac{l^2\min_{i\in N}B_i}{(1+\delta_0)^2}$.
\end{lemma}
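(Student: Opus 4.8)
\textbf{Proof proposal for Lemma \ref{lem:psi_strongly_convexity}.}
The plan is to exploit the fact that $\Psi(\beta) = -\sum_{i\in N} B_i \log \beta_i$ is \emph{separable} across coordinates, so that strong convexity can be verified by a coordinatewise Hessian lower bound. First I would recall that a twice-differentiable function is $\mu$-strongly convex (with respect to the $\ell_2$ norm) on a convex set $S$ precisely when $\nabla^2 \Psi(\beta) \succeq \mu I$ for all $\beta \in S$; I would also note that the box $[\,B/(h(1+\delta_0)),\ (1+\delta_0)\mathbf{1}/l\,]$ (interpreted coordinatewise, with the $i$-th lower endpoint $B_i/(h(1+\delta_0))$) is convex, so this criterion applies.

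Next I would compute the Hessian directly. Since $\partial_{\beta_i} \Psi = -B_i/\beta_i$ and the function has no cross terms, $\nabla^2 \Psi(\beta)$ is the diagonal matrix with entries $\partial_{\beta_i}^2 \Psi = B_i/\beta_i^2 > 0$. Hence $\nabla^2 \Psi(\beta) \succeq \mu I$ is equivalent to $B_i/\beta_i^2 \ge \mu$ for every $i \in N$.

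Finally I would plug in the box constraint. On the stated domain each coordinate satisfies $\beta_i \le (1+\delta_0)/l$, so $\beta_i^2 \le (1+\delta_0)^2/l^2$ and therefore
\[
\frac{B_i}{\beta_i^2} \ \ge\ \frac{B_i l^2}{(1+\delta_0)^2} \ \ge\ \frac{\left(\min_{i'\in N} B_{i'}\right) l^2}{(1+\delta_0)^2} \ =\ \mu .
\]
This holds uniformly in $i$ and in $\beta$ on the box, which establishes $\nabla^2 \Psi(\beta) \succeq \mu I$ and hence the claimed $\mu$-strong convexity. I do not anticipate a genuine obstacle here; the only point requiring a little care is bookkeeping the (possibly agent-dependent) lower endpoints of the box and confirming that only the \emph{upper} bound $\beta_i \le (1+\delta_0)/l$ is needed for the Hessian estimate, while the lower bound merely keeps the domain inside the region where $\Psi$ is finite and smooth.
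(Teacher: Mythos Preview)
Your proposal is correct and matches the paper's own proof essentially line for line: compute the diagonal Hessian $\nabla^2\Psi(\beta)=\mathrm{diag}(B_i/\beta_i^2)$, use the upper bound $\beta_i\le(1+\delta_0)/l$ on the box to lower-bound each diagonal entry by $l^2\min_{i\in N}B_i/(1+\delta_0)^2$, and conclude $\mu$-strong convexity from $\nabla^2\Psi\succeq\mu I$. Your remark that only the upper endpoint of the box is needed for the Hessian bound is accurate and slightly more explicit than the paper's version.
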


\begin{lemma}
\label{lem:convergence_beta}
Assume that the values of items $\{v^{\DA}_{i,j(t)}\}$ are deterministic.
Furthermore, assume that $l,h>0$ satisfies that $l\leq \sum_{j\in M}s_jv_{i,j}^{\DA}\leq h$ for all $i\in N$, and $\beta^1 = \argmin_{\left[\frac{B}{h(1+\delta_0)}, \frac{1+\delta_0}{l}\mathbf{1}\right]} \Psi(\beta)$.
Let $\beta^{*,\DA}$ be the optimal solution of \eqref{eq:optimization_problem}.
Then, the following inequality holds for all $t\geq 1$:
\begin{align*}
    &\mathbb{E}\left[\|\beta^{t+1} - \beta^{*,\DA}\|^2\right] \leq \frac{G^2}{\mu^2 t}(6 + \log t),
\end{align*}
where $G=\|v^{\DA}\|_{\infty}:=\max_{i\in N}\|v_i^{\DA}\|_{\infty}$ and $\mu=\frac{l^2\min_{i\in N}B_i}{(1+\delta_0)^2}$.
\end{lemma}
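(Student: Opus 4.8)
The claimed inequality is the ``in expectation'' refinement of the deterministic, worst-case estimate already assembled in Lemmas~\ref{lem:distance_upper_bound}--\ref{lem_UCB_proof2}. The plan is to (i) reuse that chain to obtain a pointwise bound on $\|\beta^{t+1}-\beta^{\ast,\DA}\|^2$ in terms of the cumulative subgradient norms and the online regret $R_t(\beta)$; (ii) specialise the subgradient norms to the dual-EG objective $f(\beta,j)=\max_i v^{\DA}_{i,j}\beta_i$; (iii) take expectations over the i.i.d.\ item arrivals so that the online-regret term drops out; and (iv) substitute the strong-convexity modulus of $\Psi$ on the feasible box.

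\textbf{Steps.} Chaining Lemma~\ref{lem:distance_upper_bound}, Lemma~\ref{lem:telescoping_sum} and Lemma~\ref{lem:psi_sub} exactly as in the proof of Lemma~\ref{lem_UCB_proof2} gives, for \emph{every} realisation of $(j(\tau))_{\tau=1}^{t}$,
\begin{equation*}
\frac{\sigma t}{2}\,\|\beta^{t+1}-\beta^{\ast,\DA}\|^2 \;\le\; \frac{2}{\sigma}\|g^1\|^2 + \frac{1}{2\sigma}\Bigl(\|g^1\|^2 + \sum_{\tau=2}^{t}\frac{\|g^\tau\|^2}{\tau-1}\Bigr) - R_t(\beta),
\end{equation*}
where $g^\tau\in\partial f(\beta^\tau,j(\tau))$ and $\sigma$ denotes the strong-convexity modulus of $\Psi$ on the feasible box. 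Since $f(\beta,j)$ is a maximum of linear maps, each subgradient equals $v^{\DA}_{i^\star,j}\be^{i^\star}$ for the winning coordinate $i^\star$, so $\|g^\tau\|\le \|v^{\DA}\|_\infty =: G$ uniformly in $\tau$; combining this with $\sum_{\tau=2}^{t}\tfrac1{\tau-1}\le 1+\log t$ yields the pointwise estimate $\|\beta^{t+1}-\beta^{\ast,\DA}\|^2 \le \tfrac{G^2}{\sigma^2 t}(6+\log t) - \tfrac{2}{\sigma t}R_t(\beta)$. Now, because $j(1),\dots,j(t)$ are i.i.d.\ from $\mS$ and $\beta^\tau$ depends only on $j(1),\dots,j(\tau-1)$, conditioning on the natural filtration turns each summand of $R_t(\beta)$ into a gap of the \emph{expected} dual objective $\beta\mapsto\Ex_{j\sim\mS}[f(\beta,j)]+\Psi(\beta)$ at $\beta^\tau$ versus its minimiser $\beta^{\ast,\DA}$, which is nonnegative; hence $\Ex[R_t(\beta)]\ge 0$ and the last term vanishes in expectation. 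Finally, the explicit $\mathrm{Proj}$ in DA-Iter (and the choice of $\beta^1$) keeps every iterate inside $\bigl[\tfrac{B}{h(1+\delta_0)},\tfrac{1+\delta_0}{l}\mathbf{1}\bigr]$, on which Lemma~\ref{lem:psi_strongly_convexity} gives strong convexity with modulus $\mu=\tfrac{l^2\min_i B_i}{(1+\delta_0)^2}$; taking expectations and setting $\sigma=\mu$ produces $\Ex[\|\beta^{t+1}-\beta^{\ast,\DA}\|^2]\le \tfrac{G^2}{\mu^2 t}(6+\log t)$, as claimed.

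\textbf{Main obstacle.} The load-bearing step is the online-to-batch argument making $\Ex[R_t(\beta)]\ge0$ rigorous: one must correctly identify the expected dual objective whose gap dominates $\Ex[R_t]$, check that the sign convention for $\Psi$ there matches the one used to define $R_t$ in Lemmas~\ref{lem:distance_upper_bound}--\ref{lem_UCB_proof2}, verify that $\beta^{\ast,\DA}$ genuinely minimises it over the box, and only then invoke conditional independence of $j(\tau)$ from $\beta^\tau$. A secondary, purely bookkeeping issue is confirming that the iterates never leave the box where $\Psi$ is $\mu$-strongly convex (immediate from the projection, but it must be invoked) and tracking the numerical constants so the bound comes out as exactly $\tfrac{G^2}{\mu^2 t}(6+\log t)$. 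Nothing beyond the already-established Xiao-style lemmas and Lemma~\ref{lem:psi_strongly_convexity} is needed.
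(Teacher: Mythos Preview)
Your proposal is correct and follows essentially the same approach as the paper: chain the Xiao-style inequalities (distance bound, telescoping sum, $\Psi(\beta^2)-\Psi(\beta^1)$ estimate), bound $\|g^\tau\|\le G$, take expectations so that the online-regret term is $\le 0$ by i.i.d.\ sampling and optimality of $\beta^{\ast,\DA}$, and substitute the strong-convexity modulus $\mu$ from Lemma~\ref{lem:psi_strongly_convexity}. The only cosmetic difference is that the paper re-derives the three intermediate inequalities inside the proof of Lemma~\ref{lem:convergence_beta} rather than invoking Lemmas~\ref{lem:distance_upper_bound}--\ref{lem:psi_sub}, but the content and constants are identical; your flagged sign-check on $\Psi$ in $R_t$ is indeed the one place to be careful, and once aligned with the paper's $\phi(\beta)=\Ex_j[f(\beta,j)]+\Psi(\beta)$ the expected online regret is nonnegative exactly as you argue.
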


\subsection{Proof of Lemma \ref{lem:beta_ast_bound}}
\begin{proof}[Proof of Lemma \ref{lem:beta_ast_bound}]
For buyer $i\in N$, the largest utility is attainable when the entire set of items is allocated to $i$ (given by the supply $s$).
Thus, from the assumption that $\sum_{j\in M}s_jv_{i,j}^{\DA} \leq h$, we have:
\begin{align}
    \label{eq:u_ast_ub}
    u_i^{\ast,\DA} \leq \langle v_i^{\DA}, s\rangle \leq h.
\end{align}

On the other hand, from Theorem 1 in \cite{gao:pace:2021}, we have for any market equilibrium $(x^{\ast,\DA}, p^{\ast,\DA})$:
\begin{align*}
    \langle p^{\ast,\DA}, x_i^{\ast,\DA}\rangle = \beta_i^{\ast,\DA}\langle v_i, x_i^{\ast,\DA}\rangle = B_i.
\end{align*}
By the assumption that $\sum_{i\in N}B_i=1$ and market clearance $\langle p^{\ast,\DA}, s - \sum_{i\in N}x_i^{\ast,\DA}\rangle = 0$, we get:
\begin{align*}
    \langle p^{\ast,\DA}, s\rangle = \sum_{i\in N}\langle p^{\ast,\DA}, x_i^{\ast,\DA}\rangle = \sum_{i\in N}B_i = 1.
\end{align*}
Thus,
\begin{align*}
    \langle p^{\ast,\DA}, B_is\rangle = B_i.
\end{align*}
This means that each buyer $i$ can afford the proportional allocation $x_i^{\circ}:=B_is$ under the item price $p^{\ast,\DA}$.
Therefore, from the buyer optimality of the market equilibrium and the assumption that $l\leq \sum_{j\in M}s_jv_{i,j}^{\DA}$, we have:
\begin{align}
    \label{eq:u_ast_lb}
    u_i^{\ast,\DA} \geq \langle v_i^{\DA}, x_i^{\circ}\rangle = B_i\langle v_i^{\DA}, s\rangle \geq B_il.
\end{align}

By combining \eqref{eq:u_ast_ub} and \eqref{eq:u_ast_lb}, we get:
\begin{align*}
    B_il\leq u_i^{\ast,\DA} \leq h.
\end{align*}
Moreover, since $\beta_i^{\ast,\DA}=\frac{B_i}{u_i^{\ast,\DA}}$ from Theorem 1 in \cite{gao:pace:2021}, we have:
\begin{align*}
    \frac{B_i}{h} \leq \beta_i^{\ast,\DA}  \leq \frac{1}{l}.
\end{align*}
\end{proof}

\subsection{Proof of Lemma \ref{lem:psi_strongly_convexity}}
\begin{proof}[Proof of Lemma \ref{lem:psi_strongly_convexity}]
The Hessian matrix of $\Psi$ at $\beta\in \left[\frac{B}{h(1+\delta_0)}, \frac{1+\delta_0}{l}\mathbf{1}\right]$ is given by:
\begin{align*}
\nabla^2 \Psi(\beta) = 
\begin{pmatrix}
\frac{B_1}{\beta_1^2} & &\\
& \ddots & \\
& & \frac{B_n}{\beta_n^2}
\end{pmatrix}.
\end{align*}
Thus, the minimum eigenvalue of $\nabla^2 \Psi(\beta)$ is lower bounded as:
\begin{align*}
    \lambda_{\min}(\nabla^2 \Psi(\beta)) \geq \min_{i\in N}\frac{B_i}{\beta_i^2} \geq \frac{l^2\min_{i\in N}B_i}{(1 + \delta_0)^2}.
\end{align*}
Therefore, $\Psi$ is $\mu$-strongly convex on $[\frac{B}{h(1+\delta_0)}, \frac{1+\delta_0}{l}\mathbf{1}]$ with $\mu=\frac{l^2\min_{i\in N}B_i}{(1+\delta_0)^2}$.
\end{proof}

\subsection{Proof Lemma \ref{lem:convergence_beta}}
\begin{proof}[Proof of Lemma \ref{lem:convergence_beta}]

Let us assume that $l\leq \langle v_i, s\rangle\leq h$ for $i\in N$.
Then, $D_{EG}$ can written as:
\begin{align}
    \label{eq:optimization_problem}
    \mathop{\rm minimize}\limits_{\beta \in \left[\frac{B}{h(1+\delta_0)}, \frac{1+\delta_0}{l}\mathbf{1}\right]} \left\{\phi(\beta) := \mathbb{E}_{j\sim s}\left[\max_{i\in N}v_{i,j}\beta_i\right] -\sum_{i\in N}B_{i} \mathrm{log}\beta_i\right\}.
\end{align}
Defining $f(\beta, j)=\max_{i\in N} v_{i,j}\beta_i$ and $\Psi(\beta)=-\sum_{i\in N}B_i\log \beta_i$, we have $v_{i^{\ast}_j,j}\mathbf{e}^{(i^{\ast}_j)}\in \partial f(\beta, j)$, where $i^{\ast}_j \in \argmax_{i\in N} v_{i,j}\beta_i$.
The update rule of DA is given as:
\begin{align}
    \beta^{t+1} &= \left(\proj_{[\frac{B_i}{h(1+\delta_0)}, \frac{1+\delta_0}{l}]}\left(\frac{B_i}{\frac{1}{t}\sum_{\tau=1}^t v_{i, j(\tau)}\mathbb{I}\{i=i^{\ast}_{j(\tau)}\}}\right)\right)_{i\in N}, \nonumber\\
    &= \argmin_{\beta \in \left[\frac{B}{h(1+\delta_0)}, \frac{1+\delta_0}{l}\mathbf{1}\right]} \left\{\frac{1}{t}\sum_{\tau=1}^t\langle v_{i^{\ast}_{j(\tau)}, j(\tau)}\mathbf{e}^{(i^{\ast}_{j(\tau)})}, \beta\rangle -\sum_{i\in N}B_i\log \beta_i\right\} \nonumber\\
    &= \argmin_{\beta \in \left[\frac{B}{h(1+\delta_0)}, \frac{1+\delta_0}{l}\mathbf{1}\right]} \left\{\frac{1}{t}\sum_{\tau=1}^t\langle g_{\tau}, \beta\rangle + \Psi(\beta)\right\},
    \label{eq:update_beta}
\end{align}
where $g_{\tau}=v_{i^{\ast}_{j(\tau)},j(\tau)}\mathbf{e}^{(i^{\ast}_{j(\tau)})}$.
% Moreover, from Lemma \ref{lem:psi_strongly_convexity}, $\Psi$ is $\mu$-strongly convex on $\left[\frac{B}{h(1+\delta_0)}, \frac{1+\delta_0}{l}\mathbf{1}\right]$ with $\mu=\frac{l^2\min_{i\in N}B_i}{(1+\delta_0)^2}$.
% Therefore, we can apply Corollary 4 in \cite{xiao_dual}, which is the convergence result of dual averaging.

From the first-order optimality condition for \eqref{eq:update_beta}, there exist a subgradient $b_{t+1}\in \partial \Psi(\beta^{t+1})$ such that
\begin{align}
    \label{eq:optimality_condition}
    \langle \sum_{\tau=1}^tg_{\tau} + tb_{t+1}, \beta^{\ast, \DA} - \beta^{t+1}\rangle \geq 0, ~\forall \beta\in \left[\frac{B}{h(1+\delta_0)}, \frac{1+\delta_0}{l}\mathbf{1}\right].
\end{align}
From Lemma \ref{lem:psi_strongly_convexity}, $\Psi$ is $\mu$-strongly convex with $\mu=\frac{l^2\min_{i\in N}B_i}{(1+\delta_0)^2}$, and then we have for any $\beta \in \left[\frac{B}{h(1+\delta_0)}, \frac{1+\delta_0}{l}\mathbf{1}\right]$,
\begin{align}
    \label{eq:strong_convexity}
    \Psi(\beta) \geq \Psi(\beta^{t+1}) + \langle b_{t+1}, \beta - \beta^{t+1}\rangle + \frac{\mu}{2}\|\beta^{t+1} - \beta\|^2.
\end{align}
By combining \eqref{eq:optimality_condition} and \eqref{eq:strong_convexity}, we get:
\begin{align*}
    &\frac{\mu t}{2}\|\beta^{t+1} - \beta^{\ast, \DA}\|^2 \\
    &\leq \langle \sum_{\tau=1}^tg_{\tau}, \beta^{\ast, \DA} - \beta^{t+1}\rangle + t\Psi(\beta^{\ast, \DA}) - t\Psi(\beta^{t+1}) \\
    &= \langle -\sum_{\tau=1}^tg_{\tau}, \beta^{t+1} - \beta^1\rangle - t\Psi(\beta^{t+1}) + t\Psi(\beta^{\ast, \DA}) + \langle \sum_{\tau=1}^tg_{\tau}, \beta^{\ast, \DA} - \beta^1\rangle \\
    &= \langle -\sum_{\tau=1}^tg_{\tau}, \beta^{t+1} - \beta^1\rangle - t\Psi(\beta^{t+1}) + t\Psi(\beta^{\ast, \DA}) + \sum_{\tau=1}^t\langle g_{\tau}, \beta^{\tau} - \beta^1\rangle + \sum_{\tau=1}^t\langle g_{\tau}, \beta^{\ast, \DA} - \beta^{\tau}\rangle \\
    &= \langle -\sum_{\tau=1}^tg_{\tau}, \beta^{t+1} - \beta^1\rangle - t\Psi(\beta^{t+1}) + \sum_{\tau=1}^t\left(\langle g_{\tau}, \beta^{\tau} - \beta^1\rangle + \Psi(\beta^{\tau})\right) + \sum_{\tau=1}^t\langle g_{\tau}, \beta^{\ast, \DA} - \beta^{\tau}\rangle + t\Psi(\beta^{\ast, \DA}) - \sum_{\tau=1}^t\Psi(\beta^{\tau}) \\
    &\leq \langle -\sum_{\tau=1}^tg_{\tau}, \beta^{t+1} - \beta^1\rangle - t\Psi(\beta^{t+1}) + \sum_{\tau=1}^t\left(\langle g_{\tau}, \beta^{\tau} - \beta^1\rangle + \Psi(\beta^{\tau})\right) \\
    &+ \sum_{\tau=1}^t\left(f(\beta^{\ast, \DA}, j(\tau)) - f(\beta^{\tau}, j(\tau))\right) + t\Psi(\beta^{\ast, \DA}) - \sum_{\tau=1}^t\Psi(\beta^{\tau}) \\
    &= \langle -\sum_{\tau=1}^tg_{\tau}, \beta^{t+1} - \beta^1\rangle - t\Psi(\beta^{t+1}) + \sum_{\tau=1}^t\left(\langle g_{\tau}, \beta^{\tau} - \beta^1\rangle + \Psi(\beta^{\tau})\right) \\
    &+ \sum_{\tau=1}^t\left(f(\beta^{\ast, \DA}, j(\tau)) + \Psi(\beta^{\ast, \DA})\right) - \sum_{\tau=1}^t\left(f(\beta^{\tau}, j(\tau)) + \Psi(\beta^{\tau})\right),
\end{align*}
where the second inequality follows from the convexity of $f(\cdot, j(\tau))$.
Here, let us define
\begin{align*}
    V_0(s) = \max_{\beta\in \left[\frac{B}{h(1+\delta_0)}, \frac{1+\delta_0}{l}\mathbf{1}\right]}\{\langle s, \beta - \beta^1\rangle - \Psi(\beta)\},
\end{align*}
and for $t\geq 1$,
\begin{align*}
    V_t(s) = \max_{\beta\in \left[\frac{B}{h(1+\delta_0)}, \frac{1+\delta_0}{l}\mathbf{1}\right]}\{\langle s, \beta - \beta^1\rangle - t\Psi(\beta)\}.
\end{align*}
Since $\beta^{t+1}=\argmax_{\beta\in \left[\frac{B}{h(1+\delta_0)}, \frac{1+\delta_0}{l}\mathbf{1}\right]}\{\langle -\sum_{\tau=1}^tg_{\tau}, \beta - \beta^1\rangle - t\Psi(\beta)\}$,
\begin{align}
    \label{eq:distance_ineq}
    &\frac{\mu t}{2}\|\beta^{t+1} - \beta^{\ast, \DA}\|^2 \nonumber\\
    &\leq V_t(-\sum_{\tau=1}^tg_{\tau}) + \sum_{\tau=1}^t\left(\langle g_{\tau}, \beta^{\tau} - \beta^1\rangle + \Psi(\beta^{\tau})\right) + \sum_{\tau=1}^t\left(f(\beta^{\ast, \DA}, j(\tau)) + \Psi(\beta^{\ast, \DA})\right) - \sum_{\tau=1}^t\left(f(\beta^{\tau}, j(\tau)) + \Psi(\beta^{\tau})\right).
\end{align}

For $\tau \geq 2$, we have:
\begin{align*}
    V_{\tau}(-\sum_{s=1}^{\tau}g_s) - V_{\tau - 1}(-\sum_{s=1}^{\tau}g_s) &\leq \langle -\sum_{s=1}^{\tau}g_s, \beta^{\tau+1} - \beta^1\rangle - \tau \Psi(\beta^{\tau+1}) - \langle -\sum_{s=1}^{\tau}g_s, \beta^{\tau+1} - \beta^1\rangle + (\tau-1) \Psi(\beta^{\tau+1}) \\
    &= -\Psi(\beta^{\tau+1}).
\end{align*}
Therefore,
\begin{align}
    \label{eq:v_sub}
    V_{\tau}(-\sum_{s=1}^{\tau}g_s) + \Psi(\beta^{\tau+1}) &\leq V_{\tau - 1}(-\sum_{s=1}^{\tau}g_s) = V_{\tau - 1}(-\sum_{s=1}^{\tau-1}g_s - g_{\tau}).
\end{align}
Here, since $t\Psi(\cdot)$ is $\mu t$-strongly convex, the convex conjugate of $t\Psi(\cdot)$ is $\frac{1}{\mu t}$-smooth.
Thus,
\begin{align}
    \label{eq:v_smooth}
    V_{\tau - 1}(-\sum_{s=1}^{\tau-1}g_s - g_{\tau}) \leq V_{\tau - 1}(-\sum_{s=1}^{\tau-1}g_s) - \langle \nabla V_{\tau - 1}(-\sum_{s=1}^{\tau-1}g_s), g_{\tau}\rangle + \frac{\|g_{\tau}\|^2}{2\mu(\tau-1)}.
\end{align}
On the other hand, the gradient of $V_{\tau}$ is given as:
\begin{align}
    \label{eq:nabla_v}
    \nabla V_{\tau-1}(-\sum_{s=1}^{\tau-1}g_s) &= \nabla \max_{\beta\in \left[\frac{B}{h(1+\delta_0)}, \frac{1+\delta_0}{l}\mathbf{1}\right]}\left\{\langle-\sum_{s=1}^{\tau-1}g_s, \beta \rangle - (\tau - 1)\Psi(\beta)\right\} - \beta^1 \nonumber\\
    &= \beta^{\tau} - \beta^1.
\end{align}
By combining \eqref{eq:v_sub}, \eqref{eq:v_smooth}, and \eqref{eq:nabla_v}, we have for $\tau\geq 2$:
\begin{align}
    \label{eq:g_tau}
    \langle \beta^{\tau} - \beta^1, g_{\tau}\rangle + \Psi(\beta^{\tau+1}) &\leq V_{\tau - 1}(-\sum_{s=1}^{\tau-1}g_s) - V_{\tau}(-\sum_{s=1}^{\tau}g_s) + \frac{\|g_{\tau}\|^2}{2\mu(\tau-1)}.
\end{align}
Similarly, for $\tau=1$, we have:
\begin{align*}
    V_1(-g_1) = V_0(-g_1) \leq V_0(0) - \langle \nabla V_0(0), g_1\rangle + \frac{\|g_1\|^2}{2\mu},
\end{align*}
and
\begin{align*}
    \nabla V_{0}(0) &= \nabla \max_{\beta\in \left[\frac{B}{h(1+\delta_0)}, \frac{1+\delta_0}{l}\mathbf{1}\right]}\left\{\langle 0, \beta \rangle - \Psi(\beta)\right\} - \beta^1 \\
    &= \nabla \max_{\beta\in \left[\frac{B}{h(1+\delta_0)}, \frac{1+\delta_0}{l}\mathbf{1}\right]}\left\{- \Psi(\beta)\right\} - \beta^1 \\
    &= \beta^1 - \beta^1 = 0.
\end{align*}
Thus,
\begin{align}
    \label{eq:g_1}
    \langle \beta^1 - \beta^1, g_1\rangle + \Psi(\beta^2) \leq V_0(0) - V_1(-g_1) + \frac{\|g_1\|^2}{2\mu} + \Psi(\beta^2).
\end{align}
By summing \eqref{eq:g_tau} and \eqref{eq:g_1} for $\tau=1,\cdots, t$,
\begin{align*}
    \sum_{\tau=1}^t\left(\langle \beta^{\tau} - \beta^1, g_{\tau}\rangle + \Psi(\beta^{\tau+1})\right) &\leq V_0(0) - V_t(-\sum_{\tau=1}^t g_{\tau}) + \Psi(\beta^2) + \frac{1}{2\mu}\left(\|g_1\|^2 + \sum_{\tau=2}^t \frac{\|g_{\tau}\|^2}{\tau - 1}\right) \\
    &= - V_t(-\sum_{\tau=1}^t g_{\tau}) + \Psi(\beta^2) - \Psi(\beta^1) + \frac{1}{2\mu}\left(\|g_1\|^2 + \sum_{\tau=2}^t \frac{\|g_{\tau}\|^2}{\tau - 1}\right).
\end{align*}
Since $\beta^1 = \argmin_{\beta\in \left[\frac{B}{h(1+\delta_0)}, \frac{1+\delta_0}{l}\mathbf{1}\right]}\Psi(\beta)$, $\Psi(\beta^{t+1})\geq \Psi(\beta^1)$ holds.
Thus, adding the inequality $\Psi(\beta^1) - \Psi(\beta^{t+1}) \leq 0$ to the above inequality, we get:
\begin{align}
    \label{eq:g_tau_sum}    
    \sum_{\tau=1}^t\left(\langle \beta^{\tau} - \beta^1, g_{\tau}\rangle + \Psi(\beta^{\tau})\right) &= \sum_{\tau=1}^t\left(\langle \beta^{\tau} - \beta^1, g_{\tau}\rangle + \Psi(\beta^{\tau+1})\right) + \Psi(\beta^1) - \Psi(\beta^{t+1}) \nonumber\\
    &\leq - V_t(-\sum_{\tau=1}^t g_{\tau}) + \Psi(\beta^2) - \Psi(\beta^1) + \frac{1}{2\mu}\left(\|g_1\|^2 + \sum_{\tau=2}^t \frac{\|g_{\tau}\|^2}{\tau - 1}\right).
\end{align}

Adding \eqref{eq:distance_ineq} and \eqref{eq:g_tau_sum} yields:
\begin{align}
    \label{eq:distance_ineq2} 
    &\frac{\mu t}{2}\|\beta^{t+1} - \beta^{\ast, \DA}\|^2 \nonumber\\
    &\leq \Psi(\beta^2) - \Psi(\beta^1) + \frac{1}{2\mu}\left(\|g_1\|^2 + \sum_{\tau=2}^t \frac{\|g_{\tau}\|^2}{\tau - 1}\right) + \sum_{\tau=1}^t\left(f(\beta^{\ast, \DA}, j(\tau)) + \Psi(\beta^{\ast, \DA})\right) - \sum_{\tau=1}^t\left(f(\beta^{\tau}, j(\tau)) + \Psi(\beta^{\tau})\right).
\end{align}

Since $\beta^2 = \argmin_{\beta\in \left[\frac{B}{h(1+\delta_0)}, \frac{1+\delta_0}{l}\mathbf{1}\right]}\left\{\langle g_1, \beta\rangle + \Psi(\beta)\right\}$, we have:
\begin{align*}
    \langle g_1, \beta^2\rangle + \Psi(\beta^2) \leq \langle g_1, \beta^1\rangle + \Psi(\beta^1).
\end{align*}
Thus, from the Cauchy–Schwarz inequality, 
\begin{align*}
    \Psi(\beta^2) - \Psi(\beta^1) \leq \langle g_1, \beta^1 - \beta^2\rangle \leq \|g_1\|\|\beta^1 - \beta^2\|.
\end{align*}
On the other hand, by strong convexity of $\Psi$ and the first-order optimality condition for $\beta^1$, we have:
\begin{align*}
    \Psi(\beta^2) \geq \Psi(\beta^1) + \frac{\mu}{2}\|\beta^1 - \beta^2\|^2.
\end{align*}
By combining the above inequalities, we have:
\begin{align*}
    \frac{\mu}{2}\|\beta^1 - \beta^2\|^2 \leq \|g_1\|\|\beta^1 - \beta^2\|,
\end{align*}
and then:
\begin{align*}
    \|\beta^1 - \beta^2\| \leq \frac{2}{\mu}\|g_1\|.
\end{align*}
Therefore, we have:
\begin{align}
    \label{eq:psi_diff}
    \Psi(\beta^2) - \Psi(\beta^1) \leq \frac{2\|g_1\|^2}{\mu}.
\end{align}
By combining \eqref{eq:distance_ineq2} and \eqref{eq:psi_diff}, we get:
\begin{align}
    \label{eq:distance_ineq3}
    \|\beta^{t+1} - \beta^{*,\DA}\|^2 &\leq \frac{2}{\mu t}\left(\frac{2\|g_1\|^2}{\mu} + \frac{1}{2\mu}\left(\|g_1\|^2 + \sum_{\tau=2}^t \frac{\|g_{\tau}\|^2}{\tau - 1}\right)\right) \nonumber\\
    &+ \frac{2}{\mu t}\left(\sum_{\tau=1}^t\left(f(\beta^{*,\DA}, j(\tau)) + \Psi(\beta^{*,\DA})\right) - \sum_{\tau=1}^t\left(f(\beta^{\tau}, j(\tau)) + \Psi(\beta^{\tau})\right)\right).
\end{align}

Since the random variables $(j(1), \cdots, j(t))$ is i.i.d, the expectation of $f(\beta^{\tau}, j(\tau)) + \Psi(\beta^{\tau})$ is given as:
\begin{align*}
    \mathbb{E}_{(j(s))_{s=1}^t}\left[f(\beta^{\tau}, j(\tau)) + \Psi(\beta^{\tau})\right] &= \mathbb{E}_{(j(s))_{s=1}^{\tau-1}}\left[\mathbb{E}_{(j(s))_{s={\tau}}^t}\left[f(\beta^{\tau}, j(\tau)) + \Psi(\beta^{\tau}) ~|~ (j(s))_{s=1}^{\tau-1}\right]\right] \\
    &= \mathbb{E}_{(j(s))_{s=1}^{\tau-1}}\left[\mathbb{E}_{j(\tau)}\left[f(\beta^{\tau}, j(\tau)) \right] + \Psi(\beta^{\tau})\right] \\
    &= \mathbb{E}_{(j(s))_{s=1}^{\tau-1}}[\phi(\beta^{\tau})].
\end{align*}
Similarly, the expectation of $f(\beta^{*,\DA}, j(\tau)) + \Psi(\beta^{*,\DA})$ is given by:
\begin{align*}
    \mathbb{E}_{(j(s))_{s=1}^t}\left[f(\beta^{*,\DA}, j(\tau)) + \Psi(\beta^{*,\DA})\right] = \mathbb{E}_{j(\tau)}\left[f(\beta^{*,\DA}, j(\tau)) + \Psi(\beta^{*,\DA})\right] = \phi(\beta^{*,\DA}).
\end{align*}
Thus, the expected regret is upper bounded as:
\begin{align}
    \label{eq:expected_regret_ub}
    \mathbb{E}_{(j(s))_{s=1}^t}\left[\sum_{\tau=1}^t \left(f(\beta^{*,\DA}, j(\tau)) + \Psi(\beta^{*,\DA})\right) - \sum_{\tau=1}^t f(\beta^{\tau}, j(\tau)) + \Psi(\beta^{\tau})\right] &= \sum_{\tau=1}^t\left(\phi(\beta^{*,\DA}) - \mathbb{E}_{(j(s))_{s=1}^{\tau-1}}[\phi(\beta^{\tau})]\right) \leq 0.
\end{align}
From \eqref{eq:expected_regret_ub}, the expectation of \eqref{eq:distance_ineq3} is given by:
\begin{align*}
    &\mathbb{E}_{(j(s))_{s=1}^t}\left[\|\beta^{t+1} - \beta^{*,\DA}\|^2\right] \\
    &\leq \frac{2}{\mu t}\mathbb{E}_{(j(s))_{s=1}^t}\left[\frac{2\|g_1\|^2}{\mu} + \frac{1}{2\mu}\left(\|g_1\|^2 + \sum_{\tau=2}^t \frac{\|g_{\tau}\|^2}{\tau - 1}\right)\right] \\
    &= \frac{2}{\mu t}\left(\frac{2\mathbb{E}_{j(1)}[\|v_{i^{\ast}_{j(1)},j(1)}^{\DA}\mathbf{e}^{(i^{\ast}_{j(1)})}\|^2]}{\mu} + \frac{1}{2\mu}\left(\mathbb{E}_{j(1)}[\|v_{i^{\ast}_{j(1)},j(1)}^{\DA}\mathbf{e}^{(i^{\ast}_{j(1)})}\|^2] + \sum_{\tau=2}^t \frac{\mathbb{E}_{(j(s))_{s=1}^{\tau}}[\|v_{i^{\ast}_{j(\tau)},j(\tau)}^{\DA}\mathbf{e}^{(i^{\ast}_{j(\tau)})}\|^2]}{\tau - 1}\right)\right) \\
    &= \frac{2}{\mu t}\left(\frac{2\max_{i\in N}\mathbb{E}_{j\sim s}[(v_{i,j}^{\DA})^2]}{\mu} + \frac{1}{2\mu}\left(\max_{i\in N}\mathbb{E}_{j\sim s}[(v_{i,j}^{\DA})^2] + \sum_{\tau=2}^t \frac{\max_{i\in N}\mathbb{E}_{j\sim s}[(v_{i,j}^{\DA})^2]}{\tau - 1}\right)\right) \\
    &\leq \frac{G^2}{\mu^2 t}\left(5 + \sum_{\tau=2}^t\frac{1}{\tau-1}\right) = \frac{G^2}{\mu^2 t}\left(5 + \sum_{\tau=1}^{t-1}\frac{1}{\tau}\right) \leq \frac{G^2}{\mu^2 t}(6 + \log t).
\end{align*}
\end{proof}

\section{More details on the experiments}
In this section, we provide the result of Household ($n$=50) in Figure \ref{fig:regret-n50} and the l2-loss of utilities $||\bar{u}_i - u_i^*||$ in Table \ref{tab:u_loss}.

%Let us finally mention the case with more agents and the agents' utilities. 
Figure~\ref{fig:regret-n50} depicts regrets with $n=50$ for the Household dataset. Inevitably, the same tendency retains as in Figures~\ref{fig:all-wide} and \ref{fig:all-narrow}. 
Table~\ref{tab:u_loss} illustrates the l2-loss of utilities $||\bar{u}_i - u_i^*||$. This metric measures the disparity between the true and the resulting utilities. DA-EtC or DA-UCB outperformed the others. The inconsistency of DA-Grdy was more apparent than the regret indicated in the figures. This observation implies that the estimated values in DA-Grdy are more inaccurate than those of DA-EtC and DA-UCB.

\begin{figure}[H]
\begin{minipage}[t]{0.48\linewidth}
    \centering
    \includegraphics[keepaspectratio,width=\linewidth]{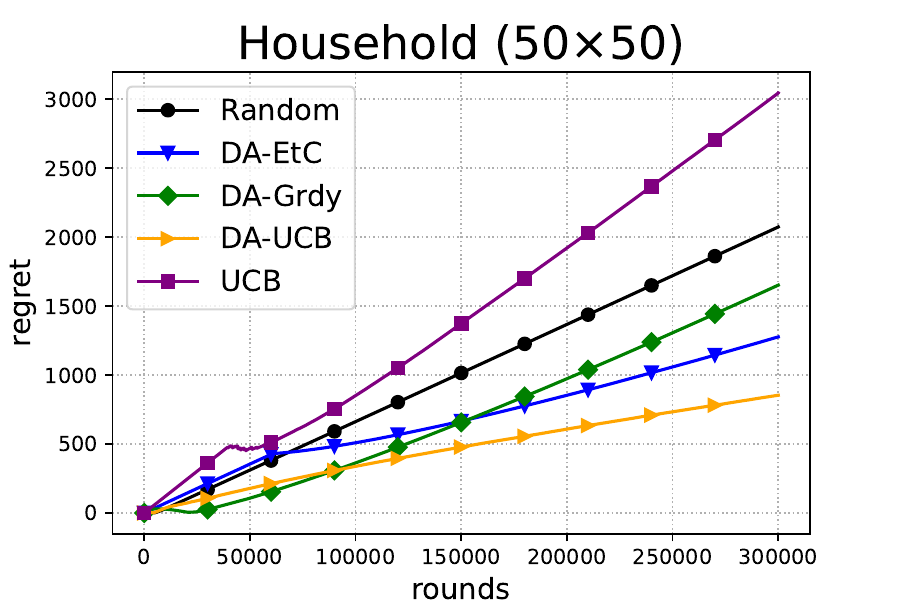}
    \subcaption{All algorithms}\label{test1}
\end{minipage}
\begin{minipage}[t]{0.48\linewidth}
    \centering
    \includegraphics[keepaspectratio,width=\linewidth]{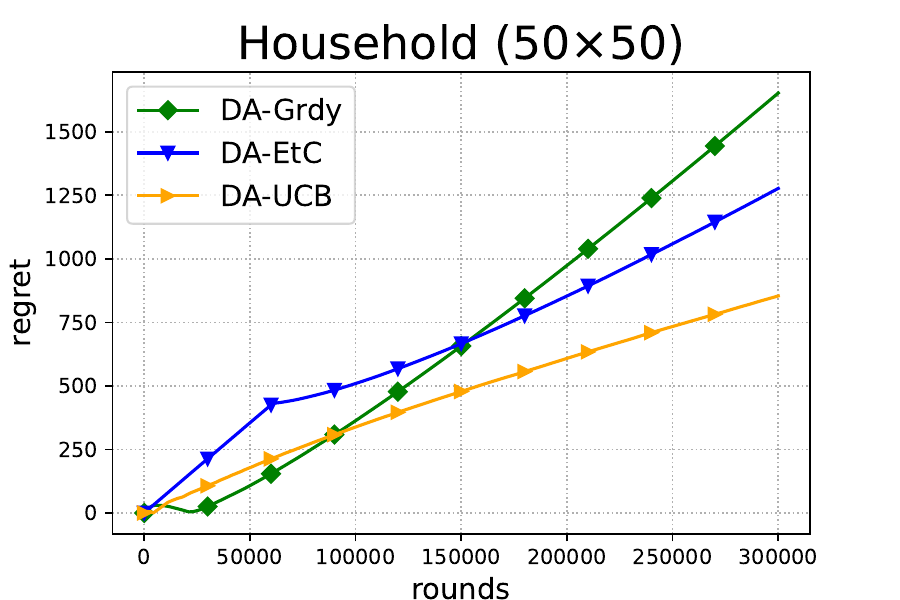}
    \subcaption{Three best algorithms}\label{test2}
\end{minipage}
\caption{Regret in Household with a larger number of agents ($n=50$).
}\label{fig:regret-n50}
\end{figure}

\begin{table}[H]
%\begin{minipage}[bt]{\linewidth}
\centering
\caption{Loss of utilities per round. Type boldface indicates the minimum loss at the dataset.}\label{tab:u_loss}
%\renewcommand{\arraystretch}{1.1} 
%\resizebox{\columnwidth}{!}{%
\begin{tabular}{l|ccccc} 
 & Uniform & Jester & Household & Household %& first appearance
\\
 &  &  & ($n=10$) & ($n=50$)%& first appearance
\\ \hline
Random & 0.041 & 0.031 & 0.033 & 0.006\\
UCB & 0.073 & 0.073 & 0.070 & 0.019\\
DA-Grdy & 0.010 & 0.008 & 0.015 & 0.006\\
DA-EtC & 0.004 & \textbf{0.007} & 0.005 & 0.004\\
DA-UCB & \textbf{0.002} & 0.008 & \textbf{0.004} & \textbf{0.003}\\
\end{tabular}
%}
%\end{minipage}
\end{table}

\end{CJK}
\end{document}